\documentclass{article} 
\usepackage{times}


\usepackage{amsmath,amsfonts,bm}









\def\eqref#1{equation~\ref{#1}}









\def\1{\bm{1}}










\DeclareMathAlphabet{\mathsfit}{\encodingdefault}{\sfdefault}{m}{sl}
\SetMathAlphabet{\mathsfit}{bold}{\encodingdefault}{\sfdefault}{bx}{n}











\newcommand{\KL}{D_{\mathrm{KL}}}



\DeclareMathOperator*{\argmax}{arg\,max}

\usepackage[preprint]{arxiv}

\usepackage[citecolor=blue,colorlinks=true]{hyperref}       
\usepackage{url}
\usepackage{booktabs}       
\usepackage{nicefrac}       
\usepackage{amsmath,amsfonts,amssymb,multirow,amsthm}
\usepackage{xspace}
\usepackage{bm}
\usepackage{xcolor}
\usepackage{enumitem}
\usepackage{graphicx}
\usepackage{subcaption}

\theoremstyle{definition}

\makeatletter
\newcommand{\newreptheorem}[2]{\newtheorem*{rep@#1}{\rep@title}\newenvironment{rep#1}[1]{\def\rep@title{#2 \ref*{##1}}\begin{rep@#1}}{\end{rep@#1}}}
\makeatother

\theoremstyle{plain}
\newtheorem{theorem}{Theorem}[section]

\newtheorem{lemma}[theorem]{Lemma}
\newreptheorem{theorem}{Theorem}
\newreptheorem{lemma}{Lemma}

\theoremstyle{remark}
\newtheorem*{remark}{Remark}

\newcommand{\simiid}{\overset{iid}{\sim}}
\newcommand{\RNum}[1]{\uppercase\expandafter{\romannumeral #1\relax}}

\makeatletter
\DeclareRobustCommand\onedot{\futurelet\@let@token\@onedot}
\def\@onedot{\ifx\@let@token.\else.\null\fi\xspace}

\def\eg{\emph{e.g}\onedot} 
\def\ie{\emph{i.e}\onedot} 
\def\cf{\emph{c.f}\onedot} 
\def\st{\emph{s.t}\onedot} 
\def\aka{a.k.a\onedot}

\def\wrt{w.r.t\onedot}

\makeatother

\title{A PAC-Bayesian Approach to Generalization Bounds for Graph Neural Networks}


\author{Renjie Liao$^{1,2,3}$, Raquel Urtasun$^{1,2,3,4}$, Richard  Zemel$^{1,3,4}$ \\
University of Toronto$^1$, Uber ATG$^2$, Vector Institute$^3$, Canadian Institute for Advanced Research$^4$ \\
\texttt{\{rjliao, urtasun, zemel\}@cs.toronto.edu}
}

%

\begin{document}

\maketitle

\begin{abstract}
In this paper, we derive generalization bounds for the two primary classes of graph neural networks (GNNs), namely graph convolutional networks (GCNs) and message passing GNNs (MPGNNs), via a PAC-Bayesian approach. 
Our result reveals that the maximum node degree and spectral norm of the weights govern the generalization bounds of both models.
We also show that our bound for GCNs is a natural generalization of the results developed in \citep{neyshabur2017pac} for fully-connected and convolutional neural networks.
For message passing GNNs, our PAC-Bayes bound improves over the 
Rademacher complexity based bound in \citep{garg2020generalization}, 
showing a tighter dependency on the maximum node degree and the maximum hidden dimension. 
The key ingredients of our proofs are a perturbation analysis of GNNs and the generalization of PAC-Bayes analysis to non-homogeneous GNNs. 
We perform an empirical study on several real-world graph datasets and verify that our PAC-Bayes bound is tighter than others. 
\end{abstract}

\section{Introduction}

Graph neural networks (GNNs)
\citep{gori2005new,scarselli2008graph,bronstein2017geometric,battaglia2018relational} have become very popular
recently due to their ability to learn powerful representations from graph-structured data, and have achieved state-of-the-art results in a variety of application domains such as social networks \citep{hamilton2017inductive}, quantum chemistry \citep{gilmer2017neural}, computer vision \citep{monti2017geometric}, reinforcement learning \citep{sanchez2018graph}, robotics \citep{casas2019spatially}, and physics \citep{henrion2017neural}. 
Given a graph along with node/edge features, GNNs learn  node/edge representations by propagating information on the graph via local computations shared across the nodes/edges. 
Based on the specific form of local computation employed, GNNs can be divided into two categories: graph convolution based GNNs \citep{bruna2013spectral,duvenaud2015convolutional,kipf2016semi} and message passing based GNNs \citep{li2015gated,dai2016discriminative,gilmer2017neural}.
The former generalizes the convolution operator from regular graphs (\eg, grids) to ones with arbitrary topology, whereas the latter mimics message passing algorithms and parameterizes the shared functions via neural networks. 


Due to the tremendous empirical success of GNNs, there is increasing interest in understanding their theoretical properties.
For example, some recent works study their expressiveness \citep{maron2018invariant,xu2018powerful,chen2019equivalence}, that is,  what class of functions can be represented by GNNs. 
However, only few works investigate why GNNs generalize so well to unseen graphs.
They are either
restricted to a specific model variant \citep{verma2019stability,du2019graph,garg2020generalization} or have loose dependencies on graph statistics \citep{scarselli2018vapnik}. 


On the other hand, GNNs have close ties to standard feedforward neural networks, \eg, multi-layer perceptrons (MLPs) and convolutional neural networks (CNNs).
In particular, if each i.i.d. sample is viewed as a node, then the whole dataset becomes a graph without edges. 
Therefore, GNNs can be seen as generalizations of MLPs/CNNs since they model not only the regularities within a
sample but also the dependencies among samples as defined in the graph. 
It is therefore natural to ask if we can generalize the recent advancements on generalization bounds for MLPs/CNNs \citep{harvey2017nearly,neyshabur2017pac,bartlett2017spectrally,dziugaite2017computing,arora2018stronger,arora2019fine} to GNNs, and how would graph structures affect the generalization bounds? 



In this paper, we answer the above questions by proving generalization bounds for the two primary 
classes of GNNs, \ie, graph convolutional networks (GCNs) \citep{kipf2016semi} and message-passing GNNs  (MPGNNs) 
\citep{dai2016discriminative,jin2018learning}. 



Our generalization bound for GCNs shows an intimate relationship with the bounds for MLPs/CNNs with ReLU activations \citep{neyshabur2017pac,bartlett2017spectrally}.
In particular, they share the same term, \ie, the product of the spectral norms of the learned weights at each layer multiplied by a factor that is additive across layers.
The bound for GCNs has an additional multiplicative factor $d^{{(l-1)}/{2}}$ where $d - 1$ is the maximum node degree and $l$ is the network depth.
Since MLPs/CNNs are special GNNs operating on graphs without edges (\ie, $d - 1 = 0$), the bound for GCNs coincides with the ones for MLPs/CNNs with ReLU activations \citep{neyshabur2017pac} on such degenerated graphs.
Therefore, our result is a natural generalization of the existing results for MLPs/CNNs.

Our generalization bound for message passing GNNs reveals that the governing terms of the bound are similar to the ones of GCNs, \ie, the geometric series of the learned weights and the multiplicative factor $d^{l-1}$.
The geometric series appears due to the weight sharing across message passing steps, thus corresponding to the product term across layers in GCNs.
The term $d^{l-1}$ encodes the key graph statistics. 
Our bound improves the dependency on the maximum node degree and the maximum hidden dimension compared to the recent Rademacher complexity based bound \citep{garg2020generalization}.
Moreover, we compute the bound values on four real-world graph datasets (\eg, social networks and protein structures) and verify that our bounds are tighter.

In terms of the proof techniques, our analysis follows the PAC-Bayes framework in the seminal work of \citep{neyshabur2017pac} for MLPs/CNNs with ReLU activations.
However, we make two distinctive contributions which are customized for GNNs.
First, a naive adaptation of the perturbation analysis in \citep{neyshabur2017pac} does not work for GNNs since ReLU is not 1-Lipschitz under the spectral norm, \ie, $\Vert \text{ReLU}(X) \Vert_2 \le \Vert X \Vert_2$ does not hold for some real matrix $X$.
Instead, we construct the recursion on certain node representations of GNNs like the one with maximum $\ell_2$ norm, so that we can perform perturbation analysis with vector 2-norm.
Second, in contrast to \citep{neyshabur2017pac} which only handles the homogeneous networks, \ie, $f(ax) = a f(x)$ when $a \ge 0$, we properly construct a quantity of the learned weights which 1) provides a way to satisfy the constraints of the previous perturbation analysis and 2) induces a finite covering on the range of the quantity so that the PAC-Bayes bound holds for all possible weights.
This generalizes the analysis to non-homogeneous GNNs like typical MPGNNs.

The rest of the paper is organized as follows.
In Section \ref{sect:background}, we introduce background material necessary for our analysis.
We then present our generalization bounds and the comparison to existing results in Section \ref{sect:main_results}.
We also provide an empirical study to support our theoretical arguments in Section \ref{sect:exp}.
At last, we discuss the extensions, limitations and some open problems.

\vspace{-0.1cm}
\section{Background}\label{sect:background}
\vspace{-0.1cm}

In this section, we first explain our analysis setup including notation and assumptions.
We then describe the two representative GNN models in detail.
Finally, we review the PAC-Bayes analysis.

\subsection{Analysis Setup}

In the following analysis, we consider the $K$-class graph classification problem which is common in the GNN literature, where given a graph sample $z$, we would like to classify it into one of the predefined $K$ classes.
We will discuss extensions to other problems like graph regression in Section \ref{sect:discussion}.
Each graph sample $z$ is a triplet of an adjacency matrix $A$, node features $X \in \mathbb{R}^{n \times h_0}$ and output label $y \in \mathbb{R}^{1 \times K}$, \ie $z = (A, X, y)$, where $n$ is the number of nodes and $h_0$ is the input feature dimension.
We start our discussion by defining our notations. 
Let $\mathbb{N}_{k}^{+}$ be the first $k$ positive integers, \ie, $\mathbb{N}_{k}^{+} = \{1, 2, \dots, k\}$, 
$\vert \cdot \vert_p$  the vector $p$-norm
and $\Vert \cdot \Vert_p$  the operator norm induced by the vector $p$-norm.
Further, $\Vert \cdot \Vert_F$ denotes the Frobenius norm of a matrix, 
$e$  the base of the natural logarithm function $\log$,  
$A[i, j]$  the $(i, j)$-th element of matrix $A$ and $A[i, :]$  the $i$-th row.
We use parenthesis to avoid the ambiguity, \eg, $(AB)[i, j]$ means the $(i, j)$-th element of the product matrix $AB$.
We then introduce some terminologies from statistical learning theory and define the sample space as $\mathcal{Z}$, $z = (A, X, y) \in \mathcal{Z}$ where $X \in \mathcal{X}$ (node feature space) and $A \in \mathcal{G}$ (graph space), data distribution $\mathcal{D}$, $z \simiid \mathcal{D}$, hypothesis (or model) $f_w$ where $f_w \in \mathcal{H}$ (hypothesis class), and training set $S$ with size $m$, $S = \{z_1, \dots, z_m\}$.
We make the following assumptions which also appear in the literature: 
\begin{enumerate}[label=A\arabic*]
    \item Data, \ie, triplets $(A, X, y)$, are i.i.d. samples drawn from some unknown distribution $\mathcal{D}$. \label{asm:iid_data}
    \item The maximum hidden dimension across all layers is $h$. \label{asm:max_node_feat_dim}
    \item Node feature of any graph is contained in a $\ell_2$-ball with radius $B$. Specifically, we have $\forall i \in \mathbb{N}^{+}_{n}$, the $i$-th node feature $X[i, :] \in \mathcal{X}_{B,h_0} = \{ x \in \mathbb{R}^{h_0} \vert \sum_{j=1}^{h_0} x_j^2 \le B^2 \}$. \label{asm:bound_input}
    \item We only consider simple graphs (\ie, undirected, no loops\footnote{Here loop means an edge that connects a vertex to itself, \aka, self-loop.}, and no multi-edges) with maximum node degree as $d-1$. 
    \label{asm:max_node_degree}
\end{enumerate}
Note that it is straightforward to estimate $B$ and $d$ empirically on real-world graph data.

\subsection{Graph Neural Networks (GNNs)}

In this part, we describe the details of the GNN models and the loss function we used for the graph classification problem. 
The essential idea of GNNs is to propagate information over the graph so that the learned representations capture the dependencies among nodes/edges.
We now review two classes of GNNs, GCNs and MPGNNs, which have different mechanisms for propagating information.
We choose them since they are the most popular variants and represent two common types of neural networks, \ie, feedforward (GCNs) and recurrent (MPGNNs) neural networks. 
We discuss the extension of our analysis to other GNN variants in Section \ref{sect:discussion}.
For ease of notation, we define the model to be $f_w \in \mathcal{H}: \mathcal{X} \times \mathcal{G} \rightarrow \mathbb{R}^{K}$ where $w$ is the vectorization of all model  parameters. 


\paragraph{GCNs:}
Graph convolutional networks (GCNs) \citep{kipf2016semi} for the $K$-class graph classification problem can be  defined as follows,
\begin{align}
    H_{k} & = \sigma_{k} \left( \tilde{L} H_{k-1} W_{k} \right) \quad && (k \text{-th Graph Convolution Layer}) \nonumber \\
    H_{l} & = \frac{1}{n} \bm{1}_{n} H_{l-1} W_{l} \quad && (\text{Readout Layer}), \label{eq:gcn}
\end{align}
where $k \in \mathbb{N}^{+}_{l-1}$, $H_{k} \in \mathbb{R}^{n \times h_{k}}$ are the node representations/states, $\bm{1}_{n} \in \mathbb{R}^{1 \times n}$ is a all-one vector,  
$l$ is the number of layers.\footnote{We count the readout function as a layer to be consistent with the existing analysis of MLPs/CNNs.}
and $W_{j}$ is the weight matrix of the $j$-th layer.
The initial node state is the observed node feature $H_0 = X$.
For both GCNs and MPGNNs, we consider $l > 1$ since otherwise the model degenerates to a linear transformation which does not leverage the graph and is trivial to analyze. 
Due to assumption \ref{asm:max_node_feat_dim}, $W_{j}$ is of size at most $h \times h$, \ie, $h_k \le h$, $\forall k \in \mathbb{N}^{+}_{l-1}$.
The graph Laplacian $\tilde{L}$ is  defined as, $\tilde{A} = I + A$, $\tilde{L} = D^{-\frac{1}{2}} \tilde{A} D^{-\frac{1}{2}}$ where $D$ is the degree matrix of $\tilde{A}$.
Note that the maximum eigenvalue of $\tilde{L}$ is $1$ in this case.
We absorb the bias into the weight by appending constant $1$ to the node feature.
Typically, GCNs use ReLU as the non-linearity, \ie, 
$\sigma_i(x) = \max(0, x), \forall i=1, \cdots, l-1$. 
We use the common mean-readout to obtain the graph representation where $H_{l-1} \in \mathbb{R}^{n \times h_{l-1}}$, $W_l \in \mathbb{R}^{h_{l-1} \times K}$, and $H_l \in \mathbb{R}^{1 \times K}$. 

\paragraph{MPGNNs:}


There are multiple variants of message passing GNNs, \eg, \citep{li2015gated,dai2016discriminative,gilmer2017neural}, which share the same algorithmic framework but instantiate a few components differently, \eg, the node state update function.
We choose the same class of models as in \citep{garg2020generalization} which are popular in the literature \citep{dai2016discriminative,jin2018learning} in order to fairly compare bounds.
This MPGNN model can be written in matrix forms as follows,
\begin{align}
    M_{k} & = g( C_{\operatorname{out}}^{\top} H_{k-1} ) \quad && (k \text{-th step Message Computation}) \nonumber \\
    \bar{M}_{k} & = C_{\operatorname{in}} M_{k} \quad && (k \text{-th step Message Aggregation}) \nonumber \\
    H_{k} & = \phi\left( X W_1 + \rho \left( \bar{M}_{k} \right) W_2 \right) \quad && (k \text{-th step Node State Update}) \nonumber \\
    H_{l} & = \frac{1}{n} \bm{1}_{n} H_{l-1} W_{l} \quad && (\text{Readout Layer}), \label{eq:mpgnn}
\end{align}
where $k \in \mathbb{N}^{+}_{l-1}$, $H_{k} \in \mathbb{R}^{n \times h_{k}}$ are node representations/states and $H_l \in \mathbb{R}^{1 \times K}$ is the output representation.
Here we initialize $H_{0} = \bm{0}$.
W.l.o.g., we assume $\forall k \in \mathbb{N}^{+}_{l-1}$, $H_k \in \mathbb{R}^{n \times h}$ and $M_k \in \mathbb{R}^{n \times h}$ since $h$ is the maximum hidden dimension.
$C_{\operatorname{in}} \in \mathbb{R}^{n \times c}$ and $C_{\operatorname{out}} \in \mathbb{R}^{n \times c}$ ($c$ is the number of edges) are the incidence matrices corresponding to incoming and outgoing nodes\footnote{
For undirected graphs, we convert each edge into two directed edges. 
} respectively.
Specifically, rows and columns of $C_{\operatorname{in}}$ and $C_{\operatorname{out}}$ correspond to nodes and edges respectively. 
$C_{\operatorname{in}}[i,j] = 1$ indicates that the incoming node of the $j$-th edge is the $i$-th node.
Similarly, $C_{\operatorname{out}}[i,j] = 1$ indicates that the outgoing node of the $j$-th edge is the $i$-th node.
$g, \phi, \rho$ are nonlinear mappings, \eg, ReLU and Tanh.
Technically speaking, $g: \mathbb{R}^{h} \rightarrow \mathbb{R}^{h}$, $\phi: \mathbb{R}^{h} \rightarrow \mathbb{R}^{h}$, and $\rho: \mathbb{R}^{h} \rightarrow \mathbb{R}^{h}$ operate on vector-states of individual node/edge. 
However, since we share these functions across nodes/edges, we can naturally generalize them to matrix-states, \eg, $\tilde{\phi}: \mathbb{R}^{n \times h} \rightarrow \mathbb{R}^{n \times h}$ where $\tilde{\phi}(X)[i, :] = \phi(X[i, :])$.
By doing so, the same function could be applied to matrices with varying size of the first dimension.
For simplicity, we use $g, \phi, \rho$ to denote such generalization to matrices.
We denote the Lipschitz constants of $g, \phi, \rho$ under the vector $2$-norm as $C_{g}, C_{\phi}, C_{\rho}$ respectively.
We also assume $g(\bm{0}) = \bm{0}$, $\phi(\bm{0}) = \bm{0}$, and $\rho(\bm{0}) = \bm{0}$ and define the \emph{percolation complexity} as $\mathcal{C} = C_{g} C_{\phi} C_{\rho} \Vert W_2 \Vert_2$ following \citep{garg2020generalization}. 


\paragraph{Multiclass Margin Loss:}

We use the multi-class $\gamma$-margin loss following \citep{bartlett2017spectrally,neyshabur2017pac}. 
The generalization error is defined as,
\begin{align}\label{eq:max_margin_loss}
    L_{\mathcal{D}, \gamma}(f_w) = \underset{z \sim \mathcal{D}}{\mathbb{P}} \left( f_w(X, A)[y] \le \gamma + \max_{j \neq y}f_w(X, A)[j] \right), 
\end{align}
where $\gamma > 0$ and $f_w(X, A)$ is the $l$-th layer representations, \ie, $H_l = f_w(X, A)$.  
Accordingly, we can define the empirical error as,
\begin{align}\label{eq:empirical_max_margin_loss}
    L_{S, \gamma}(f_w) = \frac{1}{m} \sum_{z_i \in S} \bm{1} \left( f_w(X, A)[y] \le \gamma + \max_{j \neq y}f_w(X, A)[j] \right). 
\end{align}

\subsection{Background of PAC-Bayes Analysis}

PAC-Bayes \citep{mcallester1999pac,mcallester2003simplified,langford2003pac} takes a Bayesian view of the probably approximately correct (PAC) learning theory \citep{valiant1984theory}.
In particular, it assumes that we have a prior distribution $P$ over the hypothesis class $\mathcal{H}$ and obtain a posterior distribution $Q$ over the same support through the learning process on the training set.
Therefore, instead of having a deterministic model/hypothesis as in common learning formulations, we have a distribution of models.
Under this Bayesian view, we define the generalization error and the empirical error as,
\begin{align}
    L_{S, \gamma}(Q) = \mathbb{E}_{w \sim Q}[ L_{S, \gamma}(f_w) ], \nonumber
    \qquad \qquad
    L_{\mathcal{D}, \gamma}(Q) = \mathbb{E}_{w \sim Q}[ L_{\mathcal{D}, \gamma}(f_w) ]. \nonumber
\end{align}

Since many interesting models like neural networks are deterministic and the exact form of the posterior $Q$ induced by the learning process and the prior $P$ is typically unknown, 
it is unclear how one can perform PAC-Bayes analysis.
Fortunately, we can exploit the following result from the PAC-Bayes theory. 
\begin{theorem}\citep{mcallester2003simplified}\label{thm:pac_bayes}
    (Two-sided)
    Let $P$ be a prior distribution over $\mathcal{H}$ and let $\delta \in (0, 1)$.
    Then, with probability $1 - \delta$ over the choice of an i.i.d. size-$m$ training set $S$ according to $\mathcal{D}$, for all distributions $Q$ over $\mathcal{H}$ and any $\gamma > 0$, we have 
    \begin{align}
    	L_{\mathcal{D}, \gamma}(Q) \le L_{S, \gamma}(Q) + \sqrt{\frac{\KL(Q \Vert P) + \ln \frac{2m}{\delta} }{2(m-1)}}. \nonumber
    \end{align}
\end{theorem}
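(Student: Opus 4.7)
The plan is to follow the textbook proof of McAllester's PAC-Bayes bound, whose engine is the combination of a moment-generating-function estimate for the generalization gap at a single hypothesis with the Donsker--Varadhan change-of-measure inequality, which lets that estimate be lifted from a single $w$ to an arbitrary posterior $Q$.

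The first ingredient I would establish is that for any fixed $w$, $L_{S,\gamma}(f_w)$ is an empirical average of $m$ i.i.d.\ $\{0,1\}$-valued random variables with mean $L_{\mathcal{D},\gamma}(f_w)$. A standard Hoeffding/Maurer-type bound for bounded empirical means then yields the sub-Gaussian moment estimate
\begin{align}
    \mathbb{E}_{S\sim\mathcal{D}^m}\!\left[\exp\!\bigl(2(m-1)(L_{\mathcal{D},\gamma}(f_w) - L_{S,\gamma}(f_w))^2\bigr)\right] \le 2m. \nonumber
\end{align}
Integrating this inequality against the prior $P$ (Fubini) and then applying Markov's inequality to the nonnegative random variable $\mathbb{E}_{w\sim P}[\exp(\cdots)]$ in the sample $S$ gives that, with probability at least $1-\delta$ over $S$,
\begin{align}
    \mathbb{E}_{w\sim P}\!\left[\exp\!\bigl(2(m-1)(L_{\mathcal{D},\gamma}(f_w) - L_{S,\gamma}(f_w))^2\bigr)\right] \le \frac{2m}{\delta}. \nonumber
\end{align}
Crucially, this high-probability event depends only on $P$, so the same event can be used simultaneously for every posterior $Q$.

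Conditioned on that event, I would then invoke the Donsker--Varadhan change-of-measure inequality $\mathbb{E}_{w\sim Q}[\phi(w)] \le \KL(Q\|P) + \log \mathbb{E}_{w\sim P}[e^{\phi(w)}]$ with the choice $\phi(w) = 2(m-1)(L_{\mathcal{D},\gamma}(f_w) - L_{S,\gamma}(f_w))^2$, which furnishes the upper bound $\KL(Q\|P) + \log(2m/\delta)$ on the $Q$-expected quantity. Jensen's inequality then pulls the square outside the $Q$-expectation, producing $(L_{\mathcal{D},\gamma}(Q) - L_{S,\gamma}(Q))^2$ on the left-hand side. Dividing by $2(m-1)$ and taking square roots gives the claimed inequality.

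The only step requiring genuine work is the initial MGF estimate; everything afterwards is algebraic bookkeeping combined with two textbook inequalities. A minor subtlety worth flagging is the uniformity in $Q$: because the Markov step is applied to a $Q$-free quantity (the prior-averaged exponential moment), a single $1-\delta$ event over $S$ supports the bound simultaneously for every posterior, which is exactly what the theorem asserts.
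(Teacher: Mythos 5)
Your proposal is correct and follows essentially the same route as the paper's proof: the sub-Gaussian moment estimate $\mathbb{E}_S[e^{2(m-1)\Delta^2}]\le 2m$ (which the paper derives via Hoeffding plus a tail-integration lemma), Fubini and Markov over the prior to get a $Q$-free high-probability event, then the Donsker--Varadhan change of measure (which the paper derives inline via Jensen) and a final Jensen step to pull the square outside the $Q$-expectation. No gaps; the one step you flag as requiring work is exactly the content of the paper's Lemmas A.1--A.2.
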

Here $\KL$ is the KL-divergence.
The nice thing about this result is that the inequality holds for all possible prior $P$ and posterior $Q$ distributions. 
Hence, we have the freedom to construct specific priors and posteriors so that we can work out the bound.
Moreover, \cite{mcallester2003simplified,neyshabur2017pac} provide a general recipe to construct the posterior such that for a large class of models, including 
deterministic ones, the PAC-Bayes bound  can be computed.
Taking a neural network as an example, we can choose a prior distribution with some known density, \eg, a fixed Gaussian, over the initial weights.
After the learning process, we can add random perturbations to the learned weights from another known distribution as long as the KL-divergence permits an analytical form.
This converts the deterministic model into a distribution of models while still obtaining a tractable KL divergence.
Leveraging Theorem \ref{thm:pac_bayes} and the above recipe, \cite{neyshabur2017pac} obtained the following result which holds for a large class of deterministic models.
\begin{lemma}\citep{neyshabur2017pac}\label{lemma:pac_bayes_deterministic}\footnote{The constants slightly differ from the original paper since we use a two-sided version of Theorem \ref{thm:pac_bayes}.}
    Let $f_w(x): \mathcal{X} \rightarrow \mathbb{R}^K $ be any model with parameters $w$, and let $P$ be any distribution on the parameters that is independent of the training data.
    For any $w$, we construct a posterior $Q(w + u)$ by adding any random perturbation $u$ to $w$, \st, $\mathbb{P}(\max_{x \in \mathcal{X}} \vert f_{w + u}(x) - f_w(x) \vert_{\infty} < \frac{\gamma}{4} ) > \frac{1}{2}$.
    Then, for any $\gamma, \delta > 0$, with probability at least $1 - \delta$ over an i.i.d. size-$m$ training set $S$ according to $\mathcal{D}$, for any $w$, we have:
    \begin{align}
        L_{\mathcal{D}, 0}(f_w) \le L_{S, \gamma}(f_w) + \sqrt{\frac{2\KL(Q(w + u) \Vert P) + \log \frac{8m}{\delta} }{2(m-1)}}. \nonumber
    \end{align}
\end{lemma}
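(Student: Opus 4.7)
My plan is to reduce Lemma \ref{lemma:pac_bayes_deterministic} to the two-sided stochastic PAC-Bayes inequality of Theorem \ref{thm:pac_bayes} through a perturbation-based sandwich, combined with a truncation argument that converts the ``with probability $> 1/2$'' hypothesis into a clean KL-divergence bound. First I would define the good perturbation set
\begin{align}
\mathcal{W} = \left\{ u : \max_{x \in \mathcal{X}} \vert f_{w+u}(x) - f_w(x) \vert_{\infty} < \gamma/4 \right\}, \nonumber
\end{align}
so that by hypothesis $\mathbb{P}_{u \sim Q}(\mathcal{W}) > 1/2$. The elementary observation is that for any $u \in \mathcal{W}$ a uniform $\gamma/4$-shift of every output coordinate can move the binary margin by at most $\gamma/2$, which yields, as events in $x$,
\begin{align}
\{ f_w(x)[y] \le \max_{j \neq y} f_w(x)[j] \} & \subseteq \{ f_{w+u}(x)[y] \le \max_{j \neq y} f_{w+u}(x)[j] + \gamma/2 \} \nonumber \\
& \subseteq \{ f_w(x)[y] \le \max_{j \neq y} f_w(x)[j] + \gamma \}. \nonumber
\end{align}
Integrating under $\mathcal{D}$ and averaging over $S$ respectively gives $L_{\mathcal{D}, 0}(f_w) \le L_{\mathcal{D}, \gamma/2}(f_{w+u})$ and $L_{S, \gamma/2}(f_{w+u}) \le L_{S, \gamma}(f_w)$ for every $u \in \mathcal{W}$.

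Next I would introduce the truncated posterior $Q'$ with density $Q(u)\mathbf{1}_{u \in \mathcal{W}}/Q(\mathcal{W})$. Because $Q'$ is supported on $\mathcal{W}$, averaging the two inclusions above under $Q'$ delivers the sandwich $L_{\mathcal{D}, 0}(f_w) \le L_{\mathcal{D}, \gamma/2}(Q')$ and $L_{S, \gamma/2}(Q') \le L_{S, \gamma}(f_w)$. Applying Theorem \ref{thm:pac_bayes} to the data-independent prior $P$ and the posterior $Q'$ at margin $\gamma/2$, with probability at least $1 - \delta$ over $S$,
\begin{align}
L_{\mathcal{D}, \gamma/2}(Q') \le L_{S, \gamma/2}(Q') + \sqrt{\frac{\KL(Q' \Vert P) + \log(2m/\delta)}{2(m-1)}}. \nonumber
\end{align}
Chaining this with the sandwich reduces the whole claim to a clean bound on $\KL(Q' \Vert P)$ in terms of $\KL(Q \Vert P)$.

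The one delicate step, and the main obstacle, is that KL divergence does not behave monotonically under restriction; I would handle it by a short entropy-decomposition argument. Letting $c = Q(\mathcal{W})$ and $Q''$ the similarly defined restriction of $Q$ to $\mathcal{W}^c$, a direct integration over the two pieces of the support yields the identity
\begin{align}
\KL(Q \Vert P) = c\, \KL(Q' \Vert P) + (1-c)\, \KL(Q'' \Vert P) - H(c), \nonumber
\end{align}
where $H(c) = -c \log c - (1-c) \log(1-c)$ is the binary entropy. Dropping the nonnegative term $(1-c) \KL(Q'' \Vert P)$ and using $H(c) \le \log 2$ together with $c > 1/2$ gives $\KL(Q' \Vert P) < 2\, \KL(Q \Vert P) + \log 4$. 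Substituting into the PAC-Bayes inequality and absorbing $\log 4 + \log(2m/\delta) = \log(8m/\delta)$ delivers the stated bound. All remaining manipulations are routine bookkeeping.
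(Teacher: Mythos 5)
Your proposal is correct and follows essentially the same route as the paper's own proof: the same margin sandwich via the event inclusions, the same truncated posterior restricted to the good perturbation set, the same application of Theorem \ref{thm:pac_bayes} at margin $\gamma/2$, and the same KL decomposition identity $\KL(Q \Vert P) = c\,\KL(Q' \Vert P) + (1-c)\,\KL(Q'' \Vert P) - H(c)$ to obtain $\KL(Q' \Vert P) \le 2\KL(Q \Vert P) + 2\log 2$. The constant bookkeeping ($\log 4 + \log(2m/\delta) = \log(8m/\delta)$) also matches the paper exactly.
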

This lemma guarantees that, as long as the change of the output brought by the perturbations is small with a large probability, one can obtain the corresponding generalization bound.

\section{Generalization Bounds}\label{sect:main_results}

In this section, we present the main results: generalization bounds of GCNs and MPGNNs using a PAC-Bayesian approach.
We then relate them to existing generalization bounds of GNNs and draw connections to the bounds of MLPs/CNNs.
We summarize the key ideas of the proof in the main text and defer the details  to the appendix.

\subsection{PAC-Bayes Bounds of GCNs}

As discussed above, 
in order to apply Lemma \ref{lemma:pac_bayes_deterministic}, we must ensure that the change of the output brought by the weight perturbations  is small with a large probability.
In the following lemma, we bound this change using the product of the spectral norms of learned weights at each layer and a term depending on some statistics of the graph.
\begin{lemma}\label{lemma:gcn_perturbation}
    (GCN Perturbation Bound) For any $B > 0, l > 1$, let $f_w \in \mathcal{H}: \mathcal{X} \times \mathcal{G} \rightarrow \mathbb{R}^{K}$ be a $l$-layer GCN. Then for any $w$, and $x \in \mathcal{X}_{B,h_0}$, and any perturbation $u = \text{vec}( \{U_i\}_{i=1}^{l})$ such that $\forall i \in \mathbb{N}^{+}_{l}$, $\Vert U_i \Vert_{2} \le \frac{1}{l} \Vert W_i \Vert_{2}$, the change in the output of GCN is bounded as,
    \begin{align}
        \left\vert f_{w+u}(X, A) - f_w(X, A) \right\vert_{2} \le eB d^{\frac{l-1}{2}} \left( \prod_{i=1}^{l} \Vert W_{i} \Vert_{2} \right) \sum_{k=1}^{l} \frac{\Vert U_{k} \Vert_{2}}{\Vert W_{k} \Vert_{2}}. \nonumber
    \end{align}
\end{lemma}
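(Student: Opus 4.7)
The plan is to adapt the feedforward perturbation argument to GCNs by tracking \emph{row-wise} $\ell_2$ norms of the hidden-state matrices rather than their operator norms. As the authors flag, the textbook approach of bounding $\| H_k \|_2$ layer-by-layer fails because ReLU is not contractive under the spectral norm, but ReLU \emph{is} $1$-Lipschitz as a map $\mathbb{R}^h \to \mathbb{R}^h$ under the vector $\ell_2$ norm. I would therefore track the two quantities
\[
  \Phi_k \;:=\; \max_{i \in \mathbb{N}_n^{+}}\, \bigl| H_k[i,:] \bigr|_2, \qquad
  \Delta_k \;:=\; \max_{i \in \mathbb{N}_n^{+}}\, \bigl| H^{w+u}_k[i,:] - H^{w}_k[i,:] \bigr|_2,
\]
and establish one forward bound and one perturbation recursion for each.

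The forward bound comes from reading off a single row of $\tilde{L} H_{k-1} W_k$, applying (i) the $\ell_2$-contractivity of ReLU, (ii) the spectral-norm bound $\|W_k\|_2$, and (iii) the triangle inequality using nonnegativity of $\tilde{L}$, which together give $\Phi_k \le \|W_k\|_2 \bigl(\max_i \|\tilde{L}[i,:]\|_1\bigr) \Phi_{k-1}$. A direct computation using $\tilde{L}[i,j] = \tilde{A}[i,j]/\sqrt{d_i d_j}$, the fact that each row of $\tilde{A} = I + A$ has at most $d$ nonzeros, and the fact that the added self-loops guarantee $d_j \ge 1$, yields $\max_i \|\tilde{L}[i,:]\|_1 \le \sqrt{d}$. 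Combined with $\Phi_0 \le B$ from assumption \ref{asm:bound_input}, iteration gives $\Phi_k \le B\, d^{k/2} \prod_{i=1}^k \|W_i\|_2$ for $k \le l-1$.

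The perturbation recursion follows from expanding
\[
  H_{k-1}^{w+u}(W_k+U_k) - H_{k-1}^w W_k \;=\; \bigl(H_{k-1}^{w+u} - H_{k-1}^w\bigr)(W_k + U_k) + H_{k-1}^w\, U_k,
\]
multiplying by $\tilde{L}$, taking row-wise $\ell_2$ norms, and using $\|U_k\|_2 \le \|W_k\|_2/l$ to bound $\|W_k + U_k\|_2 \le (1+1/l)\|W_k\|_2$, giving
\[
  \Delta_k \;\le\; \sqrt{d}\,(1+1/l)\,\|W_k\|_2\,\Delta_{k-1} \;+\; \sqrt{d}\,\Phi_{k-1}\,\|U_k\|_2.
\]
The clean move is to rescale by $\tilde{\Delta}_k := \Delta_k / \bigl(d^{k/2}\prod_{i=1}^k\|W_i\|_2\bigr)$: the forward bound on $\Phi_{k-1}$ cancels exactly, and the recursion collapses to $\tilde{\Delta}_k \le (1 + 1/l)\,\tilde{\Delta}_{k-1} + B\,\|U_k\|_2/\|W_k\|_2$, which unrolls immediately from $\tilde{\Delta}_0 = 0$.

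The readout is handled by the same algebraic identity but without the $\sqrt{d}$ factor (no Laplacian) and noting that $\bigl|\frac{1}{n}\mathbf{1}_n M\bigr|_2 \le \max_i |M[i,:]|_2$; this produces exactly one more $(1+1/l)\|W_l\|_2$ factor on $\tilde{\Delta}_{l-1}$ plus a $\Phi_{l-1}\|U_l\|_2$ term. Absorbing all the accumulated $(1+1/l)$ factors with the standard estimate $(1+1/l)^l \le e$ then yields the claimed bound. The principal obstacle is the very first design choice---recognizing that row-wise $\ell_2$ tracking is the correct substitute for operator-norm tracking, since it is simultaneously compatible with the pointwise nature of ReLU and with the row-sum structure of $\tilde{L}$; once the rescaling $\tilde{\Delta}_k$ is in hand, the remaining work is routine perturbation bookkeeping.
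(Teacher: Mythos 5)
Your proposal is correct and follows essentially the same route as the paper's proof: the same two row-wise $\ell_2$ quantities $\Phi_k$ and $\Psi_k$, the same $\sqrt{d}$ factor from $\Vert \tilde{L} \Vert_{\infty} \le \sqrt{d}$, the same perturbation recursion, the same treatment of the readout layer, and the same final $(1+1/l)^l \le e$ step. Your rescaling $\tilde{\Delta}_k = \Delta_k / (d^{k/2}\prod_{i\le k}\Vert W_i\Vert_2)$ is only a cleaner bookkeeping device for unrolling the recursion that the paper unrolls directly.
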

The key idea of the proof is to decompose the change of the network output into two terms which depend on two quantities of GNNs respectively: the maximum change of node representations $\max_{i} \left\vert H_{l-1}^{\prime}[i, :] - H_{l-1}[i, :] \right\vert_{2}$ and the maximum node representation $\max_{i} \left\vert H_{l-1}[i, :] \right\vert_{2}$.
Here the superscript prime denotes the perturbed model.
These two terms can be bounded by an induction on the layer index.
From this lemma, we can see that the most important graph statistic for the stability of GCNs is the maximum node degree, \ie, $d - 1$.
Armed with Lemma \ref{lemma:gcn_perturbation} and Lemma \ref{lemma:pac_bayes_deterministic}, we now present the PAC-Bayes generalization bound of GCNs as Theorem \ref{thm:gcn_generalization_bound}.

\begin{theorem}\label{thm:gcn_generalization_bound}
    (GCN Generalization Bound) For any $B > 0, l > 1$, let $f_w \in \mathcal{H}: \mathcal{X} \times \mathcal{G} \rightarrow \mathbb{R}^{K}$ be a $l$ layer GCN. Then for any $\delta, \gamma > 0$, with probability at least $1 - \delta$ over the choice of an i.i.d. size-$m$ training set $S$ according to $\mathcal{D}$, for any $w$, we have,
    \begin{align}
    	L_{\mathcal{D}, 0}(f_w) \le L_{S, \gamma}(f_w) + \mathcal{O} \left( \sqrt{\frac{ B^2 d^{l-1} l^2 h \log(lh) \prod\limits_{i=1}^{l} \Vert W_{i} \Vert_{2}^2 \sum\limits_{i=1}^{l} ({\Vert W_{i} \Vert_F^2}/{\Vert W_{i} \Vert_{2}^2}) + \log \frac{ml}{\delta} }{\gamma^2 m}} \right). \nonumber
    \end{align}
\end{theorem}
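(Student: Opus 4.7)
The strategy is to invoke Lemma~\ref{lemma:pac_bayes_deterministic} with a mean-zero Gaussian prior $P = \mathcal{N}(0, \sigma^2 I)$ on the vectorized parameters and a posterior $Q = \mathcal{N}(w, \sigma^2 I)$ obtained by adding i.i.d.\ Gaussian noise $u$ with the same variance. The variance $\sigma^2$ will be chosen as a function of the scalar $\beta = \bigl(\prod_{i=1}^l \Vert W_i \Vert_2 \bigr)^{1/l}$, and Lemma~\ref{lemma:gcn_perturbation} will supply the required high-probability control on the output shift.

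First, I would exploit the positive homogeneity of ReLU: $f_w$ is invariant under the rescaling $W_i \mapsto c_i W_i$ with $\prod_i c_i = 1$, so without loss of generality one can assume $\Vert W_i \Vert_2 = \beta$ for every $i$. Next, for $U_i \in \mathbb{R}^{h\times h}$ with i.i.d.\ $\mathcal{N}(0,\sigma^2)$ entries, a standard operator-norm concentration bound yields $\Vert U_i \Vert_2 \le \sigma \sqrt{2h \ln(4lh)}$ for all $i \in \mathbb{N}_l^+$ simultaneously with probability at least $1/2$, which in particular ensures $\Vert U_i \Vert_2 \le \Vert W_i \Vert_2 / l$ and thus validates the hypothesis of Lemma~\ref{lemma:gcn_perturbation}. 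Choosing
\[
\sigma \;=\; \frac{\gamma}{C\, e B\, d^{(l-1)/2}\, l\, \beta^{l-1} \sqrt{h \ln(lh)}}
\]
for a sufficiently large universal constant $C$, Lemma~\ref{lemma:gcn_perturbation} then gives $\bigl\vert f_{w+u}(X,A) - f_w(X,A) \bigr\vert_2 \le \gamma/4$ (which also bounds the $\ell_\infty$ norm) with probability $> 1/2$, meeting the hypothesis of Lemma~\ref{lemma:pac_bayes_deterministic}.

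The KL divergence between two such isotropic Gaussians is
\[
\KL(Q \Vert P) \;=\; \frac{\sum_{i=1}^l \Vert W_i \Vert_F^2}{2\sigma^2}.
\]
Substituting the chosen $\sigma$, and using the normalization $\Vert W_i \Vert_2 = \beta$ to rewrite $\beta^{2l-2} \sum_i \Vert W_i \Vert_F^2 = \prod_i \Vert W_i \Vert_2^2 \cdot \sum_i \Vert W_i \Vert_F^2 / \Vert W_i \Vert_2^2$, produces the dominant term $B^2 d^{l-1} l^2 h \log(lh) \prod_i \Vert W_i \Vert_2^2 \sum_i \Vert W_i \Vert_F^2 / \Vert W_i \Vert_2^2$ inside the square root of the target bound.

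The main obstacle, and the crux of the argument, is that $\sigma$ (and hence $P$) depends on the data-dependent quantity $\beta$, violating the requirement that the prior be fixed in advance. The standard remedy, which I would carry out, is a covering argument: the bound is vacuous unless $\beta$ lies in a polynomial range (outside of which either the right-hand side already exceeds $1$ or the margin cannot be violated), so I would pre-select a geometric grid $\{\tilde\beta_j\}$ of size $N = \mathcal{O}(l \log(lh/\gamma))$ and invoke Lemma~\ref{lemma:pac_bayes_deterministic} with $\delta / N$ at each grid point. For any learned $\beta$ there is some $\tilde\beta_j$ within a factor $(1+1/l)$, which perturbs $\prod_i \Vert W_i \Vert_2^2$ and all spectral-norm estimates only by absolute constants. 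The extra $\log N$ is absorbed into the $\log(ml/\delta)$ term and, together with the $\ln(lh)$ from the operator-norm concentration, accounts for the $\log(lh)$ factor in the statement. Reconciling the data-independent prior with the data-dependent $\sigma$ is the most delicate step of the proof.
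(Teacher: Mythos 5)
Your proposal follows essentially the same route as the paper's proof: the same normalization via ReLU homogeneity to equalize $\Vert W_i \Vert_2 = \beta$, the same Gaussian prior/posterior with $\sigma$ tied to $\tilde\beta$, the same spectral-norm concentration and choice of $\sigma$ feeding Lemma~\ref{lemma:gcn_perturbation} into Lemma~\ref{lemma:pac_bayes_deterministic}, the same KL computation, and the same union bound over a cover of the admissible range of $\beta$ (the paper uses an additive cover of size $\tfrac{l}{2}(m^{1/2l}-1)$ rather than your geometric grid, but the resulting $\log$ term is absorbed identically). The argument is correct and matches the paper's proof in all essentials.
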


Since it is easy to show GCNs are homogeneous, the proof of Theorem \ref{thm:gcn_generalization_bound} follows the one for MLPs/CNNs with ReLU activations in \citep{neyshabur2017pac}.
In particular, we choose the prior distribution $P$ and the perturbation distribution to be zero-mean Gaussians with the same diagonal variance $\sigma$.
The key steps of the proof are: 1) constructing a quantity of learned weights $\beta = ( \prod_{i=1}^{l} \Vert W_i \Vert_{2} )^{1/l}$; 2) fixing any $\tilde{\beta}$, considering all $\beta$ that are in the range $\vert \beta - \tilde{\beta} \vert \le \beta / l$ and choosing $\sigma$ which depends on $\tilde{\beta}$ so that one can apply Lemma \ref{lemma:gcn_perturbation} and \ref{lemma:pac_bayes_deterministic} to obtain the PAC-Bayes bound;
3) taking a union bound of the result in the 2nd step by considering multiple choices of $\tilde{\beta}$ so that all possible values of $\beta$ (corresponding to all possible weight $w$) are covered.
Although Lemma \ref{lemma:pac_bayes_deterministic} and \ref{lemma:gcn_perturbation} have their own constraints on the random perturbation, above steps provide a way to set the variance $\sigma$ which satisfies these constraints and the independence \wrt learned weights. 
The latter is important since $\sigma$ is also the variance of the prior $P$ which should not depend on data.

\subsection{PAC-Bayes Bounds of MPGNNs}

For MPGNNs, we again need to perform a perturbation analysis to make sure that the change of the network output brought by the perturbations on weights is small with a large probability. 
Following the same strategy adopted in proving Lemma \ref{lemma:gcn_perturbation}, we prove the following Lemma. 

\begin{lemma}\label{lemma:mpgnn_perturbation}
    (MPGNN Perturbation Bound) For any $B > 0, l > 1$, let $f_w \in \mathcal{H}: \mathcal{X} \times \mathcal{G} \rightarrow \mathbb{R}^{K}$ be a $l$-step MPGNN.
    Then for any $w$, and $x \in \mathcal{X}_{B,h_0}$, and any perturbation $u = \text{vec}(\{U_1, U_2, U_l\})$ such that $\eta = \max \left( \frac{\Vert U_1 \Vert_{2}}{\Vert W_1 \Vert_{2}}, \frac{\Vert U_2 \Vert_{2}}{\Vert W_2 \Vert_{2}}, \frac{\Vert U_l \Vert_{2}}{\Vert W_l \Vert_{2}} \right) \le \frac{1}{l}$, the change in the output of MPGNN is bounded as,
    \begin{align}
        \vert f_{w+u}(& X, A) - f_w(X, A) \vert_{2} \le e B l \eta \Vert W_1 \Vert_{2} \Vert W_l \Vert_{2} C_{\phi} \frac{\left( d \mathcal{C} \right)^{l-1} - 1}{d \mathcal{C} - 1},
        \nonumber
    \end{align}
    where $\mathcal{C} = C_{\phi} C_{\rho} C_{g} \Vert W_2 \Vert_{2}$.
\end{lemma}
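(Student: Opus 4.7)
The plan is to mirror the two-track strategy used in Lemma \ref{lemma:gcn_perturbation}: simultaneously track, at every step $k$, the maximum node representation $\Psi_k := \max_i |H_k[i,:]|_2$ and the maximum change $\Delta_k := \max_i |H_k'[i,:] - H_k[i,:]|_2$ (plus $\Psi_k'$ for the perturbed network), close coupled recursions for them, and plug the solutions at $k=l-1$ into a perturbation bound for the readout. Starting from the readout, adding and subtracting $H'_{l-1} W_l$ and using $|\bm{1}_n M|_2 \le n \max_i |M[i,:]|_2$ yields
\begin{align*}
\left|f_{w+u}(X,A) - f_w(X,A)\right|_2 \le \|W_l\|_2\,\Delta_{l-1} + \|U_l\|_2\,\Psi'_{l-1}.
\end{align*}
Since $\|U_l\|_2\le\eta\|W_l\|_2$, both terms already share the scale $\eta\|W_l\|_2$, so the task reduces to bounding $\Psi'_{l-1}$ and $\Delta_{l-1}$.

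For $\Psi_k$: the row-wise update $H_k[i,:] = \phi(X[i,:]W_1 + \rho(\bar M_k[i,:])W_2)$, together with $\phi(\bm 0)=\bm 0$, $\rho(\bm 0)=\bm 0$, $g(\bm 0)=\bm 0$ and the Lipschitz constants $C_\phi,C_\rho,C_g$, plus the observation that each row of $\bar M_k = C_{\operatorname{in}} M_k$ is a sum of at most $d-1$ rows of $g(H_{k-1})$, produces the affine recursion $\Psi_k \le C_\phi B \|W_1\|_2 + (d-1)\mathcal{C}\,\Psi_{k-1}$ with $\Psi_0 = 0$. Unrolling yields a geometric sum, and the harmless coarsening $d-1\le d$ matches the $\frac{(d\mathcal{C})^k-1}{d\mathcal{C}-1}$ shape in the target. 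Running the same argument with inflated norms $\|W_i+U_i\|_2\le(1+\eta)\|W_i\|_2$ gives the same bound for $\Psi'_k$ up to an overall factor $(1+\eta)^{2k}$ which will eventually be absorbed into the constant $e$.

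For $\Delta_k$: expanding $H'_k[i,:]-H_k[i,:]$ and inserting compensating terms splits the bound into three sources — the $U_1$ direct contribution, the propagation through $\rho$ weighted by $W_2+U_2$, and the $U_2$ contribution weighted by the clean $\Psi_{k-1}$ — leading to
\begin{align*}
\Delta_k \le \eta\, C_\phi B\|W_1\|_2 + (1+\eta)(d-1)\mathcal{C}\,\Delta_{k-1} + \eta(d-1)\mathcal{C}\,\Psi_{k-1},
\end{align*}
with $\Delta_0 = 0$. Substituting the closed form of $\Psi_{k-1}$ into the forcing term and solving the linear recurrence gives, to leading order, $\Delta_k \lesssim \eta\,C_\phi B\|W_1\|_2 \cdot k \cdot (1+\eta)^{k-1}\cdot \frac{(d\mathcal{C})^{k-1}-1}{d\mathcal{C}-1}$, the extra factor $k$ arising because the forcing has the same geometric rate as the homogeneous part. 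Setting $k = l-1$, using $\eta \le 1/l$ to bound $(1+\eta)^{l-1}\le e$, and recombining with the readout inequality gives the advertised $eBl\eta\|W_1\|_2\|W_l\|_2 C_\phi \frac{(d\mathcal{C})^{l-1}-1}{d\mathcal{C}-1}$.

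The main obstacle is the $\Delta_k$ recursion, which is inhomogeneous and driven by a forcing term $\eta(d-1)\mathcal{C}\,\Psi_{k-1}$ that grows at exactly the same geometric rate as its homogeneous part; the bookkeeping of the many $(1+\eta)$ inflation factors coming from simultaneously perturbing $W_1$ and $W_2$ at every step and collapsing them into the single constant $e$ via $\eta \le 1/l$ is the delicate step. A secondary subtlety is that, unlike GCNs, MPGNNs are non-homogeneous in $w$ and share their weights across steps, so the bound is naturally a geometric series in $d\mathcal{C}$ rather than a layer-wise product of distinct spectral norms as in the GCN case.
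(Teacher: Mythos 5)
Your proposal follows essentially the same route as the paper's proof: two coupled recursions, one for the maximum node representation (the paper's $\Phi_j \le \kappa + d\mathcal{C}\,\Phi_{j-1}$ with $\kappa = C_\phi B\Vert W_1\Vert_2$) and one for the maximum per-node change driven by that first quantity as a forcing term, solved and then inserted into the readout decomposition; the only cosmetic difference is that you group the readout as $\Delta_{l-1}W_l + H'_{l-1}U_l$ while the paper uses $\Delta_{l-1}(W_l+U_l) + H_{l-1}U_l$, which lets it track the unperturbed $\Phi_{l-1}$ and keep the total count of $(1+1/l)$ factors at $l$. One small slip worth fixing: since the forcing term in your $\Delta_k$ recursion is $\eta\kappa\frac{(d\mathcal{C})^k-1}{d\mathcal{C}-1}$, the $k$-th summand of the unrolled solution already contributes $\eta\kappa\frac{(d\mathcal{C})^k-1}{d\mathcal{C}-1}$, so the closed form should carry $(d\mathcal{C})^k$ rather than $(d\mathcal{C})^{k-1}$; with that correction the readout combination gives $(l-1)e + e = le$ times $\kappa\Vert W_l\Vert_2\frac{(d\mathcal{C})^{l-1}-1}{d\mathcal{C}-1}$, exactly the advertised constant, provided the perturbed-representation inflation is tracked as $(1+\eta)^{l-1}\le e$ rather than $(1+\eta)^{2(l-1)}$.
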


The proof again involves decomposing the change into two terms which depend on two quantities respectively: the maximum change of node representations $\max_{i} \left\vert H_{l-1}^{\prime}[i, :] - H_{l-1}[i, :] \right\vert_{2}$ and the maximum node representation $\max_{i} \left\vert H_{l-1}[i, :] \right\vert_{2}$.
Then we perform an induction on the layer index to obtain their bounds individually.
Due to the weight sharing across steps, we have a form of geometric series $(( d \mathcal{C} )^{l-1} - 1) / (d \mathcal{C} - 1)$ rather than the product of spectral norms of each layer as in GCNs.
Technically speaking, the above lemma only works with $d \mathcal{C} \neq 1$. 
We refer the reader to the appendix for the special case of $d \mathcal{C} = 1$. 
We now provide the generalization bound for MPGNNs.
\begin{theorem}\label{thm:mpgnn_generalization_bound}
    (MPGNN Generalization Bound) For any $B > 0, l > 1$, let $f_w \in \mathcal{H}: \mathcal{X} \times \mathcal{G} \rightarrow \mathbb{R}^{K}$ be a $l$-step MPGNN. 
    Then for any $\delta, \gamma > 0$, with probability at least $1 - \delta$ over the choice of an i.i.d. size-$m$ training set $S$ according to $\mathcal{D}$, for any $w$, we have,
    \begin{align}
	    L_{\mathcal{D}, 0}(f_w) \le L_{S, \gamma}(f_w) + \mathcal{O} \left( \sqrt{\frac{ B^2 \left( \max\left(\zeta^{-(l+1)}, (\lambda \xi)^{(l+1)/l} \right) \right)^{2} l^2 h \log(lh) \vert w \vert_2^2 + \log \frac{m(l+1)}{\delta} }{\gamma^2 m}} \right), \nonumber
    \end{align}
    where 
    $\zeta = \min \left( \Vert W_1 \Vert_{2}, \Vert W_2 \Vert_{2}, \Vert W_l \Vert_{2} \right)$,
    $\vert w \vert_2^2 = \Vert W_1 \Vert_F^2 + \Vert W_2 \Vert_F^2 + \Vert W_l \Vert_F^2$,
    $\mathcal{C} = C_{\phi} C_{\rho} C_{g} \Vert W_2 \Vert_{2}$,
    $\lambda = \Vert W_1 \Vert_{2} \Vert W_l \Vert_{2}$, 
    and $\xi = C_{\phi} \frac{\left( d \mathcal{C} \right)^{l-1} - 1}{d \mathcal{C} - 1}$.
\end{theorem}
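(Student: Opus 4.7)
The plan is to mimic the three-step recipe used for Theorem \ref{thm:gcn_generalization_bound}, replacing Lemma \ref{lemma:gcn_perturbation} by Lemma \ref{lemma:mpgnn_perturbation} and adjusting the scalar summary of the learned weights to accommodate the non-homogeneity of MPGNNs. Throughout, I would choose the prior $P$ on $(W_1,W_2,W_l)$ to be a centered isotropic Gaussian with entry variance $\sigma^2$ and the posterior $Q$ to be the same Gaussian recentered at the learned weights, so that $\KL(Q\Vert P) = \vert w\vert_2^2/(2\sigma^2)$. Standard operator-norm concentration for Gaussian matrices gives $\Vert U_i\Vert_2 \le \sigma\sqrt{2h\ln(4lh)}$ for $i\in\{1,2,l\}$ simultaneously with probability at least $1/2$, which is the success probability required by Lemma \ref{lemma:pac_bayes_deterministic}.

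First, I would convert the perturbation hypothesis of Lemma \ref{lemma:mpgnn_perturbation} into a constraint on $\sigma$. To ensure $\eta\le 1/l$ it suffices to take $\sigma\sqrt{2h\ln(4lh)} \le \zeta/l$, and to push the output change below $\gamma/4$ via the bound $eBl\eta\lambda\xi$ it suffices to take $\sigma\sqrt{2h\ln(4lh)} \le \gamma\zeta/(4eBl\lambda\xi)$. Setting $\sigma$ to the smaller of the two thresholds and plugging into the KL formula yields a quantity of order $\vert w\vert_2^2\,l^2 h\ln(lh)\,\zeta^{-2}\max\!\bigl(1,(B\lambda\xi/\gamma)^2\bigr)$. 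Combined with Lemma \ref{lemma:pac_bayes_deterministic}, this already produces a PAC-Bayes bound for the specific weights $w$; what remains is to detach $\sigma$ from the training data.

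Second, I would carry out a covering argument to resolve the data-dependence of $\sigma$. Because $\sigma$ depends on $w$ only through $\zeta$, $\lambda$, and $\xi$, I would introduce a single scalar summary $\beta = \max\!\bigl(\zeta^{-1},(\lambda\xi)^{1/l}\bigr)$, chosen so that $1/\sigma$ scales like $(Bl/\gamma)\sqrt{h\ln(lh)}\,\beta^{l+1}$ in the non-vacuous regime (where $B\lambda\xi/\gamma \ge 1$). I would then build a geometric grid $\{\tilde\beta_j\}$ of ratio $1+1/l$ covering the range of $\beta$ between the smallest value for which the bound is below $1$ and the largest value for which the empirical margin loss is not automatically $1$; this range is polynomial in the problem parameters, so the grid has size $N = \mathcal{O}(l\log(\cdot))$. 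To each $\tilde\beta_j$ I would associate a fixed, data-independent prior $P_j$ with variance $\sigma_j^2$ determined by $\tilde\beta_j$, apply Lemma \ref{lemma:pac_bayes_deterministic} with $P_j$ at failure probability $\delta/N$, and take a union bound. For any realized $w$ with $\beta=\beta(w)$, the closest grid point satisfies $\vert\beta-\tilde\beta_j\vert\le\beta/l$, hence $\sigma_j$ agrees with the ``ideal'' $\sigma$ up to a factor $e$ and both perturbation constraints remain valid. Substituting back yields $\beta^{2(l+1)} = \max\!\bigl(\zeta^{-2(l+1)},(\lambda\xi)^{2(l+1)/l}\bigr)$ and the $\log(m(l+1)/\delta)$ term absorbs the $\log N$ from the union bound.

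The main obstacle is the non-homogeneity of MPGNNs: because $\xi$ depends on $\Vert W_2\Vert_2$ non-linearly (indeed exponentially in $l-1$ through $d\mathcal{C}$), one cannot rescale the three matrices to equalize their spectral norms as is done implicitly in the ReLU MLP/CNN and GCN analyses. This forces the covering to live in the compound quantity $\beta$ rather than in a geometric mean of norms, and it is precisely where the fractional exponent $(l+1)/l$ on $\lambda\xi$ comes from: the factor $\zeta^{-1}$ inherited from $\eta\le 1/l$ multiplies the factor $\lambda\xi$ inherited from the output constraint, and both must be re-expressed through $\beta^{l+1}$ to obtain a single-parameter covering. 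A secondary technical point is the excluded case $d\mathcal{C}=1$ of Lemma \ref{lemma:mpgnn_perturbation}, which I would handle by a continuity argument so that the bound applies uniformly in $d\mathcal{C}$.
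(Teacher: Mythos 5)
Your proposal follows essentially the same route as the paper: the same compound quantity $\beta=\max\left(\zeta^{-1},(\lambda\xi)^{1/l}\right)$ exploited via $\lambda\xi/\zeta\le\beta^{l+1}$, the same Gaussian prior/posterior with $\sigma$ tied to a data-independent approximation $\tilde\beta$, the same restriction of $\beta$ to the non-trivial window followed by a covering and union bound, and the same diagnosis of non-homogeneity as the reason a single scalar summary must replace the usual norm-equalizing rescaling. The only differences are cosmetic (a geometric rather than arithmetic grid, taking the minimum of the two $\sigma$-thresholds rather than verifying the $\eta\le 1/l$ constraint from the lower end of the $\beta$ range, and a swapped description of which endpoint of that range makes which side of the bound trivial), none of which affects correctness.
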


The proof also contains three steps:
1) since MPGNNs are typically non-homogeneous, \eg, when any of $\phi$, $\rho$, and $g$ is a bounded non-linearity like Sigmoid or Tanh, we design a special quantity of learned weights $\beta = \max( \zeta^{-1}, (\lambda \xi)^{{1}/{l}})$. 
2) fixing any $\tilde{\beta}$, considering all $\beta$ that are in the range $\vert \beta - \tilde{\beta} \vert \le \beta/(l+1)$ and choosing $\sigma$ which depends on $\tilde{\beta}$ so that one can apply Lemma \ref{lemma:mpgnn_perturbation} and \ref{lemma:pac_bayes_deterministic} to work out the PAC-Bayes bound;
3) taking a union bound of the previous result by considering multiple choices of $\tilde{\beta}$ so that all possible values of $\beta$ are covered.
The case with $d \mathcal{C} = 1$ is again included in the appendix.
The first step is non-trivial since we do not have the nice construction as in the homogeneous case, \ie, normalizing the weights so that the spectral norms of weights across layers are the same while the network output is unchanged. 
Moreover, the quantity is vital to the whole proof framework since it determines whether one can 1) satisfy the constraints on the random perturbation (so that Lemma \ref{lemma:pac_bayes_deterministic} and \ref{lemma:mpgnn_perturbation} are applicable) and 2) simultaneously induce a finite covering on its range (so that the bound holds for any $w$).
Since it highly depends on the form of the perturbation bound and the network architecture, there seems to be no general recipe on how to construct such a quantity.

\subsection{Comparison with Other Bounds}

\begin{table}[t]
\begin{center}
\resizebox{\textwidth}{!}
{
\begin{tabular}{c|cccc}
    \hline
    \toprule
    Statistics & \begin{tabular}{@{}c@{}}Max Node Degree \\ $d-1$ \end{tabular} & \begin{tabular}{@{}c@{}}Max Hidden Dim \\ $h$ \end{tabular} & \begin{tabular}{@{}c@{}}Spectral Norm of \\ Learned Weights \end{tabular} \\
    \midrule
    \midrule
    \begin{tabular}{@{}c@{}}VC-Dimension \\ \citep{scarselli2018vapnik}\end{tabular}
      & - & $\mathcal{O}\left( h^4 \right)$ & - \\ 
    \begin{tabular}{@{}c@{}}Rademacher \\ Complexity \\ \citep{garg2020generalization} \end{tabular} & $\mathcal{O}\left( d^{l-1} \sqrt{\log(d^{2l-3})} \right)$ & $\mathcal{O}\left( h \sqrt{\log h} \right)$ & $\mathcal{O}\left( \lambda \mathcal{C} \xi \sqrt{ \log \left( \Vert W_{2} \Vert_2 \lambda \xi^{2} \right)} \right)$ \\
    Ours & $\mathcal{O}\left( d^{l-1} \right)$ & $\mathcal{O}\left( \sqrt{h \log h} \right)$ & $\mathcal{O}\left( \lambda^{1 + \frac{1}{l}} \xi^{1 + \frac{1}{l}} \sqrt{ \Vert W_1 \Vert_F^2 + \Vert W_2 \Vert_F^2 + \Vert W_l \Vert_F^2 } \right)$ \\
    \bottomrule
\end{tabular}
}
\end{center}
\vspace{-0.1cm}
\caption{Comparison of generalization bounds for GNNs. ``-" means inapplicable.
$l$ is the network depth.
Here $\mathcal{C} = C_{\phi} C_{\rho} C_{g} \Vert W_2 \Vert_2$, 
$\xi = C_{\phi} \frac{\left( d \mathcal{C} \right)^{l-1} - 1}{d \mathcal{C} - 1}$, $\zeta = \min \left( \Vert W_1 \Vert_{2}, \Vert W_2 \Vert_{2}, \Vert W_l \Vert_{2} \right)$, 
and $\lambda = \Vert W_1 \Vert_{2} \Vert W_l \Vert_{2}$.
More details about the comparison can be found in Appendix \ref{sect:appendix_bound_comparison}.} 
\vspace{-0.5cm}
\label{table:comparison_bound}
\end{table}

In this section, we compare our generalization bounds with the ones in the GNN literature and draw connections with existing MLPs/CNNs bounds.

\subsubsection{Comparison with Existing GNN Generalization Bounds}

We compare against the VC-dimension based bound in \citep{scarselli2018vapnik} and the most recent Rademacher complexity based bound in \citep{garg2020generalization}.
Our results are not directly comparable to \citep{du2019graph} since they consider a ``infinite-wide'' class of GNNs constructed based on the neural tangent kernel \citep{jacot2018neural}, whereas we focus on commonly-used GNNs.
Comparisons to \citep{verma2019stability} are also difficult
since: 1) they only show the bound for one graph convolutional layer, \ie, it does not depend on the network depth $l$; and 2) their bound scales as $\mathcal{O}\left( \lambda_{\max}^{2T}/m \right)$, where $T$ is the number of SGD steps and $\lambda_{\max}$ is the maximum absolute eigenvalue of Laplacian $L = D - A$.
Therefore, for certain graphs\footnote{Since $\lambda_{\max} = \max_{v \neq \bm{0}} ({v^{\top}(D - A)v})/({v^{\top}v})$, we have $\lambda_{\max} \ge (D - A)[i,i]$ by choosing $v = \bm{e}_i$, \ie, the $i$-th standard basis. We can pick any node $i$ which has more than 1 neighbor to make $\lambda_{\max} > 1$.}, the generalization gap is monotonically increasing with $T$, which cannot explain the generalization phenomenon. 
We compare different bounds by examining their dependency on three terms:
the maximum node degree, the spectral norm of the learned weights, and the maximum hidden dimension.
We summarize the overall comparison in Table \ref{table:comparison_bound} and leave the details such as
how we convert bounds into our context to Appendix \ref{sect:appendix_bound_comparison}.


\paragraph{Max Node Degree $(d - 1)$:}
The Rademacher complexity bound scales as $\mathcal{O}\left( d^{l-1} \sqrt{\log(d^{2l-3})} \right)$ whereas ours scales as $\mathcal{O}(d^{l-1})$\footnote{Our bound actually scales as $\mathcal{O}\left( d^{{(l+1)(l-2)}/{l}} \right)$ which is upper bounded by $\mathcal{O}\left( d^{l-1} \right)$.}. 
Many real-world graphs such as social networks tend to have large hubs \citep{barabasi2016network}, which lead to very large node degrees.
Thus, our bound would be significantly better in these scenarios.


\paragraph{Max Hidden Dimension $h$:}
Our bound scales as $\mathcal{O}(\sqrt{h \log h})$ which is tighter than the Rademacher complexity bound $\mathcal{O}\left( h \sqrt{\log h} \right)$ and the VC-dimension bound $\mathcal{O}(h^4)$. 

\paragraph{Spectral Norm of Learned Weights:}
As shown in Table \ref{table:comparison_bound}, we cannot compare the dependencies on the spectral norm of learned weights without knowing the actual values of the learned weights. 
Therefore, we perform an empirical study in Section \ref{sect:exp}.

\subsubsection{Connections with Existing Bounds of MLPs/CNNs}

As described above, MLPs/CNNs can be viewed as special cases of GNNs.
In particular, we have two ways to show the inclusion relationship.
First, we can treat each i.i.d. sample as a node and the whole dataset as a graph without edges.
Then conventional tasks (\eg, classification) become node-level tasks (\eg, node classification) on this graph.
Second, we can treat each i.i.d. sample as a single-node graph.
Then conventional tasks (\eg, classification) becomes graph-level tasks (\eg, graph classification).
Since we focus on the graph classification, we adopt the second view.
In particular, MLPs/CNNs with ReLU activations are equivalent to GCNs with the graph Laplacian $\tilde{L} = I$ (hence $d = 1$).
We leave the details of this conversion to Appendix \ref{sect:appendix_connections}.
We restate the PAC-Bayes bound for MLPs/CNNs with ReLU activations in \citep{neyshabur2017pac} as follows,
\begin{center}
\mbox{\small$\displaystyle
	L_{\mathcal{D}, 0}(f_w) \le L_{S, \gamma}(f_w) + \mathcal{O} \left( \sqrt{ \left( { B^2 l^2 h \log(lh) \prod\limits_{i=1}^{l} \Vert W_{i} \Vert_{2}^2 \sum\limits_{i=1}^{l} ( {\Vert W_{i} \Vert_F^2} / {\Vert W_{i} \Vert_{2}^2} ) + \log \frac{ml}{\delta} } \right) /{\gamma^2 m}} \right).
$}    
\end{center}
Comparing it with our bound for GCNs in Theorem \ref{thm:gcn_generalization_bound}, it is clear that we only add a factor $d^{l-1}$ to the first term inside the square root which is due to the underlying graph structure of the data.
If we apply GCNs to single-node graphs, the two bounds coincide since $d = 1$.
Therefore, our Theorem \ref{thm:gcn_generalization_bound} directly generalizes the result in \citep{neyshabur2017pac} to GCNs, which is a strictly larger class of models than MLPs/CNNs with ReLU activations.

\section{Experiments}\label{sect:exp}

\begin{figure}[t]
\centering
\begin{subfigure}{.5\textwidth}
\vspace{-1cm}
\centering
\includegraphics[width=.95\linewidth]{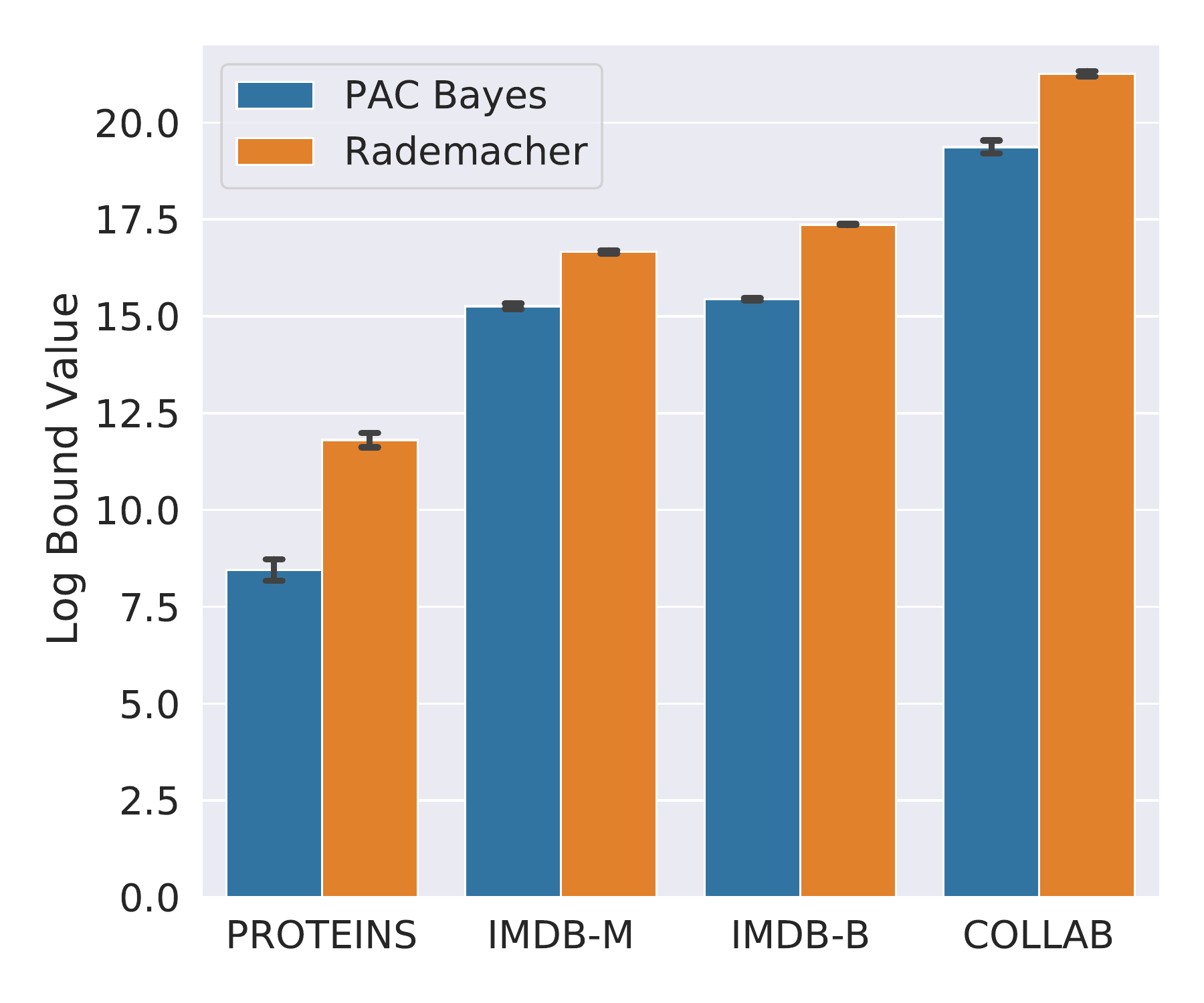}
\vspace{-0.3cm}
\caption{MPGNNs with $l = 2$.}
\label{fig:bound_l_2}
\end{subfigure}%
\begin{subfigure}{.5\textwidth}
\vspace{-1cm}
\centering
\includegraphics[width=.95\linewidth]{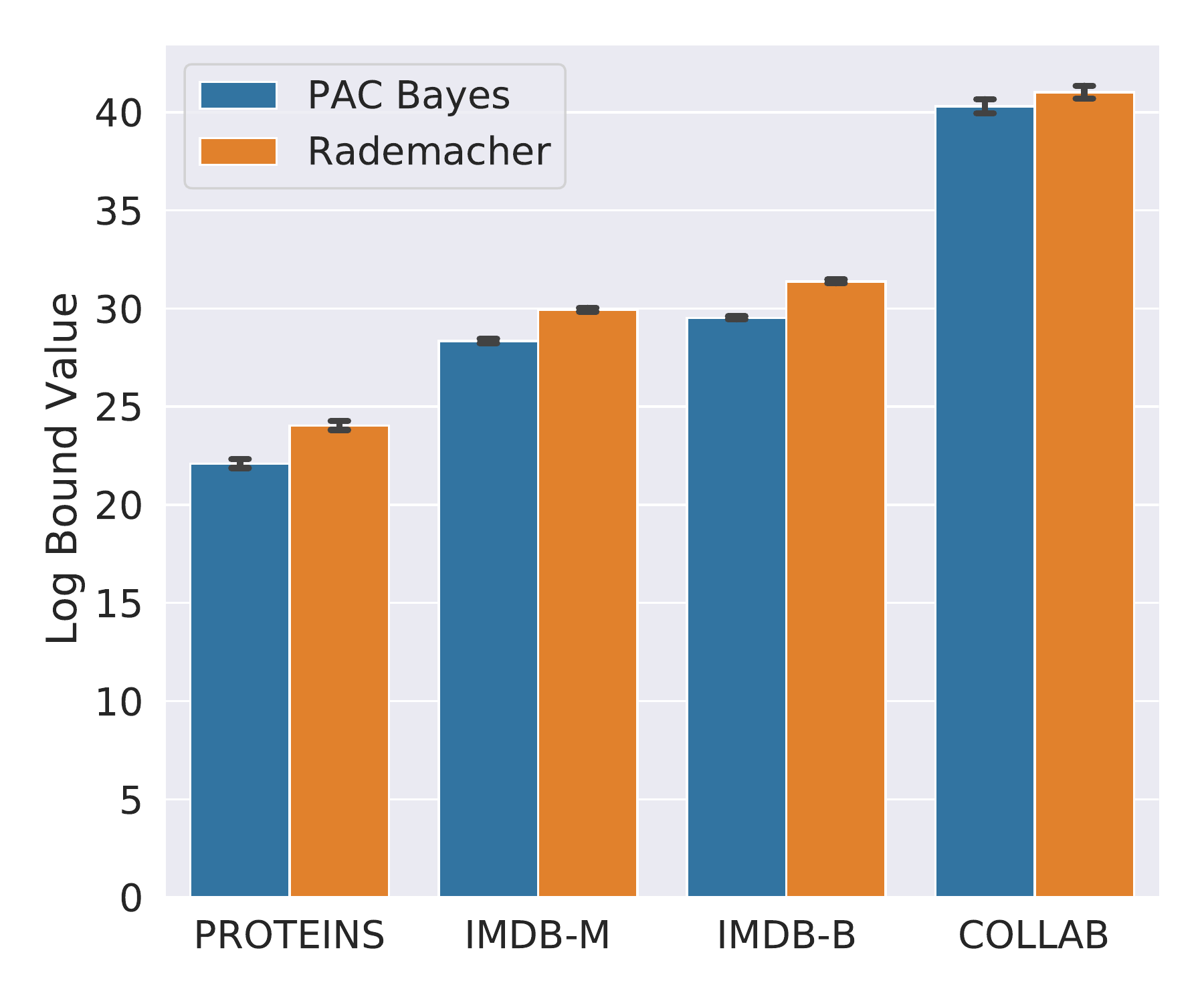}
\vspace{-0.3cm}
\caption{MPGNNs with $l = 4$.}
\label{fig:bound_l_4}
\end{subfigure}
\vspace{-0.3cm}
\caption{Bound evaluations on real-world datasets.
The maximum node degrees (\ie, $d-1$) of four datasets from left to right are: $25$ (PROTEINS), $88$ (IMDB-M), $135$ (IMDB-B), and $491$ (COLLAB).}
\vspace{-0.5cm}
\label{fig:bound_comparison}
\end{figure}

In this section, we perform an empirical comparison between our bound and the Rademacher complexity bound for MPGNNs.
We experiment on 6 synthetic datasets of random graphs (corresponding to 6 random graph models), 3 social network datasets (COLLAB, IMDB-BINARY, IMDB-MULTI), and a bioinformatics dataset PROTEINS from \citep{yanardag2015deep}.
In particular, we create synthetic datasets by generating random graphs from the Erdős–Rényi model and the stochastic block model with different settings (\ie, number of blocks and edge probabilities).
All datesets focus on graph classifications.
We repeat all experiments 3 times with different random initializations and report the means and the standard deviations.
Constants are considered in the bound computation.
More details of the experimental setup, dataset statistics, and the bound computation are provided in Appendix \ref{sect:appendix_exp}.

As shown in Fig. \ref{fig:bound_comparison} and Fig. \ref{fig:bound_comparison_syn}, our bound is mostly tighter than the Rademacher complexity bound with varying message passing steps $l$ on both synthetic and real-world datasets.
Generally, the larger the maximum node degree is, the more our bound improves\footnote{Note that it may not be obvious from the figure as the y axis is in log domain. Please refer to the appendix where the actual bound values are listed in the table.} over the Rademacher complexity bound (\cf, PROTEINS vs. COLLAB).
This could be attributed to the better dependency on $d$ of our bound.
For graphs with large node degrees (\eg, social networks like Twitter have influential users with lots of followers), the gap could be more significant.
Moreover, with the number of steps/layers increasing, our bound also improves more in most cases. 
It may not be clear to read from the figures since the y-axis is in the log domain and its range differ from figure to figure.
We also provide the numerical values of the bound evaluations in the appendix for an exact comparison.
The number of steps is chosen to be no larger than 10 as GNNs are generally shown to perform well with just a few steps/layers \citep{kipf2016semi,jin2018learning}. 
We found $d\mathcal{C} > 1$ and the geometric series $(( d \mathcal{C} )^{l-1} - 1) / (d \mathcal{C} - 1) \gg 1$ on all datasets which imply learned GNNs are not contraction mappings (\ie, $d\mathcal{C} < 1$).
This also explains why both bounds becomes larger with more steps.
At last, we can see that bound values are much larger than $1$ which indicates both bounds are still vacuous, similarly to the cases for regular neural networks in \citep{bartlett2017spectrally,neyshabur2017pac}. 

\begin{figure}[t]
\centering
\begin{subfigure}{.5\textwidth}
\vspace{-1cm}
\centering
\includegraphics[width=.95\linewidth]{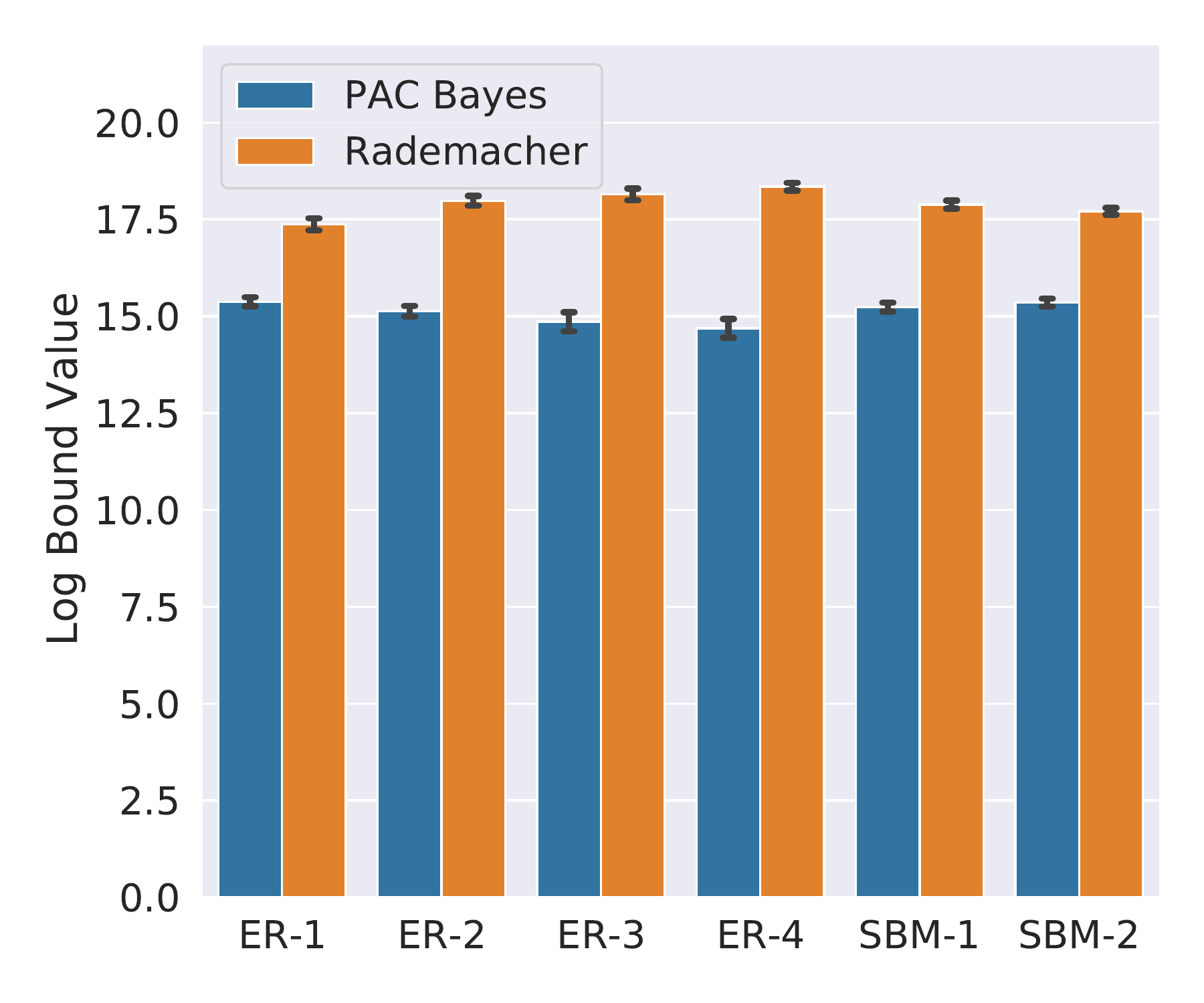}
\vspace{-0.4cm}
\caption{MPGNNs with $l = 2$.}
\label{fig:bound_l_2_syn}
\end{subfigure}%
\begin{subfigure}{.5\textwidth}
\vspace{-1cm}
\centering
\includegraphics[width=.95\linewidth]{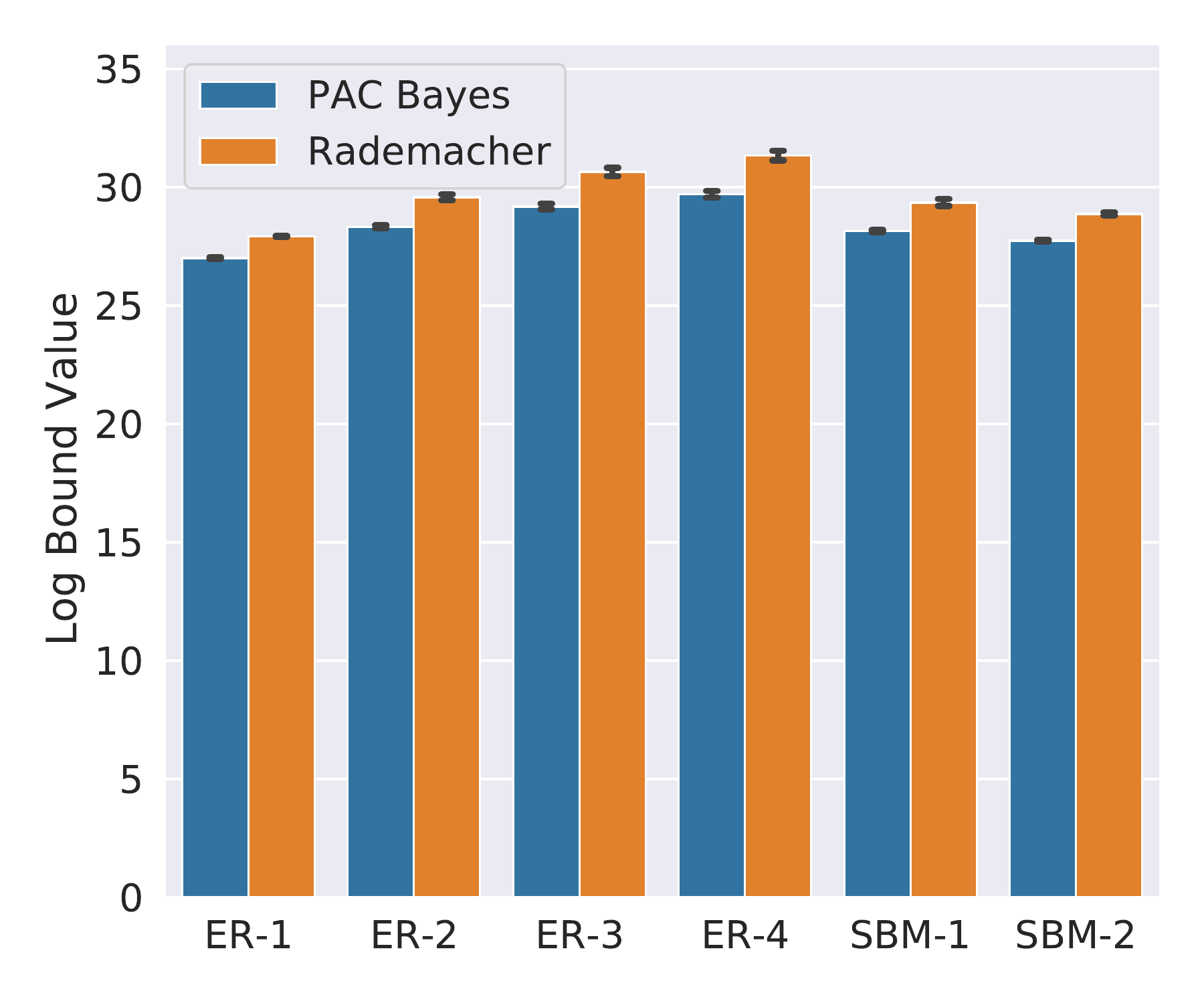}
\vspace{-0.4cm}
\caption{MPGNNs with $l = 4$.}
\label{fig:bound_l_4_syn}
\end{subfigure}
\begin{subfigure}{.5\textwidth}
\centering
\includegraphics[width=.95\linewidth]{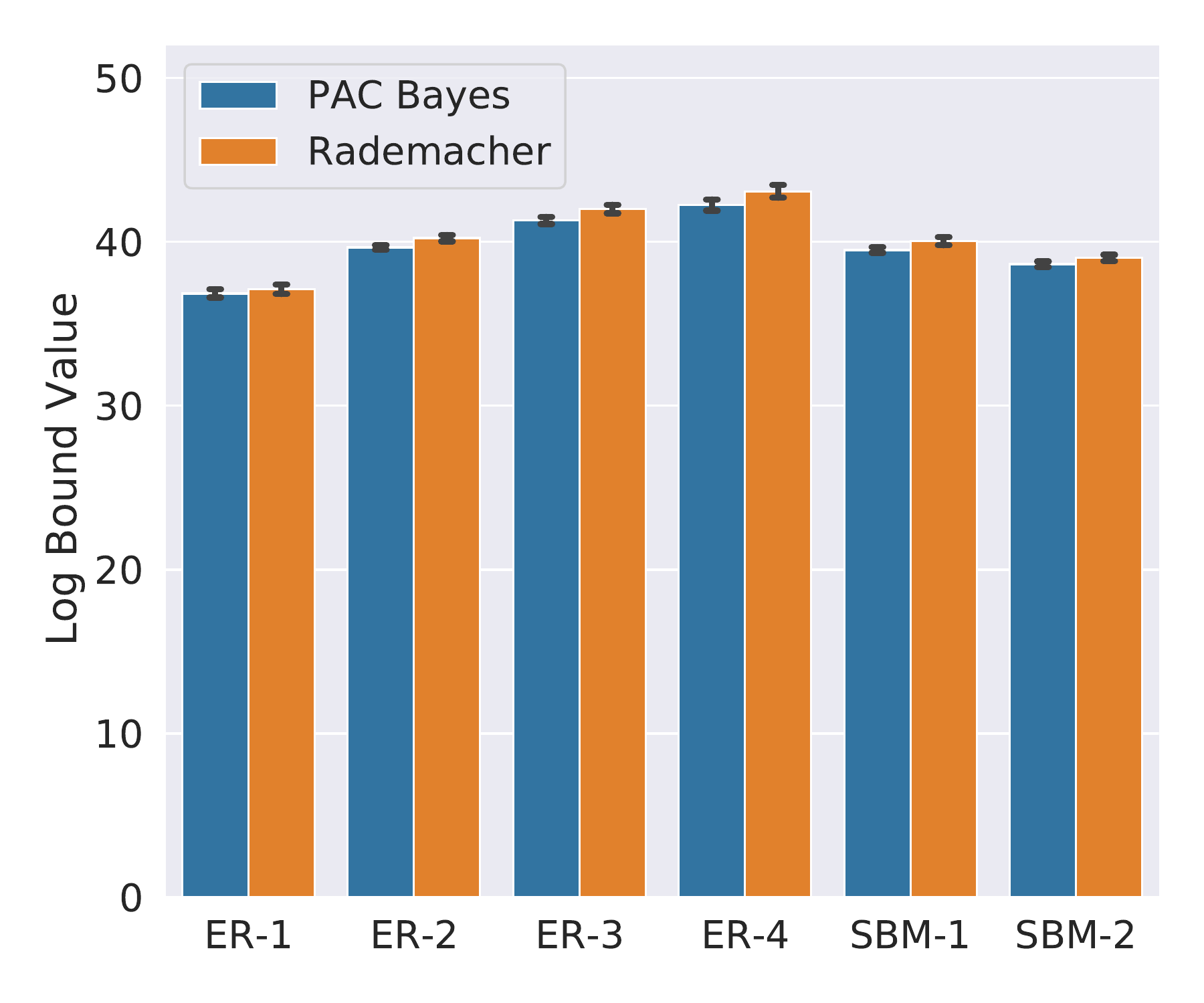}
\vspace{-0.3cm}
\caption{MPGNNs with $l = 6$.}
\label{fig:bound_l_6_syn}
\end{subfigure}%
\begin{subfigure}{.5\textwidth}
\centering
\includegraphics[width=.95\linewidth]{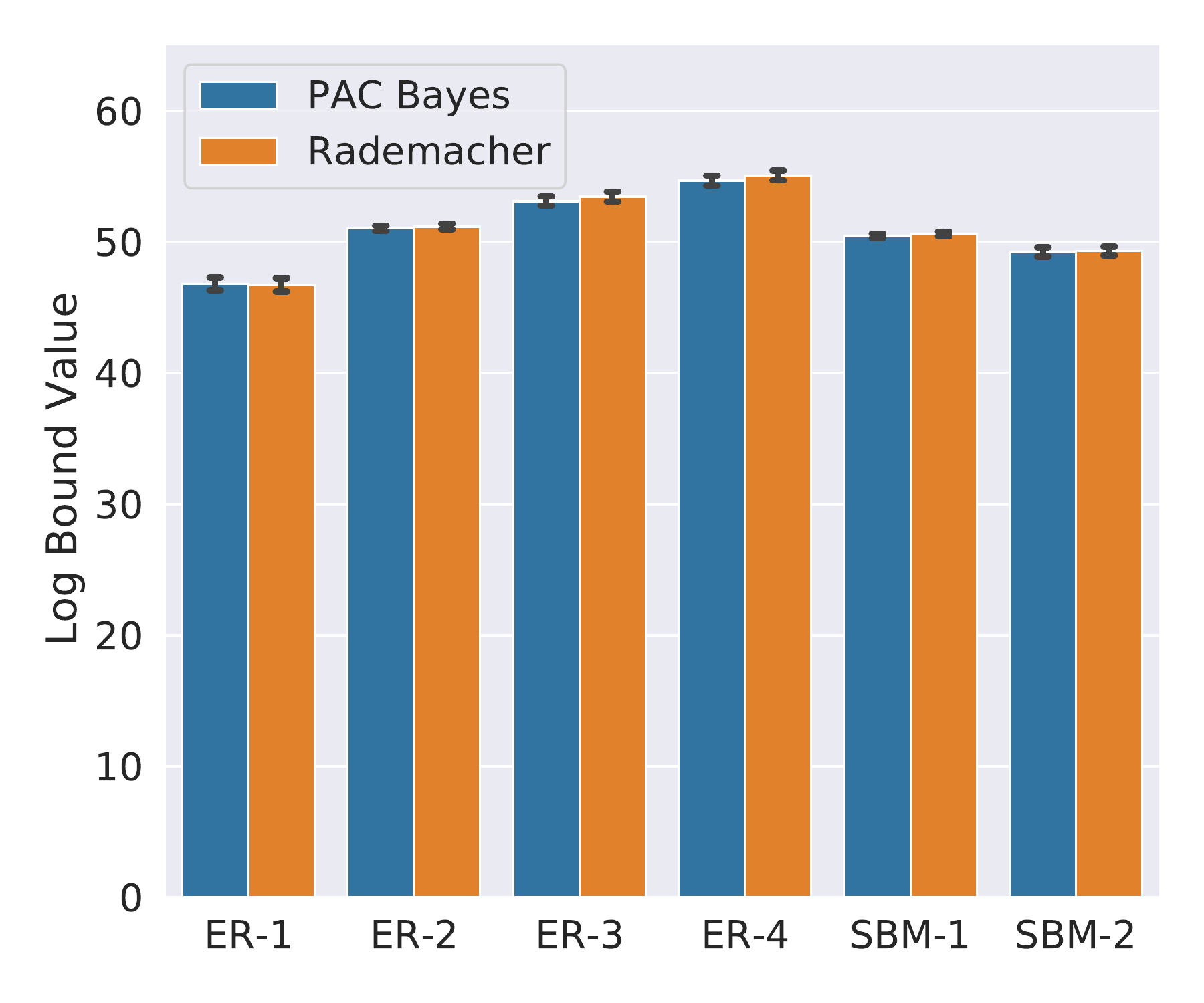}
\vspace{-0.3cm}
\caption{MPGNNs with $l = 8$.}
\label{fig:bound_l_8_syn}
\end{subfigure}
\vspace{-0.3cm}
\caption{Bound evaluations on synthetic datasets.
The maximum node degrees (\ie, $d-1$) of datasets from left to right are: $25$ (ER-1), $ 48$ (ER-2), $69$ (ER-3), $87$ (ER-4), $25$ (SBM-1), and $36$ (SBM-2).
`ER-X' and `SBM-X' denote the Erdős–Rényi model and the stochastic block model with the `X'-th setting respectively. Please refer to the appendix for more details.
}
\vspace{-0.5cm}
\label{fig:bound_comparison_syn}
\end{figure}

\vspace{-0.1cm}
\section{Discussion}\label{sect:discussion}
\vspace{-0.1cm}

In this paper, we present generalization bounds for two primary classes of GNNs, \ie, GCNs and MPGNNs.
We show that the maximum node degree and the spectral norms of learned weights govern the bound for both models.
Our results for GCNs generalize the bounds for MLPs/CNNs in \citep{neyshabur2017pac}, while our results for MPGNNs improve over the state-of-the-art Rademacher complexity bound in \citep{garg2020generalization}.
Our PAC-Bayes analysis can be generalized to other graph problems such as node classification and link prediction since our perturbation analysis bounds the maximum change of any node representation. 
Other loss functions (\eg, ones for regression) could also work in our analysis as long as they are bounded.

However, we are far from being able to explain the practical behaviors of GNNs.
Our bound values are still vacuous as shown in the experiments. 
Our perturbation analysis is in the worst-case sense which may be loose for most cases.
We introduce Gaussian posterior in the PAC-Bayes framework to obtain an analytical form of the KL divergence. 
Nevertheless, the actual posterior induced by the prior and the learning process may likely to be non-Gaussian.
We also do not explicitly consider the optimization algorithm in the analysis which clearly has an impact on the learned weights.

This work leads to a few interesting open problems for future work:
(1) Is the maximum node degree the only graph statistic that has an impact on the generalization ability of GNNs? Investigating other graph statistics may provide more insights on the behavior of GNNs and inspire the development of novel models and algorithms.
(2) Would the analysis still work for other interesting GNN architectures, such as those with attention \citep{velivckovic2017graph} and learnable spectral filters \citep{liao2019lanczos}?
(3) Can recent advancements for MLPs/CNNs, \eg, the compression technique in \citep{arora2018stronger} and data-dependent prior of \citep{parrado2012pac}, help further improve the bounds for GNNs?
(4) What is the impact of the optimization algorithms like SGD on the generalization ability of GNNs? Would graph structures play a role in the analysis of optimization?

\subsubsection*{Acknowledgments}
We would like to thank Vikas K. Garg and Stefanie Jegelka for the valuable feedback and discussion.

\clearpage
\bibliography{reference}
\bibliographystyle{iclr2021_conference}

\clearpage
\appendix
\section{Appendix}

We summarize the notations used throughout the paper in Table \ref{table:symbol_notation}.
In the following sections, we provide proofs of all results in the main text and the additional details.

\begin{table}[!htbp]
\begin{center}
{
\begin{tabular}{c|c}
    \hline
    \toprule
    Symbol & Meaning \\
    \midrule
    \midrule
    $\mathbb{R}$ & the set of real numbers \\ 
    $\mathbb{R}^{m \times n}$ & the set of $m \times n$ real matrices \\ 
    $\mathbb{N}^{+}_{k}$ & the set of first $k$ positive numbers \\
    $\vert \cdot \vert_p$ & the vector p-norm \\
    $\Vert \cdot \Vert_p$ & the operator norm induced by the vector p-norm \\
    $\Vert \cdot \Vert_F$ & the Frobenius norm \\
    $X[i,j]$ & the ($i, j$)-th element of matrix $X$ \\
    $X[i,:]$ & the $i$-th row of matrix $X$ \\
    $X[:, i]$ & the $i$-th column of matrix $X$ \\
    $\bm{1}_n$ & a all-one vector with size $n$ \\
    $A$ & an adjacency matrix \\
    $\tilde{A}$ & an adjacency matrix plus the identity matrix \\
    $B$ & the radius of the $\ell_2$-ball where an input node feature lies \\
    $C_{\text{in}}$ & an incidence matrix of incoming nodes \\
    $C_{\text{out}}$ & an incidence matrix of outgoing nodes \\
    $\phi$, $\rho$, $g$ & non-linearities in MPGNN \\
    $C_{\phi}$, $C_{\rho}$, $C_{g}$ & Lipschitz constants of $\phi$, $\rho$, $g$ under the vector 2-norm \\
    $\mathcal{C}$ & the percolation complexity \\
    $D$ & the degree matrix \\
    $\mathcal{D}$ & the unknown data distribution \\
    $d$ & the maximum node degree plus one \\
    $e$ & the Euler's number \\
    $f_w$ & a model parameterized by vector $w$ \\
    $\mathcal{G}$ & the space of graph \\
    $h$ & the maximum hidden dimension \\
    $H$ & a node representation matrix \\
    $\mathcal{H}$ & the hypothesis/model class \\
    $I$ & the identity matrix \\
    $l$ & the number of graph convolution layers / message passing steps \\
    $L$ & the loss function \\
    $\tilde{L}$ & the Laplacian matrix \\
    $m$ & the number of training samples \\
    $\mathbb{P}$, $\mathbb{E}$ & probability and expectation of a random variable \\
    $P$ & the prior distribution over hypothesis class \\
    $Q$ & the posterior distribution over hypothesis class \\
    $S$ & a set of training samples \\
    $W$ & a weight matrix \\
    $X$ & a node feature matrix where each row corresponds to a node \\
    $\mathcal{X}$ & the space of node feature \\
    $y$ & the graph class label \\
    $\gamma$ & the margin parameter \\
    $z$ & a data triplet $(A, X, y)$ \\
    $\mathcal{Z}$ & the space of data triplet \\
    $\log$ & the natural logarithm \\
    \bottomrule
\end{tabular}
}
\end{center}
\caption{Summary of important notations.} 
\label{table:symbol_notation}
\end{table}

\subsection{PAC Bayes Results}

For completeness, we provide the proofs of the standard PAC-Bayes results as below.

\begin{lemma}\label{lemma:expect_nonneg_RV}
    For non-negative continuous random variables $X$, we have
    \begin{align}
        \mathbb{E} [ X ] = \int_{0}^{\infty} \mathbb{P} (X \ge \nu) \mathrm{d}\nu. \nonumber
    \end{align}
\end{lemma}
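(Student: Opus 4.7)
The plan is to use the standard layer-cake / Fubini argument, which is the cleanest way to obtain the identity for a non-negative random variable. The key observation is that for any $x \ge 0$ one can write $x = \int_0^\infty \mathbf{1}(\nu \le x)\, \mathrm{d}\nu$, so the integrand $X$ can be re-expressed as an integral of indicator functions over the threshold $\nu$.

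First I would write $X = \int_0^\infty \mathbf{1}(\nu \le X)\, \mathrm{d}\nu$ pointwise (which holds because $X \ge 0$), then take expectations on both sides. Next I would swap the expectation and the integral using Fubini–Tonelli, which is justified because the integrand $\mathbf{1}(\nu \le X)$ is non-negative and jointly measurable in $(\nu, \omega)$. This yields $\mathbb{E}[X] = \int_0^\infty \mathbb{E}[\mathbf{1}(\nu \le X)]\, \mathrm{d}\nu = \int_0^\infty \mathbb{P}(X \ge \nu)\, \mathrm{d}\nu$, which is the desired identity.

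Alternatively, since the lemma is stated for continuous random variables, one can proceed directly through the density $f_X$ of $X$: write $\mathbb{E}[X] = \int_0^\infty x\, f_X(x)\, \mathrm{d}x = \int_0^\infty \left(\int_0^x \mathrm{d}\nu\right) f_X(x)\, \mathrm{d}x$ and swap the order of integration to obtain $\int_0^\infty \int_\nu^\infty f_X(x)\, \mathrm{d}x\, \mathrm{d}\nu = \int_0^\infty \mathbb{P}(X \ge \nu)\, \mathrm{d}\nu$. Both routes are essentially equivalent; I would prefer the indicator-function version because it carries over verbatim to discrete or mixed non-negative random variables and only requires Tonelli's theorem, not a density.

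There is no real obstacle here: the only subtlety worth flagging is the use of Tonelli rather than Fubini (valid thanks to non-negativity, so no integrability hypothesis is needed), and a brief remark that the integral on the right-hand side may equal $+\infty$, in which case the identity holds in the extended sense and is consistent with $\mathbb{E}[X] = +\infty$. Since the lemma is invoked elsewhere only as a technical tool inside the PAC-Bayes derivation, a two-line proof built around the indicator-integral trick suffices.
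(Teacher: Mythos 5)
Your proposal is correct and matches the paper's argument: the paper's proof is exactly your second (density-based) route, writing $\mathbb{E}[X]=\int_0^\infty x\,f_X(x)\,\mathrm{d}x$, expanding $x=\int_0^x \mathrm{d}\nu$, and swapping the order of integration. Your preferred indicator-function/Tonelli phrasing is just a measure-theoretically cleaner rendering of the same layer-cake swap, so there is nothing substantive to add.
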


\begin{proof}
    \begin{align}
        \mathbb{E} [ X ] & = \int_{0}^{\infty} X \mathbb{P} (X) \mathrm{d} X \nonumber \\
                         & = \int_{0}^{\infty} \int_{0}^{X} \bm{1} \mathrm{d} \nu \mathbb{P} (X) \mathrm{d} X \nonumber \\
                         & = \int_{0}^{\infty} \int_{0}^{X} \mathbb{P} (X) \mathrm{d} \nu \mathrm{d} X \nonumber \\
                         & = \int_{0}^{\infty} \int_{\nu}^{\infty} \mathbb{P} (X) \mathrm{d} X \mathrm{d} \nu \qquad (\text{region of the integral is the same}) \nonumber \\
                         & = \int_{0}^{\infty} \mathbb{P} (X \ge \nu) \mathrm{d} \nu \nonumber
    \end{align}
\end{proof}

\begin{lemma}\label{lemma:two_side_square_expect_bound}
    [2-side] Let $X$ be a random variable satisfying $\mathbb{P} (X \ge \epsilon) \le e^{-2m\epsilon^2}$ and $\mathbb{P} (X \le -\epsilon) \le e^{-2m\epsilon^2}$ where $m \ge 1$ and $\epsilon > 0$, we have
    \begin{align}
        \mathbb{E} [e^{2(m-1)X^2}] \le 2m. \nonumber
    \end{align}
\end{lemma}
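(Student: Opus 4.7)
The plan is to combine the two one-sided tail bounds into a single two-sided tail bound on $X^2$, apply the integral identity from Lemma \ref{lemma:expect_nonneg_RV}, and then evaluate (or bound) an elementary integral.

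First, I would set $Y = e^{2(m-1)X^2}$ and handle the degenerate case $m=1$ separately (there $Y \equiv 1 \le 2m$ trivially), so assume $m > 1$. The hypotheses give, for every $t > 0$,
\begin{align}
\mathbb{P}(X^2 \ge t) \;=\; \mathbb{P}(X \ge \sqrt{t}) + \mathbb{P}(X \le -\sqrt{t}) \;\le\; 2 e^{-2mt}. \nonumber
\end{align}
Next, since $Y \ge 1$ almost surely, I would split the integral from Lemma \ref{lemma:expect_nonneg_RV} at $\nu = 1$:
\begin{align}
\mathbb{E}[Y] \;=\; \int_0^\infty \mathbb{P}(Y \ge \nu)\,\mathrm{d}\nu \;=\; 1 + \int_1^\infty \mathbb{P}(Y \ge \nu)\,\mathrm{d}\nu. \nonumber
\end{align}
For $\nu \ge 1$, the event $\{Y \ge \nu\}$ is precisely $\{X^2 \ge \tfrac{\log \nu}{2(m-1)}\}$, so the two-sided tail bound above yields $\mathbb{P}(Y \ge \nu) \le 2 \nu^{-m/(m-1)}$.

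Then I would compute the resulting integral directly: since $m/(m-1) > 1$,
\begin{align}
\int_1^\infty 2 \nu^{-m/(m-1)} \,\mathrm{d}\nu \;=\; \frac{2}{m/(m-1) - 1} \;=\; 2(m-1), \nonumber
\end{align}
giving $\mathbb{E}[Y] \le 1 + 2(m-1) = 2m - 1 \le 2m$, as desired. The only subtlety, and the main thing to be careful about, is making sure the exponent $-m/(m-1)$ is strictly less than $-1$ so that the tail integral converges; this is exactly why the lemma pairs $2(m-1)$ in the exponent of $Y$ with the $2m$ in the tail bound. Everything else is routine, and no sophisticated concentration machinery is needed beyond the union bound used to symmetrize the tails of $X$.
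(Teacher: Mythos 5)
Your proof is correct and follows essentially the same route as the paper's: both invoke Lemma \ref{lemma:expect_nonneg_RV}, split the integral at $\nu = 1$, and evaluate the same tail integral $\int_1^\infty \nu^{-m/(m-1)}\,\mathrm{d}\nu = m-1$. The only difference is that you merge the two one-sided tails before integrating and thus pay the $+1$ from the region $\nu \in [0,1]$ only once, giving the marginally sharper constant $2m-1$ in place of the paper's $2m$.
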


\begin{proof}
    If $m = 1$, the inequality holds trivially.
    Let us now consider $m > 1$.
    \begin{align}\label{eq:expectation}
        \mathbb{E} [e^{2(m-1)X^2}] & = \int_{0}^{\infty} \mathbb{P} \left( e^{2(m-1)X^2} \ge \nu \right) \mathrm{d}\nu \qquad (\text{Lemma \ref{lemma:expect_nonneg_RV}}) \nonumber \\
        & = \int_{0}^{\infty} \mathbb{P} \left( X^2 \ge \frac{\log \nu}{2(m-1)} \right) \mathrm{d}\nu \qquad \nonumber \\
        & = \int_{0}^{\infty}  \mathbb{P} \left( X \ge \sqrt{ \frac{\log \nu}{2(m-1)} } \right) \mathrm{d}\nu + \int_{0}^{\infty}  \mathbb{P} \left( X \le - \sqrt{ \frac{\log \nu}{2(m-1)} } \right) \mathrm{d}\nu 
    \end{align}
    
    \begin{align}\label{eq:one_side_integral_bound}
        \int_{0}^{\infty} \mathbb{P} \left( X \ge \sqrt{ \frac{\log \nu}{2(m-1)} } \right) \mathrm{d}\nu 
        & = \int_{0}^{1}  \mathbb{P} \left( X \ge \sqrt{ \frac{\log \nu}{2(m-1)} } \right) \mathrm{d}\nu + \int_{1}^{\infty}  \mathbb{P} \left( X \ge \sqrt{ \frac{\log \nu}{2(m-1)} } \right) \mathrm{d}\nu \nonumber \\
        & \le 1 + \int_{1}^{\infty}  \mathbb{P} \left( X \ge \sqrt{ \frac{\log \nu}{2(m-1)} } \right) \mathrm{d}\nu \nonumber \\
        & \le 1 + \int_{1}^{\infty} e^{-2m \frac{\log \nu}{2(m-1)} } \mathrm{d}\nu \nonumber \\
        & = 1 + \left( \left. -(m-1) \nu^{- \frac{1}{m-1}} \right\rvert_{1}^{\infty} \right) \nonumber \\
        & = m 
    \end{align}
    
    Similarly, we can show that 
    \begin{align}
        \int_{0}^{\infty} \mathbb{P} \left( X \le - \sqrt{ \frac{\log \nu}{2(m-1)} } \right) \mathrm{d}\nu \le m
    \end{align}
    
    Combining Eq. (\ref{eq:expectation}) and Eq. (\ref{eq:one_side_integral_bound}), we finish the proof.
\end{proof}

\begin{reptheorem}{thm:pac_bayes}
    (Two-side) Let $P$ be a prior distribution over $\mathcal{H}$ and let $\delta \in (0, 1)$.
    Then, with probability $1 - \delta$ over the choice of an i.i.d. training set $S$ according to $\mathcal{D}$, for all distributions $Q$ over $\mathcal{H}$ and any $\gamma > 0$, we have 
    \begin{align}
    	L_{\mathcal{D}, \gamma}(Q) \le L_{S, \gamma}(Q) + \sqrt{\frac{\KL(Q \Vert P) + \log \frac{2m}{\delta} }{2(m-1)}} \nonumber
    \end{align}
\end{reptheorem}

\begin{proof}
    Let $\Delta(h) = L_{\mathcal{D}, \gamma}(h) - L_{S, \gamma}(h)$.
    For any function $f(h)$, we have
    \begin{align}\label{eq:expectation_h_upper_bound}
        \mathbb{E}_{h \sim Q} [f(h)] & = \mathbb{E}_{h \sim Q} [ \log e^{f(h)} ] \nonumber \\
        & = \mathbb{E}_{h \sim Q} [ \log e^{f(h)} + \log \frac{Q}{P} + \log \frac{P}{Q} ] \nonumber \\
        & = \KL(Q \Vert P) + \mathbb{E}_{h \sim Q} \left[ \log \left( \frac{P}{Q} e^{f(h)} \right) \right] \nonumber \\
        & \le \KL(Q \Vert P) + \log \mathbb{E}_{h \sim Q} \left[ \frac{P}{Q} e^{f(h)} \right] \qquad (\text{Jensen's inequality}) \nonumber \\
        & = \KL(Q \Vert P) + \log \mathbb{E}_{h \sim P} \left[ e^{f(h)} \right].
    \end{align}
    Let $f(h) = 2(m-1) \Delta(h)^2$.
    We have
    \begin{align}\label{eq:pac_bayes_kl_binary_binomial_tmp_3}
        2(m-1) \mathbb{E}_{h \sim Q} [\Delta(h)]^2 & \le 2(m-1) \mathbb{E}_{h \sim Q} [\Delta(h)^2] \qquad (\text{Jensen's inequality}) \nonumber \\
        & \le \KL(Q \Vert P) + \log \mathbb{E}_{h \sim P} \left[ e^{2(m-1) \Delta(h)^2} \right].
    \end{align}    
    Since $L_\mathcal{D}(h) \in [0, 1]$, based on Hoeffding's inequality, for any $\epsilon > 0$, we have 
    \begin{align}
        \mathbb{P} ( \Delta(h) \ge \epsilon ) \le e^{-2m\epsilon^2} \nonumber \\
        \mathbb{P} ( \Delta(h) \le -\epsilon ) \le e^{-2m\epsilon^2} \nonumber
    \end{align}
    Hence, based on Lemma \ref{lemma:two_side_square_expect_bound}, we have
    \begin{align}
        \mathbb{E}_{S} \left[ e^{2(m-1) \Delta(h)^2} \right] \le 2m ~ & \Rightarrow ~ \mathbb{E}_{h \sim P} \left[ \mathbb{E}_{S} \left[ e^{2(m-1) \Delta(h)^2} \right] \right] \le 2m \nonumber \\ 
        & \Leftrightarrow ~ \mathbb{E}_{S} \left[ \mathbb{E}_{h \sim P} \left[ e^{2(m-1) \Delta(h)^2} \right] \right] \le 2m \nonumber
    \end{align}
    
    Based on Markov's inequality, we have
    \begin{align}\label{eq:pac_bayes_kl_binary_binomial_tmp_4}
        \mathbb{P}\left( \mathbb{E}_{h \sim P} \left[ e^{2(m-1) \Delta(h)^2} \right] \ge \frac{2m}{\delta} \right) \le \frac{ \delta \mathbb{E}_{S} \left[ \mathbb{E}_{h \sim P} \left[ e^{2(m-1) \Delta(h)^2} \right] \right]}{2m} \le \delta.
    \end{align}
    
    Combining Eq. (\ref{eq:pac_bayes_kl_binary_binomial_tmp_3}) and Eq. (\ref{eq:pac_bayes_kl_binary_binomial_tmp_4}), with probability $1 - \delta$, we have
    \begin{align}
        \mathbb{E}_{h \sim Q} [\Delta(h)]^2 \le \frac{\KL(Q \Vert P) + \log \left( \frac{2m}{\delta} \right)}{2(m-1)}
    \end{align}
    which proves the theorem.
\end{proof}

\begin{replemma}{lemma:pac_bayes_deterministic}
    Let $f_w(x): \mathcal{X} \rightarrow \mathbb{R}^k $ be any model with parameters $w$, and $P$ be any distribution on the parameters that is independent of the training data.
    For any $w$, we construct a posterior $Q(w + u)$ by adding any random perturbation $u$ to $w$, \st, $\mathbb{P}(\max_{x \in \mathcal{X}} \vert f_{w + u}(x) - f_w(x) \vert_{\infty} < \frac{\gamma}{4} ) > \frac{1}{2}$.
    Then, for any $\gamma, \delta > 0$, with probability at least $1 - \delta$ over an i.i.d. size-$m$ training set $S$ according to $\mathcal{D}$, for any $w$, we have:
    \begin{align}
        L_{\mathcal{D}, 0}(f_w) \le L_{S, \gamma}(f_w) + \sqrt{\frac{2\KL(Q(w+u) \Vert P) + \log \frac{8m}{\delta} }{2(m-1)}} \nonumber
    \end{align}
\end{replemma}

\begin{proof}
    Let $\tilde{w} = w + u$.
    Let $\mathcal{C}$ be the set of perturbation with the following property,
    \begin{align}
        \mathcal{C} = \left\{ w^{\prime} \middle \vert \max_{x \in \mathcal{X}} \vert f_{w^{\prime}}(x) - f_w(x) \vert_{\infty} < \frac{\gamma}{4} \right\}.
    \end{align}
    
    $\tilde{w} = w + u$ ($w$ is deterministic and $u$ is stochastic) is distributed according to $Q(\tilde{w})$. 
    We now construct a new posterior $\tilde{Q}$ as follows, 
    \begin{align}
        \tilde{Q}(\tilde{w}) = 
        \begin{cases}
            \frac{1}{Z}Q(\tilde{w}) & \tilde{w} \in \mathcal{C} \\
            0 & \tilde{w} \in \bar{\mathcal{C}}.
        \end{cases} 
    \end{align}
    Here $Z = \int_{\tilde{w} \in \mathcal{C}} \mathrm{d} Q(\tilde{w}) = \underset{\tilde{w} \sim Q}{\mathbb{P}}(\tilde{w} \in \mathcal{C})$ and $\bar{\mathcal{C}}$ is the complement set of $\mathcal{C}$.
    We know from the assumption that $Z > \frac{1}{2}$.
    Therefore, for any $\tilde{w} \sim \tilde{Q}$, we have
    \begin{align}\label{eq:pac_bayes_deterministic_tmp_1}
        & \left. \max_{i \in \mathbb{N}_{k}^{+}, j \in \mathbb{N}_{k}^{+}, x \in \mathcal{X}} \middle\vert \left\vert f_{\tilde{w}}(x)[i] - f_{\tilde{w}}(x)[j] \right\vert -  \vert f_{w}(x)[i] - f_{w}(x)[j] \vert \right\vert \nonumber \\
        \le & \left. \max_{i \in \mathbb{N}_{k}^{+}, j \in \mathbb{N}_{k}^{+}, x \in \mathcal{X}} \middle\vert f_{\tilde{w}}(x)[i] - f_{\tilde{w}}(x)[j] - f_{w}(x)[i] + f_{w}(x)[j] \right\vert \nonumber \\
        \le & \left. \max_{i \in \mathbb{N}_{k}^{+}, j \in \mathbb{N}_{k}^{+}, x \in \mathcal{X}} \middle\vert f_{\tilde{w}}(x)[i] - f_{w}(x)[i] \middle\vert + \middle\vert f_{\tilde{w}}(x)[j] - f_{w}(x)[j] \right\vert \nonumber \\
        \le & \left. \max_{i \in \mathbb{N}_{k}^{+}, x \in \mathcal{X}} \middle\vert f_{\tilde{w}}(x)[i] - f_{w}(x)[i] \middle\vert + \max_{j \in \mathbb{N}_{k}^{+}, x \in \mathcal{X}} \middle\vert f_{\tilde{w}}(x)[j] - f_{w}(x)[j] \right\vert \nonumber \\
        & < \frac{\gamma}{4} + \frac{\gamma}{4} = \frac{\gamma}{2}
    \end{align}
    Recall that 
    \begin{align}
        L_{\mathcal{D}}(f_w, 0) & = \underset{z \sim \mathcal{D}}{\mathbb{P}} \left( f_w(x)[y] \le \max_{j \neq y}f_w(x)[j] \right) \nonumber \\
        L_{\mathcal{D}}(f_{\tilde{w}}, \frac{\gamma}{2}) & = \underset{z \sim \mathcal{D}}{\mathbb{P}} \left( f_{\tilde{w}}(x)[y] \le \frac{\gamma}{2} + \max_{j \neq y}f_{\tilde{w}}(x)[j] \right). \nonumber
    \end{align}
    Denoting $j_1^{\ast} = \argmax_{j \neq y}f_{\tilde{w}}(x)[j]$ and $j_2^{\ast} = \argmax_{j \neq y}f_w(x)[j]$, from Eq. (\ref{eq:pac_bayes_deterministic_tmp_1}), we have
    \begin{align}\label{eq:pac_bayes_deterministic_tmp_2}
        & \left\vert f_{\tilde{w}}(x)[y] - f_{\tilde{w}}(x)[j_2^{\ast}] - f_w(x)[y] + f_w(x)[j_2^{\ast}] \middle\vert < \frac{\gamma}{2} \right. \nonumber \\
        \Rightarrow ~ & f_{\tilde{w}}(x)[y] - f_{\tilde{w}}(x)[j_2^{\ast}] < f_w(x)[y] - f_w(x)[j_2^{\ast}] + \frac{\gamma}{2} 
    \end{align}
    Note that since $f_{\tilde{w}}(x)[j_1^{\ast}] \ge f_{\tilde{w}}(x)[j_2^{\ast}]$, we have
    \begin{align}
        f_{\tilde{w}}(x)[y] - f_{\tilde{w}}(x)[j_1^{\ast}] & \le f_{\tilde{w}}(x)[y] - f_{\tilde{w}}(x)[j_2^{\ast}] \nonumber \\
        & \le f_w(x)[y] - f_w(x)[j_2^{\ast}] + \frac{\gamma}{2} \qquad (\text{Eq. (\ref{eq:pac_bayes_deterministic_tmp_2})}) \nonumber
    \end{align}
    Therefore, we have
    \begin{align}
        f_w(x)[y] - f_w(x)[j_2^{\ast}] \le 0 ~ \Rightarrow ~ f_{\tilde{w}}(x)[y] - f_{\tilde{w}}(x)[j_1^{\ast}] \le \frac{\gamma}{2}, \nonumber
    \end{align}
    which indicates $\underset{z \sim \mathcal{D}}{\mathbb{P}} \left( f_w(x)[y] \le f_w(x)[j_2^{\ast}] \right) \le \underset{z \sim \mathcal{D}}{\mathbb{P}} \left( f_{\tilde{w}}(x)[y] \le f_{\tilde{w}}(x)[j_1^{\ast}] + \frac{\gamma}{2} \right)$, or equivalently
    \begin{align}\label{eq:pac_bayes_deterministic_tmp_3}
        L_{\mathcal{D}, 0}(f_w) \le L_{\mathcal{D}, \frac{\gamma}{2}}(f_{\tilde{w}}).
    \end{align}
    Note that this holds for any perturbation $\tilde{w} \sim \tilde{Q}$.
    
    Again, recall that 
    \begin{align}
        L_{\mathcal{D}, \frac{\gamma}{2}}(f_{\tilde{w}}) & = \underset{z \sim \mathcal{D}}{\mathbb{P}} \left( f_{\tilde{w}}(x)[y] \le \frac{\gamma}{2} + \max_{j \neq y}f_{\tilde{w}}(x)[j] \right) \nonumber \\
        L_{\mathcal{D}, \gamma}(f_w) & = \underset{z \sim \mathcal{D}}{\mathbb{P}} \left( f_w(x)[y] \le \gamma + \max_{j \neq y}f_w(x)[j] \right) \nonumber
    \end{align}
    From Eq. (\ref{eq:pac_bayes_deterministic_tmp_1}), we have
    \begin{align}\label{eq:pac_bayes_deterministic_tmp_4}
        & \left\vert f_{\tilde{w}}(x)[y] - f_{\tilde{w}}(x)[j_1^{\ast}] - f_w(x)[y] + f_w(x)[j_1^{\ast}] \middle\vert < \frac{\gamma}{2} \right. \nonumber \\
        \Rightarrow ~ & f_w(x)[y] - f_w(x)[j_1^{\ast}] < f_{\tilde{w}}(x)[y] - f_{\tilde{w}}(x)[j_1^{\ast}] + \frac{\gamma}{2} 
    \end{align}
    Note that since $f_{{w}}(x)[j_2^{\ast}] \ge f_{{w}}(x)[j_1^{\ast}]$, we have
    \begin{align}
        f_{{w}}(x)[y] - f_{{w}}(x)[j_2^{\ast}] & \le f_{{w}}(x)[y] - f_{{w}}(x)[j_1^{\ast}] \nonumber \\
        & \le f_{\tilde{w}}(x)[y] - f_{\tilde{w}}(x)[j_1^{\ast}] + \frac{\gamma}{2} \qquad (\text{Eq. (\ref{eq:pac_bayes_deterministic_tmp_4})}) \nonumber
    \end{align}
    Therefore, we have
    \begin{align}
        f_{\tilde{w}}(x)[y] - f_{\tilde{w}}(x)[j_1^{\ast}] \le \frac{\gamma}{2} ~ \Rightarrow ~ f_{{w}}(x)[y] - f_{{w}}(x)[j_2^{\ast}] \le \gamma, \nonumber
    \end{align}
    which indicates $L_{\mathcal{D}, \frac{\gamma}{2}}(f_{\tilde{w}}) \le L_{\mathcal{D}, \gamma}(f_{{w}})$. 
    Therefore, from the perspective of the empirical estimation of the probability, for any $\tilde{w} \sim \tilde{Q}$, we almost surely have
    \begin{align}\label{eq:pac_bayes_deterministic_tmp_5}
        L_{S, \frac{\gamma}{2}}(f_{\tilde{w}}) \le L_{S, \gamma}(f_{{w}}).    
    \end{align}
    Now with probability at least $1 - \delta$, we have
    \begin{align}\label{eq:pac_bayes_deterministic_tmp_6}
        L_{\mathcal{D}, 0}(f_w) & \le \mathbb{E}_{\tilde{w} \sim \tilde{Q}} \left[ L_{\mathcal{D}, \frac{\gamma}{2}}(f_{\tilde{w}}) \right] \qquad (\text{Eq. (\ref{eq:pac_bayes_deterministic_tmp_3})}) \nonumber \\
        & \le \mathbb{E}_{\tilde{w} \sim \tilde{Q}} \left[ L_{S, \frac{\gamma}{2}}(f_{\tilde{w}}) \right] + \sqrt{\frac{\KL(\tilde{Q} \Vert P) + \log \frac{2m}{\delta} }{2(m-1)}} \qquad (\text{Theorem \ref{thm:pac_bayes}}) \nonumber \\
        & \le L_{S, \gamma}(f_{w}) + \sqrt{\frac{\KL(\tilde{Q} \Vert P) + \log \frac{2m}{\delta} }{2(m-1)}} \qquad (\text{Eq. (\ref{eq:pac_bayes_deterministic_tmp_5})})
    \end{align}
    Note that 
    {
    \begin{align}\label{eq:pac_bayes_deterministic_tmp_7}
        \KL(Q \Vert P) & = \underset{{\tilde{w} \in \mathcal{C}}}{\int} Q \log \frac{Q}{P} \mathrm{d}\tilde{w} + \underset{\tilde{w} \in \bar{\mathcal{C}}}{\int} Q \log \frac{Q}{P} \mathrm{d}\tilde{w} \nonumber \\
        & = \underset{\tilde{w} \in \mathcal{C}}{\int} \frac{QZ}{Z} \log \frac{Q}{ZP} \mathrm{d}\tilde{w} + \underset{\tilde{w} \in \mathcal{C}}{\int} Q \log Z \mathrm{d}\tilde{w} \nonumber \\ 
        & \quad + \underset{\tilde{w} \in \bar{\mathcal{C}}}{\int} \frac{Q(1 - Z)}{1 - Z} \log \frac{Q}{(1 - Z)P} \mathrm{d}\tilde{w} + \underset{\tilde{w} \in \bar{\mathcal{C}}}{\int} Q \log (1-Z) \mathrm{d}\tilde{w} \nonumber \\
        & = Z\KL(\tilde{Q} \Vert P) + (1-Z)\KL(\bar{Q} \Vert P) - H(Z),
    \end{align}}%
    where $\bar{Q}$ denotes the normalized density of $Q$ restricted to $\bar{\mathcal{C}}$.
    $H(Z)$ is the entropy of a Bernoulli random variable with parameter $Z$.
    Since we know $\frac{1}{2} \le Z \le 1$ from the beginning, $0 \le H(Z) \le \log2$, and $\KL$ is nonnegative, from Eq. (\ref{eq:pac_bayes_deterministic_tmp_7}), we have
    \begin{align}\label{eq:pac_bayes_deterministic_tmp_8}
        \KL(\tilde{Q} \Vert P) & = \frac{1}{Z} \left[ \KL(Q \Vert P) + H(Z) - (1-Z)\KL(\bar{Q} \Vert P) \right] \nonumber \\
        & \le \frac{1}{Z} \left[ \KL(Q \Vert P) + H(Z) \right] \nonumber \\
        & \le 2 \KL(Q \Vert P) + 2\log2.
    \end{align}
    Combining Eq. (\ref{eq:pac_bayes_deterministic_tmp_6}) and Eq. (\ref{eq:pac_bayes_deterministic_tmp_8}), we have 
    \begin{align}
        L_{\mathcal{D}, 0}(f_w) \le L_{S, \gamma}(f_{w}) + \sqrt{\frac{\KL(Q \Vert P) + \frac{1}{2} \log \frac{8m}{\delta} }{m-1}},
    \end{align}
    which finishes the proof.
\end{proof}

\subsection{Graph Results}

In this part, we provide a result on the graph Laplacian used by GCNs along with the proof.
It is used in the perturbation analysis of GCNs.

\begin{lemma}\label{lemma:symmetric_laplacian_norm}
Let $A$ be the binary adjacency matrix of an arbitrary simple graph $G = (V, E)$ and $\tilde{A} = A + I$.
We define the graph Laplacian $L = D^{-\frac{1}{2}} \tilde{A} D^{-\frac{1}{2}}$ where $D$ is the degree matrix of $\tilde{A}$. Then we have $\Vert L \Vert_{1} = \Vert L \Vert_{\infty} \le \sqrt{d}$, $\Vert L \Vert_{2} \le 1$, and $\Vert L \Vert_{F} \le \sqrt{r}$ where $r$ is the rank of $L$ and $d-1$ is the maximum node degree of $G$.
\end{lemma}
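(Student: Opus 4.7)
The plan is to exploit the structure of $\tilde{A} = A + I$, which ensures $D_{ii} \in [1, d]$ for all $i$, together with the identity $L[i,j] = \tilde{A}[i,j]/\sqrt{D_{ii} D_{jj}}$. The proof will handle the four norms in order of increasing difficulty: first the $\infty$- and $1$-norms via a direct row/column sum computation (they coincide because $L$ is symmetric), then the spectral norm via a Rayleigh-quotient argument, and finally the Frobenius norm as a trivial consequence of the spectral bound combined with the rank.

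First I would expand the $\infty$-norm as $\Vert L \Vert_\infty = \max_i \sum_j \tilde{A}[i,j]/\sqrt{D_{ii} D_{jj}} = \max_i D_{ii}^{-1/2} \sum_{j: \tilde{A}[i,j]=1} D_{jj}^{-1/2}$. Using $D_{jj} \ge 1$ and noting that the number of nonzero summands equals $D_{ii}$, this is bounded by $\max_i D_{ii}^{-1/2} \cdot D_{ii} = \max_i \sqrt{D_{ii}} \le \sqrt{d}$. Symmetry of $L$ immediately gives $\Vert L \Vert_1 = \Vert L \Vert_\infty \le \sqrt{d}$.

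Next, for the spectral norm I would apply Rayleigh quotient: $\Vert L \Vert_2 = \max_{x \neq 0} |x^\top L x|/|x|_2^2$. Substituting $y = D^{-1/2} x$ yields $x^\top L x = y^\top \tilde{A} y$ and $x^\top x = y^\top D y$. Then the elementary inequality $2|y_i y_j| \le y_i^2 + y_j^2$ applied to the pairs $(i,j)$ with $\tilde{A}[i,j] = 1$ gives
\begin{align}
    \bigl| y^\top \tilde{A} y \bigr| \le \tfrac{1}{2} \sum_{i,j} \tilde{A}[i,j](y_i^2 + y_j^2) = \sum_i D_{ii} y_i^2 = y^\top D y, \nonumber
\end{align}
so $|x^\top L x| \le x^\top x$ and $\Vert L \Vert_2 \le 1$. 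Finally, the Frobenius bound follows from $\Vert L \Vert_F^2 = \sum_i \sigma_i^2(L) \le r \cdot \Vert L \Vert_2^2 \le r$, where $r = \mathrm{rank}(L)$.

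There is no serious obstacle here; the proof is essentially a bookkeeping exercise. The only subtlety worth flagging is that the self-loops in $\tilde{A}$ (not present in the combinatorial graph Laplacian) are essential: they guarantee $D_{ii} \ge 1$, which is what powers both the $\sqrt{D_{ii}} \le \sqrt{d}$ step in the $\infty$-norm bound and the clean $y^\top \tilde{A} y \le y^\top D y$ inequality in the spectral bound, where the diagonal terms of $\tilde{A}$ cancel exactly against the self-contributions to $D$.
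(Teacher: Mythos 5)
Your proof is correct and follows essentially the same route as the paper's: the row-sum bound using $D_{jj}\ge 1$ for the $\infty$/$1$-norms, a Rayleigh-quotient argument with the AM--GM inequality $2|y_iy_j|\le y_i^2+y_j^2$ for the spectral norm, and $\Vert L\Vert_F\le\sqrt{r}\,\Vert L\Vert_2$ for the Frobenius norm. The only cosmetic difference is that you substitute $y=D^{-1/2}x$ before applying AM--GM, whereas the paper applies it directly to the terms $x_ix_j/\sqrt{D_iD_j}$; the two computations are identical in substance.
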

\begin{proof}
    First, $\tilde{A}$ is symmetric and element-wise nonnegative. 
    Denoting $n = \vert V \vert$, we have $\tilde{A} \in \mathbb{R}^{n \times n}$, $D_{i} = \sum_{j=1}^{n} \tilde{A}[i,j]$, and $1 \le D_{i} \le d, \forall i \in \mathbb{N}^{+}_{n}$.
    It is easy to show that $L[i,j] = {\tilde{A}[i,j]}/{\sqrt{D_{i}D_{j}}}$.
    
    For the infinity norm and 1-norm, we have $\Vert L \Vert_{1} = \Vert L^{\top} \Vert_{\infty} = \Vert L \Vert_{\infty}$.
    Moreover,
    \begin{align}
        \Vert L \Vert_{\infty} & = \max_{i \in \mathbb{N}^{+}_{n}} \sum_{j=1}^{n} \left\vert L[i,j] \right\vert \nonumber \\
        & = \max_{i \in \mathbb{N}^{+}_{n}} \sum_{j=1}^{n} \frac{\tilde{A}[i,j]}{\sqrt{D_{i}D_{j}}} \nonumber \\
        & \le \max_{i \in \mathbb{N}^{+}_{n}} \frac{1}{\sqrt{D_{i}}} \sum_{j=1}^{n} \tilde{A}[i,j] \nonumber \\
        & = \max_{i \in \mathbb{N}^{+}_{n}} \sqrt{D_{i}} \nonumber \\
        & \le \sqrt{d} \label{eq:symmetric_laplacian_tmp_2}
    \end{align}

    
    For the spectral norm, based on the definition, we have
    \begin{align}
        \Vert L \Vert_{2} = \sup_{x \neq 0} \frac{ \vert Lx \vert_2 }{ \vert x \vert_2} = \sigma_{\max},
    \end{align}
    where $\sigma_{\max}$ is the maximum singular value of $L$.
    Since $L$ is symmetric, we have $\sigma_{i} = \vert \lambda_{i} \vert$ where $\lambda_{i}$ is the $i$-th eigenvalue of $L$.
    Hence, $\sigma_{\max} = \max_{i} \vert \lambda_{i} \vert$.
    From Raylaigh quotient and Courant–Fischer minimax theorem, we have 
    \begin{align}
        \Vert L \Vert_{2} & = \max_{i} \vert \lambda_{i} \vert = \max_{x \neq 0} \left\vert \frac{ x^{\top} L x }{ x^{\top} x} \right\vert \nonumber \\
        & = \max_{x \neq 0} \left\vert \frac{ \sum_{i=1}^{n} \sum_{j=1}^{n} L[i,j] x_i x_j }{ \sum_{i=1}^{n} x_i^2 } \right\vert \nonumber \\
        & = \max_{x \neq 0} \left\vert \frac{ \sum_{i=1}^{n} \sum_{j=1}^{n} {\tilde{A}[i,j] x_i x_j}/{\sqrt{D_{i}D_{j}}} }{ \sum_{i=1}^{n} x_i^2 } \right\vert \nonumber \\
        & = \max_{x \neq 0} \left\vert \frac{ \sum_{(i, j) \in \tilde{E}} {x_i x_j}/{\sqrt{D_{i}D_{j}}} }{ \sum_{i=1}^{n} x_i^2 } \right\vert \nonumber \\
        & \le \max_{x \neq 0} \left\vert \frac{ \frac{1}{2} \sum_{(i, j) \in \tilde{E}} \left( {x_i^2}/{D_{i}} + {x_j^2}/{D_{j}} \right) }{ \sum_{i=1}^{n} x_i^2 } \right\vert \nonumber \\
        & = \max_{x \neq 0} \left\vert \frac{ \sum_{(i, j) \in \tilde{E}} {x_i^2}/{D_{i}} }{ \sum_{i=1}^{n} x_i^2 } \right\vert 
        = \max_{x \neq 0} \left\vert \frac{ \sum_{i=1}^{n} {x_i^2}}{ \sum_{i=1}^{n} x_i^2 } \right\vert = 1,
    \end{align}    
    where $\tilde{E}$ is the union of the set of edges $E$ in the original graph and the set of self-loops.
    For Frobenius norm, we have $\Vert L \Vert_{F} \le \sqrt{r} \Vert L \Vert_2 \le \sqrt{r}$ where $r$ is the rank of $L$.
\end{proof}

\subsection{GCN Results}

In this part, we provide the proofs of the main results regarding GCNs.

\begin{replemma}{lemma:gcn_perturbation}
    (GCN Perturbation Bound) For any $B > 0, l > 1$, let $f_w \in \mathcal{H}: \mathcal{X} \times \mathcal{G} \rightarrow \mathbb{R}^{K}$ be a $l$-layer GCN. Then for any $w$, and $x \in \mathcal{X}_{B,h_0}$, and any perturbation $u = \text{vec}( \{U_i\}_{i=1}^{l})$ such that $\forall i \in \mathbb{N}^{+}_{l}$, $\Vert U_i \Vert_{2} \le \frac{1}{l} \Vert W_i \Vert_{2}$, the change in the output of GCN is bounded as,
    \begin{align}
        \left\vert f_{w+u}(X, A) - f_w(X, A) \right\vert_{2} \le eB d^{\frac{l-1}{2}} \left( \prod_{i=1}^{l} \Vert W_{i} \Vert_{2} \right) \sum_{k=1}^{l} \frac{\Vert U_{k} \Vert_{2}}{\Vert W_{k} \Vert_{2}} \nonumber
    \end{align}
\end{replemma}

\begin{proof}
    We first perform the recursive perturbation analysis on node representations of all layers except the last one, \ie, the readout layer.
    Then we derive the bound for the graph representation of the last readout layer.
    
    \paragraph{Perturbation Analysis on Node Representations.}
    In GCN, for any layer $j < l$ besides the last readout one, the node representations are,
    \begin{align}
        f_{w}^{j}(X, A) = H_{j} = \sigma_{j} \left( \tilde{L} H_{j-1} W_{j} \right).
    \end{align}
    We add perturbation $u$ to the weights $w$, \ie, for the $j$-th layer, the perturbed weights are $W_{j} + U_{j}$.
    For the ease of notation, we use the superscript of prime to denote the perturbed node representations, \eg, $H_{j}^{\prime} = f_{w + u}^{j}(X, A)$.
    Let $\Delta_j = f_{w + u}^{j}(X, A) - f_{w}^{j}(X, A) = H_{j}^{\prime} - H_{j}$. 
    Note that $\Delta_j \in \mathbb{R}^{n \times h_j}$.
    Let $\Psi_{j} = \max\limits_{i} \left\vert \Delta_j[i, :] \right\vert_{2} = \max\limits_{i} \left\vert H_j^{\prime}[i, :] - H_j[i, :] \right\vert_{2}$ and $\Phi_j = \max\limits_{i} \left\vert H_j[i, :] \right\vert_{2}$.
    We denote the $u_j^{\ast} = \argmax\limits_{i} \left\vert \Delta_j[i, :] \right\vert_{2}$ and $v_j^{\ast} = \argmax\limits_{i} \left\vert H_j[i, :] \right\vert_{2}$.

    \paragraph{Upper Bound on the Max Node Representation}
    For any layer $j < l$, we can derive an upper bound on the maximum (\wrt $\ell_2$ norm) node representation as follows,
    \begin{align}\label{eq:gcn_output_bound}
        \Phi_j & = \max_{i} \left\vert H_j[i, :] \right\vert_{2} = \left\vert \left( \sigma_j \left( \tilde{L} H_{j-1} W_{j} \right) \right) [v_{j}^{\ast}, :] \right\vert_{2} = \left\vert \sigma_j \left( \left( \tilde{L} H_{j-1} W_{j} \right) [v_{j}^{\ast}, :] \right) \right\vert_{2} \nonumber \\
        & \le \left\vert \left( \tilde{L} H_{j-1} W_{j} \right) [v_{j}^{\ast}, :] \right\vert_{2} \qquad (\text{Lipschitz property of ReLU under vector 2-norm}) \nonumber \\
        & = \left\vert \left( \tilde{L} H_{j-1} \right) [v_{j}^{\ast}, :] W_{j} \right\vert_{2} \nonumber \\
        & \le \left\vert \left( \tilde{L} H_{j-1} \right) [v_{j}^{\ast}, :] \right\vert_{2} \left\Vert W_{j} \right\Vert_{2} = \left\vert \sum\nolimits_{k \in \mathcal{N}_{v_{j}^{\ast}}} \tilde{L}[v_{j}^{\ast}, k] H_{j-1}[k , :] \right\vert_{2} \left\Vert W_{j} \right\Vert_{2} \nonumber \\
        & \le \sum\nolimits_{k \in \mathcal{N}_{v_{j}^{\ast}}} \tilde{L}[v_{j}^{\ast}, k] \left\vert H_{j-1}[k , :] \right\vert_{2} \left\Vert W_{j} \right\Vert_{2} \nonumber \\
        & \le \sum\nolimits_{k \in \mathcal{N}_{v_{j}^{\ast}}} \tilde{L}[v_{j}^{\ast}, k]  \Phi_{j-1} \left\Vert W_{j} \right\Vert_{2} \qquad \left( \text{since } \forall i, \left\vert H_{j-1}[i, :] \right\vert_{2} \le \Phi_{j-1} \right) \nonumber \\
        & \le d^{\frac{1}{2}} \Phi_{j-1} \left\Vert W_{j} \right\Vert_{2} \nonumber \\
        & \le d^{\frac{j}{2}} \Phi_{0} \prod_{i=1}^{j} \Vert W_{i} \Vert_{2} \qquad (\text{unroll the recursion}) \nonumber \\
        & \le d^{\frac{j}{2}} B \prod_{i=1}^{j} \Vert W_{i} \Vert_{2}, 
    \end{align}
    where in the last inequality we use the fact $\Phi_0 = \max_{i} \left\vert X[i, :] \right\vert_{2} \le B$ based on the assumption \ref{asm:bound_input}.
    $\mathcal{N}_{v_{j}^{\ast}}$ is the set of neighboring nodes (including itself) of node $v_{j}^{\ast}$.
    In the third from the last inequality, we use the Lemma \ref{lemma:symmetric_laplacian_norm} to derive the following fact that $\forall i$,
    \begin{align}\label{eq:proof_gcn_perturbation_laplacian}
        \sum_{k \in \mathcal{N}_{i}} \tilde{L}[i, k] = \sum_{k \in \mathcal{N}_{i}} \left\vert \tilde{L}[i, k] \right\vert \le \left\Vert \tilde{L} \right\Vert_{\infty} \le \sqrt{d}. 
    \end{align}
    
    \paragraph{Upper Bound on the Max Change of Node Representation.}
    For any layer $j < l$, we can derive an upper bound on the maximum (\wrt $\ell_2$ norm) change between the representations with and without the weight perturbation for any node as follows,
    {\small
    \begin{align}\label{eq:gcn_difference_output_recursion}
        \Psi_{j} & = \max\limits_{i} \left\vert H_j^{\prime}[i, :] - H_j[i, :] \right\vert_{2} = \left \vert \sigma_{j} \left( \tilde{L} H_{j-1}^{\prime} (W_{j} + U_{j}) \right) [u_{j}^{\ast}, :] - \sigma_{j} \left( \tilde{L} H_{j-1} W_{j} \right) [u_{j}^{\ast}, :] \right \vert_{2}  \qquad \nonumber \\
        & \le \left \vert \left( \tilde{L} H_{j-1}^{\prime} (W_{j} + U_{j}) \right) [u_{j}^{\ast}, :] - \left( \tilde{L} H_{j-1} W_{j} \right) [u_{j}^{\ast}, :] \right \vert_{2} \qquad (\text{Lipschitz property of ReLU}) \nonumber \\
        & = \left\vert \left( (\tilde{L} H_{j-1}^{\prime}) [u_{j}^{\ast}, :] \right) (W_{j} + U_{j}) -  \left( (\tilde{L} H_{j-1} ) [u_{j}^{\ast}, :] \right) W_{j} \right\vert_{2} \nonumber \\
        & = \left\vert \left( \left( (\tilde{L} H_{j-1}^{\prime}) [u_{j}^{\ast}, :] \right) - \left( (\tilde{L} H_{j-1} ) [u_{j}^{\ast}, :] \right) \right) (W_{j} + U_{j}) + \left( (\tilde{L} H_{j-1} ) [u_{j}^{\ast}, :] \right) U_{j}  \right\vert_{2} \nonumber \\
        & = \left\vert \left( \sum_{k \in \mathcal{N}_{u_{j}^{\ast}}} \tilde{L}[u_{j}^{\ast},k] \left(  H_{j-1}^{\prime} [k, :]
        - H_{j-1} [k, :] \right) \right) (W_{j} + U_{j}) + \left( \sum_{k \in \mathcal{N}_{u_{j}^{\ast}}} \tilde{L}[u_{j}^{\ast},k]  H_{j-1} [k, :] \right) U_{j}  \right\vert_{2} \nonumber \\
        & \le \left\vert \sum\nolimits_{k \in \mathcal{N}_{u_{j}^{\ast}}} \tilde{L}[u_{j}^{\ast},k]  \left( H_{j-1}^{\prime} [k, :] - H_{j-1} [k, :] \right) \right\vert_2 \left\Vert W_{j} + U_{j} \right\Vert_2 + \left\vert \sum\nolimits_{k \in \mathcal{N}_{u_{j}^{\ast}}} \tilde{L}[u_{j}^{\ast},k]  H_{j-1} [k, :] \right\vert_2 \left\Vert U_{j} \right\Vert_{2} \nonumber \\
        & \le \sum\nolimits_{k \in \mathcal{N}_{u_{j}^{\ast}}} \tilde{L}[u_{j}^{\ast},k]  \left\vert H_{j-1}^{\prime} [k, :] - H_{j-1} [k, :] \right\vert_2 \left\Vert W_{j} + U_{j} \right\Vert_2 + \sum\nolimits_{k \in \mathcal{N}_{u_{j}^{\ast}}} \tilde{L}[u_{j}^{\ast},k]  \left\vert H_{j-1} [k, :] \right\vert_2 \left\Vert U_{j} \right\Vert_{2} \nonumber \\
        & \le \sum\nolimits_{k \in \mathcal{N}_{u_{j}^{\ast}}} \tilde{L}[u_{j}^{\ast},k]  \Psi_{j-1} \left\Vert W_{j} + U_{j} \right\Vert_2 + \sum\nolimits_{k \in \mathcal{N}_{u_{j}^{\ast}}} \tilde{L}[u_{j}^{\ast},k]  \Phi_{j-1} \left\Vert U_{j} \right\Vert_{2} \nonumber \\
        & \le \sqrt{d} \Psi_{j-1} \left\Vert W_{j} + U_{j} \right\Vert_2 + \sqrt{d} \Phi_{j-1} \left\Vert U_{j} \right\Vert_{2},
    \end{align}}%
    where in the second from the last inequality we use the fact $\forall k$, $\left\vert H_{j-1}^{\prime}[k, :] - H_{j-1}[k, :] \right\vert_{2} \le \Psi_{j-1}$ and $\forall k$, $\left\vert H_{j-1}[k, :] \right\vert_{2} \le \Phi_{j-1}$.
    In the last inequality, we again use the fact in Eq. (\ref{eq:proof_gcn_perturbation_laplacian}).
    We can simplify the notations in Eq. (\ref{eq:gcn_difference_output_recursion}) as $\Psi_{j} \le a_{j-1} \Psi_{j-1} + b_{j-1}$ where $a_{j-1} = \sqrt{d} \Vert W_{j} + U_{j} \Vert_{2}$ and $b_{j-1} = \sqrt{d} \Phi_{j-1} \Vert U_{j} \Vert_{2}$.
    Since $\Delta_0 = X - X = \bm{0}$, we have $\Psi_0 = 0$.
    It is straightforward to work out the recursion as,
    \begin{align}\label{eq:gcn_difference_output_recursion_general}
        \Psi_{j} & \le \sum_{k=0}^{j-1} b_k \left( \prod_{i=k+1}^{j-1} a_i \right) = \sum_{k=0}^{j-1} d^{\frac{1}{2}} \Phi_{k} \Vert U_{k+1} \Vert_{2} \left( \prod_{i=k+1}^{j-1} d^{\frac{1}{2}} \Vert W_{i+1} + U_{i+1} \Vert_{2} \right) \nonumber \\
        & = \sum_{k=0}^{j-1} d^{\frac{j-k}{2}} \Phi_{k} \Vert U_{k+1} \Vert_{2} \left( \prod_{i=k+2}^{j} \Vert W_{i} + U_{i} \Vert_{2} \right).
    \end{align}
    
    Based on Eq. (\ref{eq:gcn_output_bound}), we can instantiate the bound in Eq. (\ref{eq:gcn_difference_output_recursion_general}) as
    \begin{align}\label{eq:gcn_difference_output_final}
        \Psi_{j} & \le \sum_{k=0}^{j-1} d^{\frac{j-k}{2}} \Phi_{k} \Vert U_{k+1} \Vert_{2} \left( \prod_{i=k+2}^{j} \Vert W_{i} + U_{i} \Vert_{2} \right) \nonumber \\
        & \le \sum_{k=0}^{j-1} d^{\frac{j-k}{2}} \left( d^{\frac{k}{2}} B \prod_{i=1}^{k} \Vert W_{i} \Vert_{2} \right) \Vert U_{k+1} \Vert_{2} \left( \prod_{i=k+2}^{j} \left( \Vert W_{i} \Vert_{2} + \Vert U_{i} \Vert_{2} \right) \right) \nonumber \\
        & \le B \sum_{k=0}^{j-1} d^{\frac{j}{2}} \left( \prod_{i=1}^{k} \Vert W_{i} \Vert_{2} \right) \Vert U_{k+1} \Vert_{2} \left( \prod_{i=k+2}^{j} \left( 1 + \frac{1}{l} \right) \Vert W_{i} \Vert_{2} \right)  \nonumber \\
        & = B \sum_{k=0}^{j-1} d^{\frac{j}{2}} \left( \prod_{i=1}^{k+1} \Vert W_{i} \Vert_{2} \right) \frac{\Vert U_{k+1} \Vert_{2}}{\Vert W_{k+1} \Vert_{2}} \left( \prod_{i=k+2}^{j} \left( 1 + \frac{1}{l} \right) \Vert W_{i} \Vert_{2} \right)  \nonumber \\
        & = B d^{\frac{j}{2}} \left( \prod_{i=1}^{j} \Vert W_{i} \Vert_{2} \right) \sum_{k=0}^{j-1} \frac{\Vert U_{k+1} \Vert_{2}}{\Vert W_{k+1} \Vert_{2}} \left( 1 + \frac{1}{l} \right)^{j-k-1} \nonumber \\
        & \le B d^{\frac{j}{2}} \left( \prod_{i=1}^{j} \Vert W_{i} \Vert_{2} \right) \sum_{k=1}^{j} \frac{\Vert U_{k} \Vert_{2}}{\Vert W_{k} \Vert_{2}} \left( 1 + \frac{1}{l} \right)^{j-k}
    \end{align}
    
    \paragraph{Final Bound on the Readout Layer}
    Now let us consider the average readout function in the last layer, \ie, the $l$-th layer.
    Based on Eq. (\ref{eq:gcn_output_bound}) and Eq. (\ref{eq:gcn_difference_output_final}), we can bound the change of GCN's output with and without the weight perturbation as follows,
    {\normalsize
    \begin{align}
        \vert \Delta_l \vert_{2} = & \left\vert \frac{1}{n} \bm{1}_n H_{l-1}^{\prime} (W_{l} + U_{l})  - \frac{1}{n} \bm{1}_n H_{l-1} W_{l} \right\vert_{2} \nonumber \\
        = & \left \vert \frac{1}{n} \bm{1}_n \Delta_{l-1} (W_l + U_l) + \frac{1}{n} \bm{1}_n H_{l-1} U_l \right \vert_{2} \nonumber \\
        \le & \frac{1}{n} \left\vert \bm{1}_n \Delta_{l-1} (W_l + U_l) \right\vert_{2} + \frac{1}{n} \left\vert \bm{1}_n H_{l-1} U_l \right\vert_{2} \nonumber \\
        \le & \frac{1}{n} \Vert W_l + U_l \Vert_{2} \vert \bm{1}_n \Delta_{l-1} \vert_{2} + \frac{1}{n} \Vert U_l \Vert_{2} \vert \bm{1}_n H_{l-1} \vert_{2}  \nonumber \\
        = & \frac{1}{n} \left\Vert W_l + U_l \right\Vert_{2} \left\vert \sum_{i=1}^{n} \Delta_{l-1}[i, :] \right\vert_{2} + \frac{1}{n} \left\Vert U_l \right\Vert_{2} \left\vert \sum_{i=1}^{n} H_{l-1}[i, :] \right\vert_{2}  \nonumber \\
        \le & \frac{1}{n} \left\Vert W_l + U_l \right\Vert_{2} \left( \sum_{i=1}^{n} \left\vert \Delta_{l-1}[i, :] \right\vert_{2} \right) + \frac{1}{n} \left\Vert U_l \right\Vert_{2} \left( \sum_{i=1}^{n} \left\vert H_{l-1}[i, :] \right\vert_{2} \right) \nonumber \\
        \le & \left\Vert W_l + U_l \right\Vert_{2} \Psi_{l-1} + \left\Vert U_l \right\Vert_{2} \Phi_{l-1} \nonumber \\
        \le & \left\Vert W_l + U_l \right\Vert_{2} B d^{\frac{l-1}{2}} \left( \prod_{i=1}^{l-1} \Vert W_{i} \Vert_{2} \right) \sum_{k=1}^{l-1} \frac{\Vert U_{k} \Vert_{2}}{\Vert W_{k} \Vert_{2}} \left( 1 + \frac{1}{l} \right)^{l-1-k} + \left\Vert U_l \right\Vert_{2} B d^{\frac{l-1}{2}} \prod_{i=1}^{l-1} \Vert W_{i} \Vert_{2} \nonumber \\
        = & B d^{\frac{l-1}{2}} \left[ \left\Vert W_l + U_l \right\Vert_{2} \left( \prod_{i=1}^{l-1} \Vert W_{i} \Vert_{2} \right) \sum_{k=1}^{l-1} \frac{\Vert U_{k} \Vert_{2}}{\Vert W_{k} \Vert_{2}} \left( 1 + \frac{1}{l} \right)^{l-1-k} + \left\Vert U_l \right\Vert_{2} \prod_{i=1}^{l-1} \Vert W_{i} \Vert_{2} \right] \nonumber \\
        = & B d^{\frac{l-1}{2}} \left( \prod_{i=1}^{l} \Vert W_{i} \Vert_{2} \right) \left[ \frac{\left\Vert W_l + U_l \right\Vert_{2}}{\left\Vert W_l \right\Vert_{2}} \sum_{k=1}^{l-1} \frac{\Vert U_{k} \Vert_{2}}{\Vert W_{k} \Vert_{2}} \left( 1 + \frac{1}{l} \right)^{l-1-k} + \frac{\left\Vert U_l \right\Vert_{2}}{\left\Vert W_l \right\Vert_{2}} \right] \nonumber \\
        \le & B d^{\frac{l-1}{2}} \left( \prod_{i=1}^{l} \Vert W_{i} \Vert_{2} \right) \left[ \left( 1 + \frac{1}{l} \right) \sum_{k=1}^{l-1} \frac{\Vert U_{k} \Vert_{2}}{\Vert W_{k} \Vert_{2}} \left( 1 + \frac{1}{l} \right)^{l-1-k} + \frac{\left\Vert U_l \right\Vert_{2}}{\left\Vert W_l \right\Vert_{2}} \right] \nonumber \\
        = & B d^{\frac{l-1}{2}} \left( \prod_{i=1}^{l} \Vert W_{i} \Vert_{2} \right) \left( 1 + \frac{1}{l} \right)^{l} \left[ \sum_{k=1}^{l-1} \frac{\Vert U_{k} \Vert_{2}}{\Vert W_{k} \Vert_{2}} \left( 1 + \frac{1}{l} \right)^{-k} + \frac{\left\Vert U_l \right\Vert_{2}}{\left\Vert W_l \right\Vert_{2}} \left( 1 + \frac{1}{l} \right)^{-l} \right] \nonumber \\
        \le & e B d^{\frac{l-1}{2}} \left( \prod_{i=1}^{l} \Vert W_{i} \Vert_{2} \right) \left[ \sum_{k=1}^{l} \frac{\Vert U_{k} \Vert_{2}}{\Vert W_{k} \Vert_{2}} \right] \qquad \left( \text{Use } 1 \le \left(1 + \frac{1}{l} \right)^l \le e \right)
    \end{align}
    }
    which proves the lemma.
\end{proof}

\begin{reptheorem}{thm:gcn_generalization_bound}
    (GCN Generalization Bound) For any $B > 0, l > 1$, let $f_w \in \mathcal{H}: \mathcal{X} \times \mathcal{G} \rightarrow \mathbb{R}^{K}$ be a $l$-layer GCN. Then for any $\delta, \gamma > 0$, with probability at least $1 - \delta$ over the choice of an i.i.d. size-$m$ training set $S$ according to $\mathcal{D}$, for any $w$, we have,
    \begin{align}
    	L_{\mathcal{D}, 0}(f_w) \le L_{S, \gamma}(f_w) + \mathcal{O} \left( \sqrt{\frac{ B^2 d^{l-1} l^2 h \log(lh) \prod\limits_{i=1}^{l} \Vert W_{i} \Vert_{2}^2 \sum\limits_{i=1}^{l} \frac{\Vert W_{i} \Vert_F^2}{\Vert W_{i} \Vert_{2}^2} + \log \frac{ml}{\delta} }{\gamma^2 m}} \right) \nonumber
    \end{align}
\end{reptheorem}

\begin{proof}
    Let $\beta = \left( \prod_{i=1}^{l} \Vert W_i \Vert_{2} \right)^{1/l}$.
    We normalize the weights as $\tilde{W}_i = \frac{\beta}{\Vert W_i \Vert_{2}} W_i$.
    Due to the homogeneity of ReLU, \ie, $a\phi(x) = \phi(ax)$, $\forall a \ge 0$, we have $f_w = f_{\tilde{w}}$.
    We can also verify that $\prod_{i=1}^{l} \Vert W_i \Vert_{2} = \prod_{i=1}^{l} \Vert \tilde{W}_i \Vert_{2}$ and $\Vert W_i \Vert_{{F}} / \Vert W_i \Vert_{2} = \Vert \tilde{W}_i \Vert_{{F}} / \Vert \tilde{W}_i \Vert_{2}$, \ie, the terms appear in the bound stay the same after applying the normalization.
    Therefore, w.l.o.g., we assume that the norm is equal across layers, \ie, $\forall i$, $\Vert W_i \Vert_{2} = \beta$.
    
    Consider the prior $P = \mathcal{N}(0, \sigma^2 I)$ and the random perturbation $u \sim \mathcal{N}(0, \sigma^2 I)$.
    Note that the $\sigma$ of the prior and the perturbation are the same and will be set according to $\beta$.
    More precisely, we will set the $\sigma$ based on some approximation $\tilde{\beta}$ of $\beta$ since the prior $P$ can not depend on any learned weights directly.
    The approximation $\tilde{\beta}$ is chosen to be a cover set which covers the meaningful range of $\beta$. 
    For now, let us assume that we have a fix $\tilde{\beta}$ and consider $\beta$ which satisfies $\vert \beta - \tilde{\beta} \vert \le \frac{1}{l} \beta$.
    Note that this also implies
    \begin{align}\label{eq:beta_interval}
        \vert \beta - \tilde{\beta} \vert \le \frac{1}{l} \beta & ~\Rightarrow~ \left(1 - \frac{1}{l} \right) \beta \le \tilde{\beta} \le \left(1 + \frac{1}{l} \right) \beta \nonumber \\ 
        & ~\Rightarrow~ \left(1 - \frac{1}{l} \right)^{l-1} \beta^{l-1} \le \tilde{\beta}^{l-1} \le \left(1 + \frac{1}{l} \right)^{l-1} \beta^{l-1}  \nonumber \\
        & ~\Rightarrow~ \left(1 - \frac{1}{l} \right)^{l} \beta^{l-1} \le \tilde{\beta}^{l-1} \le \left(1 + \frac{1}{l} \right)^{l} \beta^{l-1}  \nonumber \\
        & ~\Rightarrow~ \frac{1}{e} \beta^{l-1} \le \tilde{\beta}^{l-1} \le e \beta^{l-1}  
    \end{align}

    From \cite{tropp2012user}, for $U_i \in \mathbb{R}^{h \times h}$ and $U_i \sim \mathcal{N}(\bm{0}, \sigma^2 I)$, we have,
    \begin{align}
        \mathbb{P}\left( \Vert U_i \Vert_{2} \ge t \right) \le 2he^{-t^2 /2h\sigma^2}.
    \end{align}
    Taking a union bound, we have 
    \begin{align}\label{eq:gcn_generalization_bound_proof_tmp_1}
        \mathbb{P}\left( \Vert U_1 \Vert_{2} < t ~\&~ \cdots ~\&~ \Vert U_l \Vert_{2} < t \right) & = 1 - \mathbb{P}\left( \exists i, \Vert U_i \Vert_{2} \ge t \right) \nonumber \\
        & \ge 1 - \sum_{i=1}^{l} \mathbb{P}\left( \Vert U_i \Vert_{2} \ge t \right) \nonumber \\
        & \ge 1 - 2lhe^{-t^2 /2h\sigma^2}.
    \end{align}
    Setting $2lhe^{-t^2 /2h\sigma^2} = \frac{1}{2}$, we have $t = \sigma \sqrt{2h\log(4lh)}$. 
    This implies that the probability that the spectral norm of the perturbation of any layer is no larger than $\sigma \sqrt{2h\log(4lh)}$ holds with probability at least $\frac{1}{2}$.
    Plugging this bound into Lemma \ref{lemma:gcn_perturbation}, we have with probability at least $\frac{1}{2}$,
    \begin{align}
        \left\vert f_{w+u}(X, A) - f_w(X, A) \right\vert_{2} & \le eB d^{\frac{l-1}{2}} \left( \prod_{i=1}^{l} \Vert W_{i} \Vert_{2} \right) \sum_{k=1}^{l} \frac{\Vert U_{k} \Vert_{2}}{\Vert W_{k} \Vert_{2}} \nonumber \\
        & = eB d^{\frac{l-1}{2}} \beta^l \sum_{k=1}^{l} \frac{\Vert U_{k} \Vert_{2}}{\beta} \nonumber \\
        & \le eB d^{\frac{l-1}{2}} \beta^{l-1} l\sigma \sqrt{2h\log(4lh)} \nonumber \\
        & \le e^2 B d^{\frac{l-1}{2}} \tilde{\beta}^{l-1} l\sigma \sqrt{2h\log(4lh)} \le \frac{\gamma}{4},
    \end{align}
    where we can set $\sigma = \frac{\gamma}{42 B d^{\frac{l-1}{2}} \tilde{\beta}^{l-1} l \sqrt{h\log(4lh)}}$ to get the last inequality.
    Note that Lemma \ref{lemma:gcn_perturbation} also requires $\forall i \in \mathbb{N}^{+}_{l}$, $\Vert U_i \Vert_{2} \le \frac{1}{l} \Vert W_i \Vert_{2}$.
    The requirement is satisfied if $\sigma \le \frac{\beta}{l \sqrt{2h \log(4lh)}}$ which in turn can be satisfied if 
    \begin{align}\label{eq:gcn_condition_perturbation_lemma}
        \frac{\gamma}{4e B d^{\frac{l-1}{2}} \beta^{l-1} l \sqrt{2h\log(4lh)}} \le \frac{\beta}{l \sqrt{2h \log(4lh)}},
    \end{align} 
    since the chosen value of $\sigma$ satisfies $\sigma \le \frac{\gamma}{4e B d^{\frac{l-1}{2}} \beta^{l-1} l \sqrt{2h\log(4lh)}}$.
    Note that Eq. (\ref{eq:gcn_condition_perturbation_lemma}) is equivalent to $\frac{\gamma}{4eB} d^{\frac{1-l}{2}} \le \beta^{l}$.
    We will see how to satisfy this condition later.
    
    We now compute the KL term in the PAC-Bayes bound in Lemma \ref{lemma:pac_bayes_deterministic}.
    \begin{align}
        \text{KL}\left( Q \Vert P \right) & = \frac{\vert w \vert_2^2}{2 \sigma^2} 
        = \frac{42^2 B^2 d^{l-1} \tilde{\beta}^{2l-2} l^2 h \log(4lh) }{2\gamma^2} \sum_{i=1}^{l} \Vert W_i \Vert_F^2 \nonumber \\
        & \le \mathcal{O}\left( \frac{B^2 d^{l-1} \beta^{2l} l^2 h \log(lh) }{\gamma^2} \sum_{i=1}^{l} \frac{\Vert W_i \Vert_F^2}{\beta^2} \right) \nonumber \\
        & \le \mathcal{O}\left( B^2 d^{l-1} l^2 h \log(lh) \frac{\prod_{i=1}^{l} \Vert W_i \Vert_{2}^2}{\gamma^2} \sum_{i=1}^{l} \frac{\Vert W_i \Vert_F^2}{\Vert W_i \Vert_{2}^2} \right).
    \end{align}
    From Lemma \ref{lemma:pac_bayes_deterministic}, fixing any $\tilde{\beta}$, with probability $1 - \delta$ and for all $w$ such that $\vert \beta - \tilde{\beta} \vert \le \frac{1}{l}\beta$, we have,
    \begin{align}\label{eq:gcn_generalization_bound_proof_tmp_2}
    	L_{\mathcal{D}, 0}(f_w) \le L_{S, \gamma}(f_w) + \mathcal{O} \left( \sqrt{\frac{ B^2 d^{l-1} l^2 h \log(lh) \prod\limits_{i=1}^{l} \Vert W_{i} \Vert_{2}^2 \sum\limits_{i=1}^{l} \frac{\Vert W_{i} \Vert_F^2}{\Vert W_{i} \Vert_{2}^2} + \log \frac{m}{\delta} }{\gamma^2 m}} \right).
    \end{align}
    Finally, we need to consider multiple choices of $\tilde{\beta}$ so that for any $\beta$, we can bound the generalization error like Eq. (\ref{eq:gcn_generalization_bound_proof_tmp_2}).
    First, we only need to consider values of $\beta$ in the following range,
    \begin{align}\label{eq:gcn_generalization_bound_proof_tmp_3}
        \frac{1}{\sqrt{d}} \left( \frac{\gamma \sqrt{d}}{2B} \right)^{1/l} \le \beta \le \frac{1}{\sqrt{d}} \left( \frac{\gamma \sqrt{md}}{2B} \right)^{1/l},
    \end{align}
    since otherwise the bound holds trivially as $L_{\mathcal{D}, 0}(f_w) \le 1$ by definition.
    Note that the lower bound in Eq. (\ref{eq:gcn_generalization_bound_proof_tmp_3}) ensures that Eq. (\ref{eq:gcn_condition_perturbation_lemma}) holds which in turn justifies the applicability of Lemma \ref{lemma:gcn_perturbation}.
    If $\beta < \frac{1}{\sqrt{d}} \left( \frac{\gamma \sqrt{d}}{2B} \right)^{1/l}$, then for any $(X, A)$ and any $j \in \mathbb{N}^{+}_{K}$, $\vert f(X, A)[j] \vert \le \frac{\gamma}{2}$.
    To see this, we have,
    \begin{align}
        \left\vert f_{w}(X, A)[j] \right\vert & \le \left\vert f_{w}(X, A) \right\vert_{2} = \vert \frac{1}{n} \bm{1}_n H_{l-1} W_l \vert_{2} \nonumber \\
        & \le \frac{1}{n} \vert \bm{1}_n H_{l-1} \vert_{2} \Vert W_l \Vert_{2} \nonumber \\
        & \le \Vert W_l \Vert_{2} \max_{i} \vert H_{l-1}[i, :] \vert_2 \nonumber \\
        & \le B d^{\frac{l-1}{2}} \prod_{i=1}^{l} \Vert W_{i} \Vert_{2} =  d^{\frac{l-1}{2}} \beta^l B \qquad (\text{Use Eq. (\ref{eq:gcn_output_bound})}) \nonumber \\
        & = d^{\frac{l-1}{2}} B \frac{\gamma}{2B d^{\frac{l-1}{2}}} \le \frac{\gamma}{2}.
    \end{align}
    Therefore, by the definition in Eq. (\ref{eq:empirical_max_margin_loss}), we always have $L_{S,\gamma}(f_w) = 1$ when $\beta < \frac{1}{\sqrt{d}} \left( \frac{\gamma \sqrt{d}}{2B} \right)^{1/l}$.
    Alternatively, if $\beta > \frac{1}{\sqrt{d}}\left( \frac{\gamma \sqrt{md}}{2B} \right)^{1/l}$, the term inside the big-O notation in Eq. (\ref{eq:gcn_generalization_bound_proof_tmp_2}) would be,
    \begin{align}
    	\sqrt{\frac{ B^2 d^{l-1} l^2 h \log(lh) \prod\limits_{i=1}^{l} \Vert W_{i} \Vert_{2}^2 \sum\limits_{i=1}^{l} \frac{\Vert W_{i} \Vert_F^2}{\Vert W_{i} \Vert_{2}^2} + \log \frac{m}{\delta} }{\gamma^2 m}} & \ge \sqrt{ \frac{l^2 h \log(lh) }{4} \sum\limits_{i=1}^{l} \frac{\Vert W_{i} \Vert_F^2}{\Vert W_{i} \Vert_{2}^2}} \nonumber \\
    	& \ge \sqrt{ \frac{l^2 h \log(lh) }{4}} \ge 1, 
    \end{align}
    where we use the facts that $\Vert W_{i} \Vert_F \ge \Vert W_{i} \Vert_2$ and we typically choose $h \ge 2$ in practice and $l \ge 2$.
    
    Since we only need to consider $\beta$ in the range of Eq. (\ref{eq:gcn_generalization_bound_proof_tmp_3}), a sufficient condition to make $\vert \beta - \tilde{\beta} \vert \le \frac{1}{l}\beta$ hold would be $\vert \beta - \tilde{\beta} \vert \le \frac{1}{l\sqrt{d}} \left( \frac{\gamma \sqrt{d}}{2B} \right)^{1/l}$.
    Therefore, if we can find a covering of the interval in Eq. (\ref{eq:gcn_generalization_bound_proof_tmp_3}) with radius $\frac{1}{l\sqrt{d}} \left( \frac{\gamma \sqrt{d}}{2B} \right)^{1/l}$ and make sure bounds like Eq. (\ref{eq:gcn_generalization_bound_proof_tmp_2}) holds while $\tilde{\beta}$ takes all possible values from the covering, then we can get a bound which holds for all $\beta$.
    It is clear that we only need to consider a covering $C$ with size $\vert C \vert = \frac{l}{2}\left(m^{\frac{1}{2l}} - 1\right)$.
    Therefore, denoting the event of Eq. (\ref{eq:gcn_generalization_bound_proof_tmp_2}) with $\tilde{\beta}$ taking the $i$-th value of the covering as $E_i$, we have 
    \begin{align}
        \mathbb{P}\left( E_1 ~\&~ \cdots ~\&~ E_{\vert C \vert} \right) & = 1 - \mathbb{P}\left( \exists i, \bar{E}_i \right) \ge 1 - \sum_{i=1}^{\vert C \vert} \mathbb{P}\left( \bar{E}_i \right)  \ge 1 - \vert C \vert \delta.
    \end{align}
    Note $\bar{E}_i$ denotes the complement of $E_i$.
    Hence, with probability $1 - \delta$ and for all $w$, we have,
    \begin{align}
    	L_{\mathcal{D}, 0}(f_w) & \le L_{S, \gamma}(f_w) + \mathcal{O} \left( \sqrt{\frac{ B^2 d^{l-1} l^2 h \log(lh) \prod\limits_{i=1}^{l} \Vert W_{i} \Vert_{2}^2 \sum\limits_{i=1}^{l} \frac{\Vert W_{i} \Vert_F^2}{\Vert W_{i} \Vert_{2}^2} + \log \frac{m\vert C \vert}{\delta} }{\gamma^2 m}} \right) \nonumber \\
    	& = L_{S, \gamma}(f_w) + \mathcal{O} \left( \sqrt{\frac{ B^2 d^{l-1} l^2 h \log(lh) \prod\limits_{i=1}^{l} \Vert W_{i} \Vert_{2}^2 \sum\limits_{i=1}^{l} \frac{\Vert W_{i} \Vert_F^2}{\Vert W_{i} \Vert_{2}^2} + \log \frac{ml}{\delta} }{\gamma^2 m}} \right),
    \end{align}
    which proves the theorem.
\end{proof}

\subsection{MPGNNs Results}

In this part, we provide the proofs of the main results regarding MPGNNs.

\begin{replemma}{lemma:mpgnn_perturbation}
    (MPGNN Perturbation Bound) For any $B > 0, l > 1$, let $f_w \in \mathcal{H}: \mathcal{X} \times \mathcal{G} \rightarrow \mathbb{R}^{K}$ be a $l$-step MPGNN. 
    Then for any $w$, and $x \in \mathcal{X}_{B,h_0}$, and any perturbation $u = \text{vec}(\{U_1, U_2, U_l\})$ such that $\eta = \max \left( \frac{\Vert U_1 \Vert_{2}}{\Vert W_1 \Vert_{2}}, \frac{\Vert U_2 \Vert_{2}}{\Vert W_2 \Vert_{2}}, \frac{\Vert U_l \Vert_{2}}{\Vert W_l \Vert_{2}} \right) \le \frac{1}{l}$, the change in the output of MPGNN is bounded as,
    \begin{align}
        \vert f_{w+u}(& X, A) - f_w(X, A) \vert_{2} \le 
        \begin{cases}
            eB \left( l + 1 \right)^2 \eta \Vert W_1 \Vert_{2} \Vert W_l \Vert_{2} C_{\phi}, & \text{if}\ d \mathcal{C} = 1 \\
            eBl \eta \Vert W_1 \Vert_{2} \Vert W_l \Vert_{2} C_{\phi} \frac{\left( d \mathcal{C} \right)^{l-1} - 1}{d \mathcal{C} - 1}, & \text{otherwise}
        \end{cases} \nonumber
    \end{align}
    where $\mathcal{C} = C_{\phi} C_{\rho} C_{g} \Vert W_2 \Vert_{2}$.
\end{replemma}

\begin{proof}
    We first perform the recursive perturbation analysis on node representations of all steps except the last one, i.e., the readout step. Then we derive the bound for the graph representation of the last readout step.

    \paragraph{Perturbation Analysis on Node Representations.}

    In message passing GNNs, for any step $j < l$ besides the last readout one, the node representations are,
    \begin{align}
        \bar{M}_{j} & = C_{\operatorname{in}} g \left( C_{\operatorname{out}}^{\top} H_{j-1} \right) \nonumber \\
        H_{j} & = \phi\left( X W_1 + \rho \left( \bar{M}_{j} \right) W_2 \right), 
    \end{align}
    where the incidence matrices $C_{\operatorname{in}} \in \mathbb{R}^{n \times c}$ and $C_{\operatorname{out}} \in \mathbb{R}^{n \times c}$ (recall $c$ is the number of edges).
    Moreover, since each edge only connects one incoming and one outgoing node, we have,
    \begin{align}\label{eq:mpgnn_incidence_bound}
        \sum_{k=1}^{c} C_{\operatorname{in}} [i, k] & \le \max_{i} \sum_{k=1}^{c} C_{\operatorname{in}} [i, k] = \Vert C_{\operatorname{in}} \Vert_{\infty} \le d \nonumber \\
        \sum_{t=1}^{n} C_{\operatorname{out}} [t, k] & \le \max_{k} \sum_{t=1}^{n} C_{\operatorname{out}} [t, k] \le \Vert C_{\operatorname{out}} \Vert_{1} \le 1
    \end{align}
    where $d-1$ is the maximum node degree.
    Note that one actually has $\Vert C_{\operatorname{in}} \Vert_{\infty} \le d - 1$ for simple graphs.
    Since some models in the literature pre-process the graphs by adding self-loops, we thus relax it to $\Vert C_{\operatorname{in}} \Vert_{\infty} \le d$ which holds in both cases.
    
    We add perturbation $u$ to the weights $w$, \ie, the perturbed weights are $W_{1} + U_{1}$, $W_{2} + U_{2}$ and $W_{l} + U_{l}$.
    For the ease of notation, we use the superscript of prime to denote the perturbed node representations, \eg, $H_{j}^{\prime} = f_{w + u}^{j}(X, A)$.
    Let $\Delta_j = f_{w + u}^{j}(X, A) - f_{w}^{j}(X, A) = H_{j}^{\prime} - H_{j}$. 
    Note that $\Delta_j \in \mathbb{R}^{n \times h_j}$.
    Let $\Psi_{j} = \max\limits_{i} \left\vert \Delta_j[i, :] \right\vert_{2} = \max\limits_{i} \left\vert H_j^{\prime}[i, :] - H_j[i, :] \right\vert_{2}$ and $\Phi_j = \max\limits_{i} \left\vert H_j[i, :] \right\vert_{2}$.
    We denote the $u_j^{\ast} = \argmax\limits_{i} \left\vert \Delta_j[i, :] \right\vert_{2}$ and $v_j^{\ast} = \argmax\limits_{i} \left\vert H_j[i, :] \right\vert_{2}$.    
    To simplify the derivation, we abbreviate the following statistics $\kappa = C_{\phi} B \Vert W_1 \Vert_{2}$ and $\tau = d \mathcal{C}$ throughout the proof where $\mathcal{C} = C_{\phi} C_{\rho} C_{g} \Vert W_2 \Vert_{2}$ is the \emph{percolation complexity}.
    
    \paragraph{Upper Bound on the Max Node Representation.}
    For any step $j < l$, we can derive an upper bound on the $\ell_2$ norm of the aggregated message of any node $i$ as follows,
    \begin{align}\label{eq:mpgnn_msg_bound}
        \left\vert \bar{M}_j[i, :] \right\vert_{2} & = \left\vert \sum_{k=1}^{c} C_{\operatorname{in}} [i, k] \left( g \left( C_{\operatorname{out}}^{\top} H_{j-1} \right) \right) [k, :] \right\vert_{2} = \left\vert \sum_{k=1}^{c} C_{\operatorname{in}} [i, k] g \left( C_{\operatorname{out}}^{\top} [k, :] H_{j-1} \right) \right\vert_{2} \nonumber \\
        & \le \sum_{k=1}^{c} C_{\operatorname{in}} [i, k] \left\vert g \left( C_{\operatorname{out}}^{\top} [k, :] H_{j-1} \right) \right\vert_{2} \nonumber \\
        & \le \sum_{k=1}^{c} C_{\operatorname{in}} [i, k] C_g \left\vert C_{\operatorname{out}}^{\top} [k, :] H_{j-1} \right\vert_{2} = \sum_{k=1}^{c} C_{\operatorname{in}} [i, k] C_g \left\vert \sum_{t=1}^{n} C_{\operatorname{out}}^{\top} [k, t] H_{j-1} [t, :] \right\vert_{2} \nonumber \\
        & \le \sum_{k=1}^{c} C_{\operatorname{in}} [i, k] C_g \left( \sum_{t=1}^{n} C_{\operatorname{out}} [t, k]  \left\vert H_{j-1} [t, :] \right\vert_{2} \right) \nonumber \\
        & \le \sum_{k=1}^{c} C_{\operatorname{in}} [i, k] C_g \left( \sum_{t=1}^{n} C_{\operatorname{out}} [t, k]  \Phi_{j-1} \right) \nonumber \\
        & \le d C_g \Phi_{j-1}.
    \end{align}    
    Then we can derive an upper bound on the maximum (w.r.t. $\ell_2$ norm) node representation as follows,
    \begin{align}\label{eq:mpgnn_output_bound}
        \Phi_j & = \max\limits_{i} \left\vert H_j[i, :] \right\vert_{2} = \left\vert \phi\left( X W_1 + \rho \left( \bar{M}_{j} \right) W_2 \right) [v_j^{\ast}, :] \right\vert_{2} \nonumber \\
        & = \left\vert \phi\left( \left( X W_1 + \rho \left( \bar{M}_{j} \right) W_2 \right) [v_j^{\ast}, :] \right) \right\vert_{2} \nonumber \\
        & \le C_{\phi} \left\vert \left( X W_1 + \rho \left( \bar{M}_{j} \right) W_2 \right) [v_j^{\ast}, :] \right\vert_{2} \nonumber \\
        & = C_{\phi} \left\vert \left( X W_1 \right) [v_j^{\ast}, :] + \left( \rho \left( \bar{M}_{j} \right) W_2 \right) [v_j^{\ast}, :] \right\vert_{2} \nonumber \\
        & \le C_{\phi} \left\vert \left( X W_1 \right) [v_j^{\ast}, :] \right\vert_{2} + C_{\phi} \left\vert \left( \rho \left( \bar{M}_{j} \right) W_2 \right) [v_j^{\ast}, :] \right\vert_{2} \nonumber \\
        & = C_{\phi} \left\vert X [v_j^{\ast}, :] W_1 \right\vert_{2} + C_{\phi} \left\vert \rho \left( \bar{M}_{j} \right) [v_j^{\ast}, :] W_2 \right\vert_{2} \nonumber \\
        & \le C_{\phi} \left\vert X [v_j^{\ast}, :] \right\vert_{2} \Vert W_1 \Vert_{2} + C_{\phi} \left\vert \rho \left( \bar{M}_{j} \right) [v_j^{\ast}, :] \right\vert_{2} \Vert W_2 \Vert_{2} \nonumber \\
        & \le C_{\phi} B \Vert W_1 \Vert_{2} + C_{\phi} \left\vert \rho \left( \bar{M}_{j} [v_j^{\ast}, :] \right) \right\vert_{2} \Vert W_2 \Vert_{2} \nonumber \\
        & \le C_{\phi} B \Vert W_1 \Vert_{2} + C_{\phi} C_{\rho} \left\vert \bar{M}_{j} [v_j^{\ast}, :] \right\vert_{2} \Vert W_2 \Vert_{2} \nonumber \\
        & \le C_{\phi} B \Vert W_1 \Vert_{2} + d C_{\phi} C_{\rho} C_{g} \Phi_{j-1} \Vert W_2 \Vert_{2} = \kappa + \tau \Phi_{j-1} \nonumber \\
        & \le \tau^{j} \Phi_{0} + \sum_{i=0}^{j-1} \tau^{j-1-i} \kappa \qquad (\text{Unroll recursion}) \nonumber \\
        & = \sum_{i=0}^{j-1} \tau^{j-1-i} \kappa \qquad (\text{Use } \Phi_0 = 0) \nonumber \\
        & = \begin{cases}
                j \kappa, & \text{if}\ \tau = 1 \\
                \kappa \frac{\tau^{j} - 1}{\tau - 1}, & \text{otherwise}
            \end{cases}
    \end{align}
    
    \paragraph{Upper Bound on the Max Change of Node Representation.}
    For any step $j < l$, we can derive an upper bound on the maximum (w.r.t. $\ell_2$ norm) change between the aggregated message with and without the weight perturbation for any node $i$ as follows,

    \begin{align}\label{eq:mpgnn_msg_diff_bound}
        \left\vert \bar{M}_{j}^{\prime} [i, :] - \bar{M}_{j} [i, :] \right\vert_{2} & \le \left\vert \left( C_{\operatorname{in}} g \left( C_{\operatorname{out}}^{\top} H_{j-1}^{\prime} \right) \right) [i, :] - \left( C_{\operatorname{in}} g \left( C_{\operatorname{out}}^{\top} H_{j-1} \right) \right) [i, :] \right\vert_2 \nonumber \\
        & = \left\vert \sum_{k=1}^{c} C_{\operatorname{in}}[i, k] \left(g \left( C_{\operatorname{out}}^{\top} H_{j-1}^{\prime} \right) - g \left( C_{\operatorname{out}}^{\top} H_{j-1} \right) \right) [k, :] \right\vert_2 \nonumber \\
        & \le \sum_{k=1}^{c} C_{\operatorname{in}}[i, k] \left\vert \left(g \left( C_{\operatorname{out}}^{\top} H_{j-1}^{\prime} \right) - g \left( C_{\operatorname{out}}^{\top} H_{j-1} \right) \right) [k, :] \right\vert_2 \nonumber \\
        & = \sum_{k=1}^{c} C_{\operatorname{in}}[i, k] \left\vert g \left( \left( C_{\operatorname{out}}^{\top} H_{j-1}^{\prime} \right)[k, :] \right) - g \left( \left( C_{\operatorname{out}}^{\top} H_{j-1} \right) [k, :] \right) \right\vert_2 \nonumber \\
        & \le \sum_{k=1}^{c} C_{\operatorname{in}}[i, k] C_g \left\vert \left( C_{\operatorname{out}}^{\top} H_{j-1}^{\prime} \right)[k, :] - \left( C_{\operatorname{out}}^{\top} H_{j-1} \right) [k, :] \right\vert_2 \nonumber \\
        & = \sum_{k=1}^{c} C_{\operatorname{in}}[i, k] C_g \left\vert \sum_{t=1}^{n} C_{\operatorname{out}}[t, k] H_{j-1}^{\prime}[t, :] - \sum_{t=1}^{n} C_{\operatorname{out}}[t, k] H_{j-1}[t, :] \right\vert_2 \nonumber \\
        & = \sum_{k=1}^{c} C_{\operatorname{in}}[i, k] C_g \left\vert \sum_{t=1}^{n} C_{\operatorname{out}}[t, k] \left( H_{j-1}^{\prime}[t, :] - H_{j-1}[t, :] \right) \right\vert_2 \nonumber \\
        & \le \sum_{k=1}^{c} C_{\operatorname{in}}[i, k] C_g \left( \sum_{t=1}^{n} C_{\operatorname{out}}[t, k] \left\vert H_{j-1}^{\prime}[t, :] - H_{j-1}[t, :] \right\vert_2 \right) \nonumber \\
        & \le d C_g \Psi_{j-1}
    \end{align}
    Based on Eq. (\ref{eq:mpgnn_msg_diff_bound}), we can derive an upper bound on the maximum (w.r.t. $\ell_2$ norm) change between the representations with and without the weight perturbation for any node as follows,
    \begin{align}
        \Psi_{j} = & \max\limits_{i} \left\vert H_j^{\prime}[i, :] - H_j[i, :] \right\vert_{2}
        \nonumber \\
        = & \left\vert \left( \phi\left( X (W_1 + U_1) + \rho \left( \bar{M}_{j}^{\prime} \right) (W_2 + U_2) \right) \right) [u_j^{\ast}, :] - \left( \phi\left( X W_1 + \rho \left( \bar{M}_{j} \right) W_2 \right) \right) [u_j^{\ast}, :] \right\vert_{2} \nonumber \\
        = & \left\vert \phi\left( \left( X (W_1 + U_1) + \rho \left( \bar{M}_{j}^{\prime} \right) (W_2 + U_2) \right) [u_j^{\ast}, :] \right) - \phi\left( \left( X W_1 + \rho \left( \bar{M}_{j} \right) W_2 \right) [u_j^{\ast}, :] \right) \right\vert_{2} \nonumber \\
        \le & C_{\phi} \left\vert \left( X (W_1 + U_1) + \rho \left( \bar{M}_{j}^{\prime} \right) (W_2 + U_2) \right) [u_j^{\ast}, :] - \left( X W_1 + \rho \left( \bar{M}_{j} \right) W_2 \right) [u_j^{\ast}, :] \right\vert_{2} \nonumber \\
        \le & C_{\phi} \left\vert X [u_j^{\ast}, :] U_1 + \left( \rho \left( \bar{M}_{j}^{\prime} \right) \right) [u_j^{\ast}, :] (W_2 + U_2) - \left( \rho\left( \bar{M}_{j} \right) \right) [u_j^{\ast}, :] W_2 \right\vert_{2} \nonumber \\
        = & C_{\phi} \left\vert X [u_j^{\ast}, :] U_1 + \left( \rho \left( \bar{M}_{j}^{\prime} \right) - \rho \left( \bar{M}_{j} \right) \right) [u_j^{\ast}, :] (W_2 + U_2) + \rho \left( \bar{M}_{j} \right) [u_j^{\ast}, :] U_2 \right\vert_{2} \nonumber \\
        \le & C_{\phi} B \left\Vert U_1 \right\Vert_{2} + C_{\phi} C_{\rho} \left\vert \bar{M}_{j}^{\prime} [u_j^{\ast}, :] - \bar{M}_{j} [u_j^{\ast}, :] \right\vert_{2} \left\Vert W_2 + U_2 \right\Vert_{2} + C_{\phi} C_{\rho} \left\vert \bar{M}_{j} [u_j^{\ast}, :] \right\vert_{2} \left\Vert U_2 \right\Vert_{2} \nonumber \\
        \le & \kappa \frac{\Vert U_1 \Vert_{2}}{\Vert W_1 \Vert_{2}} + d \mathcal{C} \Psi_{j-1} \frac{\Vert W_2 + U_2 \Vert_{2}}{\Vert W_2 \Vert_{2}} + d \mathcal{C} \Phi_{j-1} \frac{\Vert U_2 \Vert_{2}}{\Vert W_2 \Vert_{2}}  \qquad (\text{Use Eq. (\ref{eq:mpgnn_msg_bound}) and (\ref{eq:mpgnn_msg_diff_bound})}) \nonumber \\
        \le & \tau \left( 1 + \frac{\Vert U_2 \Vert_{2}}{\Vert W_2 \Vert_{2}} \right) \Psi_{j-1} + \kappa \frac{\Vert U_1 \Vert_{2}}{\Vert W_1 \Vert_{2}} + \tau \Phi_{j-1} \frac{\Vert U_2 \Vert_{2}}{\Vert W_2 \Vert_{2}}
    \end{align}

    If $\tau = 1$, then we have,
    \begin{align}
        \Psi_{j} \le & \tau \left( 1 + \frac{\Vert U_2 \Vert_{2}}{\Vert W_2 \Vert_{2}} \right) \Psi_{j-1} + \kappa \frac{\Vert U_1 \Vert_{2}}{\Vert W_1 \Vert_{2}} + \tau \Phi_{j-1} \frac{\Vert U_2 \Vert_{2}}{\Vert W_2 \Vert_{2}} \nonumber \\
        \le & \left( 1 + \frac{\left\Vert U_2 \right\Vert_{2}}{\left\Vert W_2 \right\Vert_{2}} \right) \Psi_{j-1} + \kappa \left( \frac{\left\Vert U_1 \right\Vert_{2}}{\left\Vert W_1 \right\Vert_{2}} + \frac{\Vert U_2 \Vert_{2}}{\Vert W_2 \Vert_{2}} (j - 1) \right) \qquad (\text{Use Eq. (\ref{eq:mpgnn_output_bound})}) \nonumber \\
        \le & \left( 1 + \eta \right) \Psi_{j-1} + \kappa \eta \left( 1 +  (j - 1) \right) \qquad \left( \text{Use } \eta = \max \left( \frac{\Vert U_1 \Vert_{2}}{\Vert W_1 \Vert_{2}}, \frac{\Vert U_2 \Vert_{2}}{\Vert W_2 \Vert_{2}}, \frac{\Vert U_l \Vert_{2}}{\Vert W_l \Vert_{2}} \right) \right) \nonumber \\
        = & \left( 1 + \eta \right) \Psi_{j-1} + \kappa \eta j.
    \end{align}
    
    If $\tau \neq 1$, then we have,
    \begin{align}
        \Psi_{j} 
        \le & \tau \left( 1 + \frac{\Vert U_2 \Vert_{2}}{\Vert W_2 \Vert_{2}} \right) \Psi_{j-1} + \kappa \frac{\Vert U_1 \Vert_{2}}{\Vert W_1 \Vert_{2}} + \tau \Phi_{j-1} \frac{\Vert U_2 \Vert_{2}}{\Vert W_2 \Vert_{2}} \nonumber \\
        \le & \tau \left( 1 + \frac{\left\Vert U_2 \right\Vert_{2}}{\left\Vert W_2 \right\Vert_{2}} \right) \Psi_{j-1} + \kappa \left( \frac{\left\Vert U_1 \right\Vert_{2}}{\left\Vert W_1 \right\Vert_{2}} + \tau \frac{\left\Vert U_2 \right\Vert_{2}}{\left\Vert W_2 \right\Vert_{2}} \frac{\tau^{j-1} - 1}{\tau - 1} \right) \qquad (\text{Use Eq. (\ref{eq:mpgnn_output_bound})}) \nonumber \\
        \le & \tau \left( 1 + \eta \right) \Psi_{j-1} + \kappa \eta \left( 1 +  \frac{\tau^{j} - \tau}{\tau - 1} \right) \qquad \left( \text{Use } \eta = \max \left( \frac{\Vert U_1 \Vert_{2}}{\Vert W_1 \Vert_{2}}, \frac{\Vert U_2 \Vert_{2}}{\Vert W_2 \Vert_{2}}, \frac{\Vert U_l \Vert_{2}}{\Vert W_l \Vert_{2}} \right) \right) \nonumber \\
        = & \tau \left( 1 + \eta \right) \Psi_{j-1} + \kappa \eta \left( \frac{\tau^{j} - 1}{\tau - 1} \right).
    \end{align}

    Recall from Eq. (\ref{eq:gcn_difference_output_recursion_general}), if $\Psi_j \le a_{j-1} \Psi_{j-1} + b_{j-1}$ and $\Psi_0 = 0$, then $\Psi_j \le \sum_{k=0}^{j-1} b_k \left( \prod_{i=k+1}^{j-1} a_i \right)$.
    If $\tau = 1$, then we have $a_{j-1} = 1 + \eta$, $b_{j-1} = \kappa \eta j$ in our case and, 
    \begin{align}\label{eq:psi_value_tau_is_1}
        \Psi_j \le & \sum\nolimits_{k=0}^{j-1} b_k \left( \prod\nolimits_{i=k+1}^{j-1} a_i \right) = \sum\nolimits_{k=0}^{j-1} \kappa \eta (k+1) \left( 1 + \eta \right)^{j-k-1} \nonumber \\
        \le & \kappa \eta \left( 1 + \frac{1}{l} \right)^{j} \sum_{k=0}^{j-1} (k+1) \left( 1 + \frac{1}{l} \right)^{-k-1} \qquad \left( \text{Use } \eta \le \frac{1}{l} \right) \nonumber \\
        = & \kappa \eta \left( 1 + \frac{1}{l} \right)^{j} \sum_{k=1}^{j} k \left( 1 + \frac{1}{l} \right)^{-k} \nonumber \\
        = & \kappa \eta \left( 1 + \frac{1}{l} \right)^{j} \left( 1 + \frac{1}{l} \right)^{-1} \frac{1 - (j+1) \left( 1 + \frac{1}{l} \right)^{-j} + j \left( 1 + \frac{1}{l} \right)^{-j-1}}{\left( 1 - \left( 1 + \frac{1}{l} \right)^{-1} \right)^2} \nonumber \\
        = & \kappa \eta \frac{\left( 1 + \frac{1}{l} \right)^{j+1} - (j + 1) \left( 1 + \frac{1}{l} \right) + j }{\left( \left( 1 + \frac{1}{l} \right)^{1} - 1 \right)^2} \nonumber \\
        = & \kappa \eta l^2 \left( \left( 1 + \frac{1}{l} \right)^{j+1} - (j + 1) \left( 1 + \frac{1}{l} \right) + j \right) \nonumber \\
        \le & \kappa \eta l^2 \left( \left( 1 + \frac{1}{l} \right)^{j+1} - 1 
        \right) \nonumber \\
        \le & \kappa \eta l \left( l + 1 \right) \left( 1 + \frac{1}{l} \right)^{j}
    \end{align}
    
    If $\tau \neq 1$, then we have $a_{j-1} = \tau \left( 1 + \eta \right)$, $b_{j-1} = \kappa \eta \left( \frac{\tau^{j} - 1}{\tau - 1} \right)$ in our case and, 
    \begin{align}\label{eq:psi_value_tau_is_not_1}
        \Psi_j \le & \sum_{k=0}^{j-1} b_k \left( \prod_{i=k+1}^{j-1} a_i \right) = \sum_{k=0}^{j-1} \kappa \eta \left( \frac{\tau^{k+1} - 1}{\tau - 1} \right) \tau^{j-k-1} \left( 1 + \eta \right)^{j-k-1} \nonumber \\
        \le & \kappa \eta \tau^j \left( 1 + \frac{1}{l} \right)^{j} \sum_{k=0}^{j-1} \left( \frac{\tau^{k+1} - 1}{\tau - 1} \right) \tau^{-k-1} \left( 1 + \frac{1}{l} \right)^{-k-1} \qquad \left( \text{Use } \eta \le \frac{1}{l} \right) \nonumber \\
        \le & \kappa \eta \tau^j \left( 1 + \frac{1}{l} \right)^{j} \sum_{k=0}^{j-1} \left( \frac{1 - \tau^{-k-1}}{\tau - 1} \right) \left( 1 + \frac{1}{l} \right)^{-k-1} \nonumber \\
        \le & \frac{\kappa \eta \tau^j}{\tau - 1} \left( 1 + \frac{1}{l} \right)^{j} \sum_{k=0}^{j-1} \left( 1 - \tau^{-k-1} \right) \left( 1 + \frac{1}{l} \right)^{-k-1} \nonumber \\
        \le & \frac{\kappa \eta \tau^j}{\tau - 1} \left( 1 + \frac{1}{l} \right)^{j} \sum_{k=1}^{j} \left( 1 - \tau^{-k} \right) 
    \end{align}

    \paragraph{Final Bound with Readout Function}
    Now let us consider the readout function. 
    Since the last readout layer produces a vector in $\mathbb{R}^{1 \times C}$, we have,
    \begin{align}
        \vert \Delta_l \vert_{2} = & \left\vert \frac{1}{n} \bm{1}_n H_{l-1}^{\prime} (W_{l} + U_{l})  - \frac{1}{n} \bm{1}_n H_{l-1} W_{l} \right\vert_{2} \nonumber \\
        = & \left \vert \frac{1}{n} \bm{1}_n \Delta_{l-1} (W_l + U_l) + \frac{1}{n} \bm{1}_n H_{l-1} U_l \right \vert_{2} \nonumber \\
        \le & \frac{1}{n} \left\vert \bm{1}_n \Delta_{l-1} (W_l + U_l) \right\vert_{2} + \frac{1}{n} \left\vert \bm{1}_n H_{l-1} U_l \right\vert_{2} \nonumber \\
        \le & \frac{1}{n} \Vert W_l + U_l \Vert_{2} \vert \bm{1}_n \Delta_{l-1} \vert_{2} + \frac{1}{n} \Vert U_l \Vert_{2} \vert \bm{1}_n H_{l-1} \vert_{2} \nonumber \\
        \le & \Vert W_l + U_l \Vert_{2} \Psi_{l-1} + \Vert U_l \Vert_{2} \Phi_{l-1}
    \end{align}
    
    If $\tau = 1$, we have, 
    \begin{align}\label{eq:delta_l_tau_eq_1_final}
        \vert \Delta_l \vert_{2} 
        \le & \Vert W_l \Vert_{2} \left( 1 + \frac{1}{l} \right) \kappa \eta l \left( l + 1 \right) \left( 1 + \frac{1}{l} \right)^{l-1}  + (l-1) \kappa \Vert U_l \Vert_{2} \qquad (\text{Use Eq. (\ref{eq:mpgnn_output_bound}), (\ref{eq:psi_value_tau_is_1})} ) \nonumber \\
        \le & \Vert W_l \Vert_{2} \left( 1 + \frac{1}{l} \right)^{l} \kappa \left( \eta l \left( l + 1 \right)  + (l-1) \frac{\Vert U_l \Vert_{2}}{\Vert W_l \Vert_{2}} \left( 1 + \frac{1}{l} \right)^{-l} \right) \nonumber \\
        \le & \Vert W_l \Vert_{2} e \kappa \eta \left( l(l + 1) + (l-1) \right) \nonumber \\
        = & \Vert W_l \Vert_{2} e \kappa \eta \left( l^2 + 2l - 1 \right) \nonumber \\
        \le & \Vert W_l \Vert_{2} e \kappa \eta \left( l + 1 \right)^2
    \end{align}    
    
    Otherwise, we have, 
    \begin{align}\label{eq:delta_l_tau_neq_1}
        \vert \Delta_l \vert_{2} \le & \Vert W_l \Vert_{2} \frac{\kappa \eta \tau^{l-1}}{\tau - 1} \left( 1 + \frac{1}{l} \right)^{l} \sum_{k=1}^{l-1} \left( 1 - \tau^{-k} \right) + \kappa \Vert U_l \Vert_{2} \frac{\tau^{l-1} - 1}{\tau - 1} \qquad (\text{Use Eq. (\ref{eq:mpgnn_output_bound}), (\ref{eq:psi_value_tau_is_not_1})} ) \nonumber \\
        \le & \Vert W_l \Vert_{2} \frac{\kappa \tau^{l-1}}{\tau - 1} \left( 1 + \frac{1}{l} \right)^{l} \left( \eta \sum_{k=1}^{l-1} \left( 1 - \tau^{-k} \right) + \frac{\Vert U_l \Vert_{2}}{\Vert W_l \Vert_{2}} (1 - \tau^{1-l}) \right) 
    \end{align}
    
    If $\tau > 1$, then $\frac{1 - \tau^{-k}}{\tau - 1} \le \frac{1 - \tau^{1-l}}{\tau - 1}$ when $1 \le k \le l-1$.
    If $\tau < 1$, we also have $\frac{1 - \tau^{-k}}{\tau - 1} \le \frac{1 - \tau^{1-l}}{\tau - 1}$ when $1 \le k \le l-1$.
    Therefore, Eq. (\ref{eq:delta_l_tau_neq_1}) can be further relaxed as,
    \begin{align}\label{eq:delta_l_tau_neq_1_final}
        \vert \Delta_l \vert_{2} \le & \Vert W_l \Vert_{2} \frac{\kappa \tau^{l-1}}{\tau - 1} \left( 1 + \frac{1}{l} \right)^{l} \left( \eta \sum_{k=1}^{l-1} \left( 1 - \tau^{-k} \right) + \frac{\Vert U_l \Vert_{2}}{\Vert W_l \Vert_{2}} (1 - \tau^{1-l}) \right) \nonumber \\
        = & \Vert W_l \Vert_{2} \kappa \tau^{l-1} \left( 1 + \frac{1}{l} \right)^{l} \left( \eta \sum_{k=1}^{l-1} \frac{1 - \tau^{-k}}{\tau - 1} + \frac{\Vert U_l \Vert_{2}}{\Vert W_l \Vert_{2}} \frac{1 - \tau^{1-l}}{\tau - 1} \right) \nonumber \\
        \le & \Vert W_l \Vert_{2} \kappa \tau^{l-1} e \left( \eta (l - 1) \frac{\left( 1 - \tau^{1-l} \right)}{\tau - 1} + \frac{\Vert U_l \Vert_{2}}{\Vert W_l \Vert_{2}} \frac{\left( 1 - \tau^{1-l} \right)}{\tau - 1} \right) \nonumber \\
        \le & \Vert W_l \Vert_{2} \kappa \tau^{l-1} e \eta l \frac{\left( 1 - \tau^{1-l} \right)}{\tau - 1} \qquad \left( \text{Use } \frac{\Vert U_l \Vert_{2}}{\Vert W_l \Vert_{2}} \le \eta \right) \nonumber \\
        = & e \eta l \kappa \Vert W_l \Vert_{2} \frac{\tau^{l-1} - 1}{\tau - 1},
    \end{align}

    Therefore, combining Eq. (\ref{eq:delta_l_tau_eq_1_final}) and Eq. (\ref{eq:delta_l_tau_neq_1_final}), we have, 
    \begin{align}
        \vert \Delta_l \vert_{2} \le 
            \begin{cases}
                e \eta \kappa \left( l + 1 \right)^2 \Vert W_l \Vert_{2}, & \text{if}\ d \mathcal{C} = 1 \\
                e \eta \kappa l \Vert W_l \Vert_{2} \frac{\tau^{l-1} - 1}{\tau - 1}, & \text{otherwise.}
            \end{cases}
    \end{align}    
    which proves the lemma.
\end{proof}

\begin{reptheorem}{thm:mpgnn_generalization_bound}
    (MPGNN Generalization Bound) For any $B > 0, l > 1$, let $f_w \in \mathcal{H}: \mathcal{X} \times \mathcal{G} \rightarrow \mathbb{R}^{K}$ be a $l$-step MPGNN.
    Then for any $\delta, \gamma > 0$, with probability at least $1 - \delta$ over the choice of an i.i.d. size-$m$ training set $S$ according to $\mathcal{D}$, for any $w$, we have,
    \begin{enumerate}
        \item If $d \mathcal{C} \neq 1$, then
        {\small
        \begin{align}
    	    L_{\mathcal{D}, 0}(f_w) \le L_{S, \gamma}(f_w) + \mathcal{O} \left( \sqrt{\frac{ B^2 \left( \max\left(\zeta^{-(l+1)}, (\lambda \xi)^{(l+1)/l} \right) \right)^{2} l^2 h \log(lh) \vert w \vert_2^2 + \log \frac{m(l+1)}{\delta} }{\gamma^2 m}} \right). \nonumber
        \end{align}}
        \item If $d \mathcal{C} = 1$, then
        {\small
        \begin{align}
    	    L_{\mathcal{D}, 0}(f_w) \le L_{S, \gamma}(f_w) + \mathcal{O} \left( \sqrt{\frac{ B^2 \max\left( \zeta^{-6}, \lambda^3 C_{\phi}^3 \right) (l+1)^4 h \log(lh) \vert w \vert_2^2 + \log \frac{m}{\delta} }{\gamma^2 m}} \right). \nonumber
        \end{align}}        
    \end{enumerate}
    where 
    $\zeta = \min \left( \Vert W_1 \Vert_{2}, \Vert W_2 \Vert_{2}, \Vert W_l \Vert_{2} \right)$,
    $\vert w \vert_2^2 = \Vert W_1 \Vert_F^2 + \Vert W_2 \Vert_F^2 + \Vert W_l \Vert_F^2$,
    $\mathcal{C} = C_{\phi} C_{\rho} C_{g} \Vert W_2 \Vert_{2}$,
    $\lambda = \Vert W_1 \Vert_{2} \Vert W_l \Vert_{2}$, 
    and $\xi = C_{\phi} \frac{\left( d \mathcal{C} \right)^{l-1} - 1}{d \mathcal{C} - 1}$.
\end{reptheorem}

\begin{proof}
    We will derive the results conditioning on the value of $d \mathcal{C}$.
    
    \paragraph{General Case $d \mathcal{C} \neq 1$}
    We first consider the general case $d \mathcal{C} \neq 1$.
    To derive the generalization bound, we construct a special statistic of the learned weights 
    $\beta = \max\left(\frac{1}{\zeta}, \left( \lambda \xi \right)^{\frac{1}{l}} \right)$. 
    It is clear that $\frac{1}{\zeta} \le \beta$, $\lambda \xi \le \beta^{l}$, and $\lambda \xi / \zeta \le \beta^{l+1}$.
    Note that $\frac{1}{\zeta} = \max \left( \frac{1}{\Vert W_1 \Vert_{2}}, \frac{1}{\Vert W_2 \Vert_{2}}, \frac{1}{\Vert W_l \Vert_{2}} \right)$.
    
    Consider the prior $P = \mathcal{N}(0, \sigma^2 I)$ and the random perturbation $u \sim \mathcal{N}(0, \sigma^2 I)$.
    Note that the $\sigma$ of the prior and the perturbation are the same and will be set according to $\beta$.
    More precisely, we will set the $\sigma$ based on some approximation $\tilde{\beta}$ of $\beta$ since the prior $P$ can not depend on any learned weights directly.
    The approximation $\tilde{\beta}$ is chosen to be a cover set which covers the meaningful range of $\beta$. 
    For now, let us fix any $\tilde{\beta}$ and consider $\beta$ which satisfies $\vert \beta - \tilde{\beta} \vert \le \frac{1}{l+1} \beta$.
    This also implies,
    \begin{align}
        \vert \beta - \tilde{\beta} \vert \le \frac{1}{l+1} \beta ~\Rightarrow~ & \left(1 - \frac{1}{l+1} \right) \beta \le \tilde{\beta} \le \left(1 + \frac{1}{l+1} \right) \beta \nonumber \\ 
        ~\Rightarrow~ & \left(1 - \frac{1}{l+1} \right)^{l+1} \beta^{l+1} \le \tilde{\beta}^{l+1} \le \left(1 + \frac{1}{l+1} \right)^{l+1} \beta^{l+1}  \nonumber \\
        ~\Rightarrow~ & \frac{1}{e} \beta^{l+1} \le \tilde{\beta}^{l+1} \le e \beta^{l+1}  
    \end{align}
    
    From \cite{tropp2012user}, for $U_i \in \mathbb{R}^{h \times h}$ and $U_i \sim \mathcal{N}(\bm{0}, \sigma^2 I)$, we have,
    \begin{align}
        \mathbb{P}\left( \Vert U_i \Vert_{2} \ge t \right) \le 2he^{-t^2 /2h\sigma^2}.
    \end{align}
    Taking a union bound, we have 
    \begin{align}\label{eq:mpgnn_generalization_bound_proof_tmp_1}
        \mathbb{P}\left( \Vert U_1 \Vert_{2} < t ~\&~ \cdots ~\&~ \Vert U_l \Vert_{2} < t \right) & = 1 - \mathbb{P}\left( \exists i, \Vert U_i \Vert_{2} \ge t \right) \nonumber \\
        & \ge 1 - \sum_{i=1}^{l} \mathbb{P}\left( \Vert U_i \Vert_{2} \ge t \right) \nonumber \\
        & \ge 1 - 2lhe^{-t^2 /2h\sigma^2}.
    \end{align}
    Setting $2lhe^{-t^2 /2h\sigma^2} = \frac{1}{2}$, we have $t = \sigma \sqrt{2h\log(4lh)}$. 
    This implies that the probability that the spectral norm of the perturbation of any layer is no larger than $\sigma \sqrt{2h\log(4lh)}$ holds with probability at least $\frac{1}{2}$.
    Plugging this bound into Lemma \ref{lemma:mpgnn_perturbation}, we have with probability at least $\frac{1}{2}$,
    \begin{align}
        \vert f_{w+u}(X, A) - f_w(X, A) \vert_{2} & \le e \frac{t}{\zeta} l C_{\phi} B \Vert W_1 \Vert_{2} \Vert W_l \Vert_{2} \frac{(d\mathcal{C})^{l-1} - 1}{d\mathcal{C} - 1} \nonumber \\
        & = e t l B \frac{\lambda \xi}{\zeta} \nonumber \\
        & = e B l \beta^{l+1} t \le e^2 B l \tilde{\beta}^{l+1} \sigma \sqrt{2h\log(4lh)} \le \frac{\gamma}{4},
    \end{align}    
    
    where we can set $\sigma = \frac{\gamma}{42 B l \tilde{\beta}^{l+1} \sqrt{h\log(4lh)}}$ to get the last inequality.
    Note that Lemma \ref{lemma:mpgnn_perturbation} also requires $\max \left( \frac{\Vert U_1 \Vert_{2}}{\Vert W_1 \Vert_{2}}, \frac{\Vert U_2 \Vert_{2}}{\Vert W_2 \Vert_{2}}, \frac{\Vert U_l \Vert_{2}}{\Vert W_l \Vert_{2}} \right) \le \frac{1}{l}$.
    The requirement is satisfied if $\sigma \le \frac{\zeta}{l \sqrt{2h\log(4lh)}}$ which in turn can be satisfied if 
    \begin{align}\label{eq:gnn_condition_perturbation_lemma}
        \frac{\gamma}{4 e B l \beta^{l+1} \sqrt{2h\log(4lh)}} \le \frac{1}{\beta l \sqrt{2h\log(4lh)}},
    \end{align}
    since the chosen value of $\sigma$ satisfies $\sigma \le \frac{\gamma}{4eB l \beta^{l+1} \sqrt{2h\log(4lh)}}$ and $\frac{1}{\beta} \le \zeta$.
    Therefore, one sufficient condition to make Eq. (\ref{eq:gnn_condition_perturbation_lemma}) hold is $\frac{\gamma}{4eB} \le \beta^{l}$. 
    We will see how to satisfy this condition later.
    
    We now compute the KL term in the PAC-Bayes bound in Lemma \ref{lemma:pac_bayes_deterministic}.
    \begin{align}
        \text{KL}\left( Q \Vert P \right) & = \frac{\vert w \vert_2^2}{2 \sigma^2} \nonumber \\
        & = \frac{42^2 B^2 \tilde{\beta}^{2l+2} l^2 h \log(4lh) }{2\gamma^2} \left(\Vert W_1 \Vert_F^2 + \Vert W_2 \Vert_F^2 + \Vert W_l \Vert_F^2\right) \nonumber \\
        & \le \mathcal{O}\left( \frac{ B^2 \beta^{2l+2} l^2 h \log(lh) }{\gamma^2} \left(\Vert W_1 \Vert_F^2 + \Vert W_2 \Vert_F^2 + \Vert W_l \Vert_F^2\right) \right) 
    \end{align}
    From Lemma \ref{lemma:pac_bayes_deterministic}, fixing any $\tilde{\beta}$, with probability $1 - \delta$ and for all $w$ such that $\vert \beta - \tilde{\beta} \vert \le \frac{1}{l+1}\beta$, we have,
    \begin{align}\label{eq:mpgnn_generalization_bound_proof_tmp_2}
    	L_{\mathcal{D}, 0}(f_w) \le L_{S, \gamma}(f_w) + \mathcal{O} \left( \sqrt{\frac{ B^2 \beta^{2l+2} l^2 h \log(lh) \vert w \vert_2^2 + \log \frac{m}{\delta} }{\gamma^2 m}} \right).
    \end{align}
    Finally, we need to consider multiple choices of $\tilde{\beta}$ so that for any $\beta$, we can bound the generalization error like Eq. (\ref{eq:mpgnn_generalization_bound_proof_tmp_2}).
    In particular, we only need to consider values of $\beta$ in the following range,
    \begin{align}\label{eq:mpgnn_generalization_bound_proof_tmp_3}
        \left( \frac{\gamma}{2B} \right)^{\frac{1}{l}} \le \beta \le \left( \frac{\gamma \sqrt{m}}{2B} \right)^{\frac{1}{l}},
    \end{align}
    since otherwise the bound holds trivially as $L_{\mathcal{D}, 0}(f_w) \le 1$ by definition.
    To see this, if $\beta^{l} < \frac{\gamma}{2B}$, then for any $(X, A)$ and any $j \in \mathbb{N}^{+}_{K}$, we have,
    \begin{align}
        \left\vert f_{w}(X, A)[j] \right\vert & \le \left\vert f_{w}(X, A) \right\vert_{2} = \vert \frac{1}{n} \bm{1}_n H_{l-1} W_l \vert_{2} \nonumber \\
        & \le \frac{1}{n} \vert \bm{1}_n H_{l-1} \vert_{2} \Vert W_l \Vert_{2} \nonumber \\
        & \le \Vert W_l \Vert_{2} \max_{i} \vert H_{l-1}[i, :] \vert_2 \nonumber \\
        & \le B C_{\phi} \Vert W_1 \Vert_{2} \Vert W_l \Vert_{2} \frac{(d\mathcal{C})^{l-1} - 1}{d\mathcal{C} - 1} \qquad (\text{Use Eq. (\ref{eq:mpgnn_output_bound})}) \nonumber \\
        & \le B \lambda \xi \qquad (\text{Use definition of } \lambda \text{ and } \xi) \nonumber \\
        & \le B \beta^{l} \qquad (\text{Use definition of } \beta) \nonumber \\
        & < \frac{\gamma}{2}. 
    \end{align}
    Therefore, based on the definition in Eq. (\ref{eq:empirical_max_margin_loss}), we always have $L_{S,\gamma}(f_w) = 1$ when $\beta^{l} < \frac{\gamma}{2B}$.
    It is hence sufficient to consider $\beta^{l} \ge \frac{\gamma}{2B} > \frac{\gamma}{4eB}$ which also makes Eq. (\ref{eq:gnn_condition_perturbation_lemma}) hold.
    Alternatively, if $\beta^{l} > \frac{ \gamma \sqrt{m} }{2B}$, 
    the term inside the big-O notation in Eq. (\ref{eq:mpgnn_generalization_bound_proof_tmp_2}) would be,
    \begin{align}
    	\sqrt{\frac{ B^2 \beta^{2l} l^2 h \log(lh) (\beta^2 \vert w \vert_2^2) + \log \frac{m}{\delta} }{\gamma^2 m}} \ge \sqrt{ \frac{l^2 h \log(lh) (\vert w \vert_2^2 / \zeta^2) }{4}} \ge 1,  
    \end{align}
    The last inequality uses the fact that we typically choose $h \ge 2$ in practice, $l \ge 2$ and $\vert w \vert_2^2 \ge \min\left( \Vert W_1 \Vert_F^2, \Vert W_2 \Vert_F^2, \Vert W_l \Vert_F^2 \right) \ge \zeta^2$.
    Since we only need to consider $\beta$ in the range of Eq. (\ref{eq:mpgnn_generalization_bound_proof_tmp_3}), one sufficient condition to ensure $\vert \beta - \tilde{\beta} \vert \le \frac{1}{l+1}\beta$ holds would be $\vert \beta - \tilde{\beta} \vert \le \frac{1}{l+1} \left( \frac{\gamma}{2B} \right)^{\frac{1}{l}}$.
    Therefore, if we can find a covering of the interval in Eq. (\ref{eq:mpgnn_generalization_bound_proof_tmp_3}) with radius $\frac{1}{l+1} \left( \frac{\gamma}{2B} \right)^{\frac{1}{l}}$ and make sure bounds like Eq. (\ref{eq:mpgnn_generalization_bound_proof_tmp_2}) holds while $\tilde{\beta}$ takes all possible values from the covering, then we can get a bound which holds for all $\beta$.
    It is clear that we only need to consider a covering $C$ with size $\vert C \vert = \frac{(l+1)}{2}\left( m^{1/2l} - 1\right)$.
    Therefore, denoting the event of Eq. (\ref{eq:mpgnn_generalization_bound_proof_tmp_2}) with $\tilde{\beta}$ taking the $i$-th value of the covering as $E_i$, we have 
    \begin{align}
        \mathbb{P}\left( E_1 ~\&~ \cdots ~\&~ E_{\vert C \vert} \right) & = 1 - \mathbb{P}\left( \exists i, \bar{E}_i \right) \ge 1 - \sum_{i=1}^{\vert C \vert} \mathbb{P}\left( \bar{E}_i \right) \ge 1 - \vert C \vert \delta,
    \end{align}%
    where $\bar{E}_i$ denotes the complement of $E_i$.
    Hence, with probability $1 - \delta$ and for all $w$, we have,
    {
    \begin{align}
    	L_{\mathcal{D}, 0}(f_w) & \le L_{S, \gamma}(f_w) + \mathcal{O} \left( \sqrt{\frac{ B^2 \beta^{2l+2} l^2 h \log(lh) \vert w \vert_2^2 + \log \frac{m\vert C \vert}{\delta} }{\gamma^2 m}} \right) \nonumber \\
    	& = L_{S, \gamma}(f_w) + \mathcal{O} \left( \sqrt{\frac{ B^2 \beta^{2l+2} l^2 h \log(lh) \vert w \vert_2^2 + \log \frac{m(l+1)}{\delta} + \frac{1}{2l} \log m }{\gamma^2 m}} \right) \nonumber \\
    	& = L_{S, \gamma}(f_w) + \mathcal{O} \left( \sqrt{\frac{ B^2 \max\left(\zeta^{-1}, \left( \lambda \xi \right)^{\frac{1}{l}} \right)^{2(l+1)} l^2 h \log(lh) \vert w \vert_2^2 + \log \frac{m(l+1)}{\delta} }{\gamma^2 m}} \right)
    \end{align}}%
    which proves the theorem for the case of $d \mathcal{C} \neq 1$.
    
    \paragraph{Special Case $d \mathcal{C} = 1$}
    Now we consider $d \mathcal{C} = 1$ of which the proof follows the logic of the one for $d \mathcal{C} \neq 1$. 
    Note that this case happens rarely in practice. 
    We only include it for the completeness of the analysis.
    We again construct a statistic $\beta = \max\left(\frac{1}{\zeta}, \sqrt{\lambda C_{\phi}} \right)$.
    For now, let us fix any $\tilde{\beta}$ and consider $\beta$ which satisfies $\vert \beta - \tilde{\beta} \vert \le \frac{1}{3} \beta$. This also implies $\frac{1}{e} \beta^{3} \le \tilde{\beta}^{3} \le e \beta^{3}$.
    Based on Lemma \ref{lemma:mpgnn_perturbation}, we have,
    \begin{align}
        \vert f_{w+u}(X, A) - f_w(X, A) \vert_{2} & \le e \frac{t}{\zeta} (l + 1)^2 C_{\phi} B \Vert W_1 \Vert_{2} \Vert W_l \Vert_{2} \nonumber \\
        & = e t (l + 1)^2 B \frac{\lambda C_{\phi}}{\zeta} \le e B (l + 1)^2 \beta^{3} t \nonumber \\
        & \le e^2 B (l + 1)^2 \tilde{\beta}^{3} \sigma \sqrt{2h\log(4lh)} \le \frac{\gamma}{4},
    \end{align}
    where we can set $\sigma = \frac{\gamma}{42 B (l+1)^2 \tilde{\beta}^3 \sqrt{h\log(4lh)}}$ to get the last inequality.
    Note that Lemma \ref{lemma:mpgnn_perturbation} also requires $\max \left( \frac{\Vert U_1 \Vert_{2}}{\Vert W_1 \Vert_{2}}, \frac{\Vert U_2 \Vert_{2}}{\Vert W_2 \Vert_{2}}, \frac{\Vert U_l \Vert_{2}}{\Vert W_l \Vert_{2}} \right) \le \frac{1}{l}$.
    The requirement is satisfied if $\sigma \le \frac{\zeta}{l \sqrt{2h\log(4lh)}}$ which in turn can be satisfied if 
    \begin{align}\label{eq:gnn_condition_perturbation_lemma_tau_1}
        \frac{\gamma}{4 e B (l+1)^2 \beta^{3} \sqrt{2h\log(4lh)}} \le \frac{1}{\beta l \sqrt{2h\log(4lh)}},
    \end{align}
    since the chosen value of $\sigma$ satisfies $\sigma \le \frac{\gamma}{4eB (l+1)^2 \beta^3 \sqrt{2h\log(4lh)}}$ and $\frac{1}{\beta} \le \zeta$.
    As shown later, we only need to consider a certain range of values of $\beta$ which naturally satisfy the condition $\frac{\gamma l}{4 e B (l+1)^2 } \le \beta^2$, \ie, the equivalent form of Eq. (\ref{eq:gnn_condition_perturbation_lemma_tau_1}).
    This assures the applicability of Lemma \ref{lemma:mpgnn_perturbation}.
    Now we compute the KL divergence,
    \begin{align}\label{eq:mpgnn_generalization_bound_proof_tmp_4}
        \text{KL}\left( Q \Vert P \right) & = \frac{\vert w \vert_2^2}{2 \sigma^2} \nonumber \\
        & = \frac{42^2 B^2 \tilde{\beta}^{6} (l+1)^4 h \log(4lh) }{2\gamma^2} \left( \Vert W_1 \Vert_F^2 + \Vert W_2 \Vert_F^2 + \Vert W_l \Vert_F^2 \right) \nonumber \\
        & \le \mathcal{O}\left( \frac{ B^2 \beta^{6} (l+1)^4 h \log(lh) }{\gamma^2} \left( \Vert W_1 \Vert_F^2 + \Vert W_2 \Vert_F^2 + \Vert W_l \Vert_F^2 \right) \right) 
    \end{align}    
    
    In particular, we only need to consider values of $\beta$ in the following range,
    \begin{align}\label{eq:mpgnn_generalization_bound_proof_tmp_5}
        \sqrt{\frac{\gamma}{2Bl}} \le \beta \le \sqrt{\frac{\gamma \sqrt{m}}{2Bl}},
    \end{align}
    since otherwise the bound holds trivially as $L_{\mathcal{D}, 0}(f_w) \le 1$ by definition.
    To see this, if $\beta < \frac{\gamma}{2Bl}$, then for any $(X, A)$ and any $j \in \mathbb{N}^{+}_{K}$, we have,
    \begin{align}
        \left\vert f_{w}(X, A)[j] \right\vert & \le \left\vert f_{w}(X, A) \right\vert_{2} = \vert \frac{1}{n} \bm{1}_n H_{l-1} W_l \vert_{2} \nonumber \\
        & \le \frac{1}{n} \vert \bm{1}_n H_{l-1} \vert_{2} \Vert W_l \Vert_{2} \nonumber \\
        & \le \Vert W_l \Vert_{2} \max_{i} \vert H_{l-1}[i, :] \vert_2 \nonumber \\
        & \le B (l - 1) C_{\phi} \Vert W_1 \Vert_{2} \Vert W_l \Vert_{2} \qquad (\text{Use Eq. (\ref{eq:mpgnn_output_bound})}) \nonumber \\
        & \le B (l - 1) \lambda C_{\phi} \qquad (\text{Use definition of } \lambda) \nonumber \\
        & \le B l \beta^2 \qquad (\text{Use definition of } \beta) \nonumber \\
        & < \frac{\gamma}{2}. 
    \end{align}
    Therefore, based on the definition in Eq. (\ref{eq:empirical_max_margin_loss}), we always have $L_{S,\gamma}(f_w) = 1$ when $\beta < \frac{\gamma}{2Bl}$.
    It it hence sufficient to consider $\beta^2 \ge \frac{\gamma}{2Bl} \ge \frac{\gamma l}{4eB l^2} \ge \frac{\gamma l}{4eB (l+1)^2}$ which means the condition in Eq. (\ref{eq:gnn_condition_perturbation_lemma_tau_1}) is indeed satisfied.
    Alternatively, if $\beta > \sqrt{\frac{ \gamma \sqrt{m} }{2Bl}}$, 
    the term inside the big-O notation in Eq. (\ref{eq:mpgnn_generalization_bound_proof_tmp_4}) would be,
    \begin{align}
    	\sqrt{\frac{ B^2 \beta^4 (l+1)^4 h \log(lh) \beta^{2} \vert w \vert_2^2 + \log \frac{m}{\delta} }{\gamma^2 m}} \ge \sqrt{ \frac{(l+1)^4 h \log(lh) \frac{\vert w \vert_2^2}{\zeta^{2}} }{4 l^2}} \ge 1,  
    \end{align}
    where the first inequality hold since $\beta \ge \frac{1}{\zeta}$.
    The last inequality uses the fact that we typically choose $h \ge 2$ in practice, $l \ge 2$, and $\vert w \vert_2^2 \ge \min\left( \Vert W_1 \Vert_F^2, \Vert W_2 \Vert_F^2, \Vert W_l \Vert_F^2 \right) \ge \zeta^2$.    
    Since we only need to consider $\beta$ in the range of Eq. (\ref{eq:mpgnn_generalization_bound_proof_tmp_5}), one sufficient condition to ensure $\vert \beta - \tilde{\beta} \vert \le \frac{1}{3} \beta$ always holds would be $\vert \beta - \tilde{\beta} \vert \le \frac{1}{3} \sqrt{\frac{\gamma}{2Bl}}$.
    Therefore, if we can find a covering of the interval in Eq. (\ref{eq:mpgnn_generalization_bound_proof_tmp_5}) with radius $\frac{1}{3} \sqrt{\frac{\gamma}{2Bl}}$ and make sure bounds like Eq. (\ref{eq:mpgnn_generalization_bound_proof_tmp_2}) holds while $\tilde{\beta}$ takes all possible values from the covering, then we can get a bound which holds for all $\beta$.
    It is clear that we only need to consider a covering $C$ with size $\vert C \vert = \frac{3}{2}\left(m^{\frac{1}{4}} - 1\right)$.    
    
    Hence, with probability $1 - \delta$ and for all $w$, we have,
    {
    \begin{align}
    	L_{\mathcal{D}, 0}(f_w) & \le L_{S, \gamma}(f_w) + \mathcal{O} \left( \sqrt{\frac{ B^2 \beta^{6} (l+1)^4 h \log(lh) \vert w \vert_2^2 + \log \frac{m\vert C \vert}{\delta} }{\gamma^2 m}} \right) \nonumber \\
    	& = L_{S, \gamma}(f_w) + \mathcal{O} \left( \sqrt{\frac{ B^2 \beta^{6} (l+1)^4 h \log(lh) \vert w \vert_2^2 + \log \frac{m}{\delta} + \frac{1}{4} \log{m} }{\gamma^2 m}} \right) \nonumber \\
    	& = L_{S, \gamma}(f_w) + \mathcal{O} \left( \sqrt{\frac{ B^2 \max\left( \zeta^{-6}, \lambda^{3} C_{\phi}^3 \right) (l+1)^4 h \log(lh) \vert w \vert_2^2 + \log \frac{m}{\delta} }{\gamma^2 m}} \right)
    \end{align}}%
    which proves the theorem for the case of $d \mathcal{C} = 1$.    
    
\end{proof}

\begin{remark}
    Note that our proof applies to both homogeneous and non-homogeneous GNNs.
\end{remark}

\subsection{Bound Comparison}\label{sect:appendix_bound_comparison}

\begin{table}[t]
\begin{center}
\resizebox{\textwidth}{!}
{
\begin{tabular}{c|cccc}
    \hline
    \toprule
    Statistics & \begin{tabular}{@{}c@{}}Max Node Degree \\ $d-1$ \end{tabular} & \begin{tabular}{@{}c@{}}Max Hidden Dim \\ $h$ \end{tabular} & \begin{tabular}{@{}c@{}}Spectral Norm of \\ Learned Weights \end{tabular} \\
    \midrule
    \midrule
    \begin{tabular}{@{}c@{}}VC-Dimension \\ \citep{scarselli2018vapnik}\end{tabular}
      & - & $\mathcal{O}\left( h^4 \right)$ & - \\ 
    \begin{tabular}{@{}c@{}}Rademacher Complexity \\ \citep{garg2020generalization} Case A \end{tabular} & $\mathcal{O}\left( d^{l-1} \sqrt{ \log(d^{l-1}) } \right)$ & $\mathcal{O}\left( h \right)$ & $\mathcal{O}\left( \lambda \mathcal{C} \xi \sqrt{ \log \left( \lambda \mathcal{C} \xi \right)} \right)$ \\ 
    \begin{tabular}{@{}c@{}}Rademacher Complexity \\ \citep{garg2020generalization} Case B \end{tabular} & $\mathcal{O}\left( d^{l-1} \sqrt{\log(d^{l-2})} \right)$ & $\mathcal{O}\left( h \sqrt{\log\sqrt{h}} \right)$ & $\mathcal{O}\left( \lambda \mathcal{C} \xi \sqrt{ \log \left( \lambda \xi \right)} \right)$ \\ 
    \begin{tabular}{@{}c@{}}Rademacher Complexity \\ \citep{garg2020generalization} Case C \end{tabular} & $\mathcal{O}\left( d^{l-1} \sqrt{\log(d^{2l-3})} \right)$ & $\mathcal{O}\left( h \sqrt{\log\sqrt{h}} \right)$ & $\mathcal{O}\left( \lambda \mathcal{C} \xi \sqrt{ \log \left( \Vert W_{2} \Vert_2 \lambda \xi^{2} \right)} \right)$ \\
    Ours Case A & - & $\mathcal{O}\left( \sqrt{h \log h} \right)$ & $\mathcal{O}\left( \zeta^{-(l+1)} \sqrt{ \Vert W_1 \Vert_F^2 + \Vert W_2 \Vert_F^2 + \Vert W_l \Vert_F^2 } \right)$ \\
    Ours Case B & $\mathcal{O}\left( d^{\frac{(l+1)(l-2)}{l}} \right)$ & $\mathcal{O}\left( \sqrt{h \log h} \right)$ & $\mathcal{O}\left( \lambda^{1 + \frac{1}{l}} \xi^{1 + \frac{1}{l}} \sqrt{ \Vert W_1 \Vert_F^2 + \Vert W_2 \Vert_F^2 + \Vert W_l \Vert_F^2 } \right)$ \\
    \bottomrule
\end{tabular}
}
\end{center}
\caption{Detailed comparison of Generalization Bounds for GNNs. ``-" means inapplicable. 
We only consider the general case $d \mathcal{C} \Vert W_2 \Vert_2 \neq 1$ and simplify the Rademacher complexity based bounds (\wrt spectral norm of weights) based on the assumption that $C_{\phi} \ll d \mathcal{C} \xi$ which generally holds in practice.
Here $\mathcal{C} = C_{\phi} C_{\rho} C_{g} \Vert W_2 \Vert_2$, 
$\xi = C_{\phi} \frac{\left( d \mathcal{C} \right)^{l-1} - 1}{d \mathcal{C} - 1}$, $\zeta = \min \left( \Vert W_1 \Vert_{2}, \Vert W_2 \Vert_{2}, \Vert W_l \Vert_{2} \right)$, 
and $\lambda = \Vert W_1 \Vert_{2} \Vert W_l \Vert_{2}$.
Note that $d^{\frac{(l+1)(l-2)}{l}} \le d^{\frac{l^2-l}{l}} = d^{l-1}$.} 
\label{table:comparison_bound_detail}
\end{table}

In this section, we explain the details of the comparison with Rademacher complexity based generalization bounds of GNNs.

\subsubsection{Rademacher Complexity based Bound}
We first restate the Rademacher complexity bound from \citep{garg2020generalization} as below:

{\small
\begin{align}\label{eq:rademacher_complexity_bound}
	L_{D, 0}(f_w) & \le L_{S, \gamma}(f_w) \nonumber \\ 
	& + \mathcal{O} \left( \frac{1}{\gamma m} + h B_{l} Z \sqrt{ \frac{ \log \left( B_{l} \sqrt{m} \max\left(Z, M\sqrt{h} \max \left( B B_1, \bar{R} B_2 \right) \right) \right) }{\gamma^2 m} } + \sqrt{\frac{\log \frac{1}{\delta} }{m}} \right)
\end{align}
}%
where $M = C_{\phi} \frac{ (C_{\phi} C_{\rho} C_{g}dB_2)^{l-1} - 1 }{C_{\phi} C_{\rho} C_{g}dB_2 - 1}$, $Z = C_{\phi} \left( B B_1 + \bar{R} B_2 \right)$, $\bar{R} \le C_{\rho} C_{g}d \min \left(b\sqrt{h}, B B_1 M \right) $, $b$ is the uniform upper bound of $\phi$ (\ie, $\forall x \in \mathbb{R}^h$, $\phi(x) \le b$), and $B_1, B_2, B_l$ are the spectral norms of the weight matrices $W_1, W_2, W_l$.
Note that the numerator of $M$ has the exponent $l-1$ since we count the readout function in the number of layers/steps, \ie, there are $l-1$ message passing steps in total.

\subsubsection{Comparison in Our Context}

For typical message passing GNNs presented in the literature, node state update function $\phi$ could be a neural network like MLP or GRU, a ReLU unit, etc.
This makes the assumption of the uniform upper bound on $\phi$ impractical, \eg, $b = \infty$ when $\phi$ is ReLU.
Therefore, we dot not adopt this assumption in our analysis\footnote{If we introduce the uniform upper bound on $\phi$ in our analysis, we can also obtain a similar functional dependency in our bound like $\min(b\sqrt{h}, \cdot)$. But as aforementioned, it is somewhat impractical and leads to a more cumbersome bound.}.

\paragraph{Rademacher Complexity Based Bound}
Based on the above consideration, we have $\bar{R} \le C_{\rho} C_{g}d B B_1 M$.
We further convert some notations in the original bound to the ones in our context.
\begin{align}
    M & = C_{\phi} \frac{ (C_{\phi} C_{\rho} C_{g}dB_2)^{l-1} - 1 }{C_{\phi} C_{\rho} C_{g}dB_2 - 1} = \xi \\
    \bar{R} & \le C_{\rho} C_{g} d B B_1 M  = C_{\rho} C_{g} d B \Vert W_1 \Vert_2 \xi\\
    Z & = C_{\phi} \left( B B_1 + \bar{R} B_2 \right) = B \Vert W_1 \Vert_2 \left( C_{\phi} + d \mathcal{C} \xi \right),
\end{align}
where we use the same abbreviations as in Theorem \ref{thm:mpgnn_generalization_bound},
$\xi = C_{\phi} \frac{\left( d \mathcal{C} \right)^{l-1} - 1}{d \mathcal{C} - 1}$,
$\lambda = \Vert W_1 \Vert_{2} \Vert W_l \Vert_{2}$,
$\mathcal{C} = C_{\phi} C_{\rho} C_{g} \Vert W_2 \Vert_{2}$.

We need to consider three cases for the big-O term of the original bound in Eq. (\ref{eq:rademacher_complexity_bound}) depending on the outcomes of the two point-wise maximum functions. 

\paragraph{Case A}
If $\max \left(Z, M\sqrt{h} \max \left( B B_1, \bar{R} B_2 \right) \right) = Z$, then the generalization bound is,
{
\begin{align}
    \mathcal{O} & \left( h B_{l} Z \sqrt{ \frac{ \log \left( B_{l} \sqrt{m} Z \right) }{m} } \right) \nonumber \\
    & = \mathcal{O}\left( h \Vert W_l \Vert B \Vert W_1 \Vert_2 \left( C_{\phi} + d \mathcal{C} \xi \right) \sqrt{ \frac{ \log \left( \Vert W_{l} \Vert_2 \sqrt{m} B \Vert W_1 \Vert_2 \left( C_{\phi} + d \mathcal{C} \xi \right) \right) }{m} } \right) \nonumber \\
    & = \mathcal{O}\left( h B \lambda \left( C_{\phi} + d \mathcal{C} \xi \right) \sqrt{ \frac{ \log \left(  \sqrt{m} B \lambda \left( C_{\phi} + d \mathcal{C} \xi \right) \right) }{m} } \right).
\end{align}
}

\paragraph{Case B}
If $\max \left(Z, M\sqrt{h} \max \left( B B_1, \bar{R} B_2 \right) \right) = M\sqrt{h} \max \left( B B_1, \bar{R} B_2 \right)$ and $BB_1 = \max \left( B B_1, \bar{R} B_2 \right)$, then the generalization bound is,
{
\begin{align}
    \mathcal{O} & \left( h B_{l} Z \sqrt{ \frac{ \log \left( B_{l} \sqrt{m} M\sqrt{h} B B_1 \right) }{m} } \right) \nonumber \\
    & = \mathcal{O} \left( h \Vert W_l \Vert B \Vert W_1 \Vert_2 \left( C_{\phi} + d \mathcal{C} \xi \right) \sqrt{ \frac{ \log \left( \Vert W_l \Vert_2 \sqrt{m} \xi \sqrt{h} B \Vert W_1 \Vert_2 \right) }{m} } \right) \nonumber \\
    & = \mathcal{O} \left( h B \lambda \left( C_{\phi} + d \mathcal{C} \xi \right) \sqrt{ \frac{ \log \left( \sqrt{m} \lambda \xi \sqrt{h} B \right) }{m} } \right)
\end{align}
}

\paragraph{Case C}

If $\max \left(Z, M\sqrt{h} \max \left( B B_1, \bar{R} B_2 \right) \right) = M\sqrt{h} \max \left( B B_1, \bar{R} B_2 \right)$ and $\bar{R} B_2 = \max \left( B B_1, \bar{R} B_2 \right)$, then the generalization bound is,
{
\begin{align}
    \mathcal{O} & \left( h B_{l} Z \sqrt{ \frac{ \log \left( B_{l} \sqrt{m} M\sqrt{h} \bar{R} B_2 \right) }{m} } \right) \nonumber \\
    & = \mathcal{O} \left( h \Vert W_l \Vert B \Vert W_1 \Vert_2 \left( C_{\phi} + d \mathcal{C} \xi \right) \sqrt{ \frac{ \log \left( \Vert W_l \Vert_2 \sqrt{m} \xi \sqrt{h} C_{\rho} C_{g} d B \Vert W_1 \Vert_2 \xi \Vert W_2 \Vert_2 \right) }{m} } \right) \nonumber \\
    & = \mathcal{O} \left( h B \lambda \left( C_{\phi} + d \mathcal{C} \xi \right) \sqrt{ \frac{ \log \left( \lambda \sqrt{m} \sqrt{h} C_{\rho} C_{g} d B \xi^{2} \Vert W_2 \Vert_2 \right) }{m} } \right) 
\end{align}
}

We show the detailed dependencies of the Rademarcher complexity based bound under three cases in Table \ref{table:comparison_bound_detail}.
In practice, we found message passing GNNs typically do not behave like a contraction mapping. In other words, we have $d \mathcal{C} > 1$ and $\xi \gg 1$ hold for many datasets.
Therefore, the case C happens more often in practice, \ie, $\max \left(Z, M\sqrt{h} \max \left( B B_1, \bar{R} B_2 \right) \right) = M \sqrt{h} \bar{R} B_2$.

\paragraph{PAC Bayes Bound}
For our PAC-Bayes bound in Theorem \ref{thm:mpgnn_generalization_bound}, we also need to consider two cases which correspond to $\max\left({\zeta}^{-1}, (\lambda \xi)^{\frac{1}{l}} \right) = {\zeta}^{-1}$ (case A) and $\max\left({\zeta}^{-1}, (\lambda \xi)^{\frac{1}{l}} \right) = (\lambda \xi)^{\frac{1}{l}}$ (case B) respectively.
Here $\zeta = \min \left( \Vert W_1 \Vert_{2}, \Vert W_2 \Vert_{2}, \Vert W_l \Vert_{2} \right)$.
We show the detailed dependencies of our bound under three cases in Table \ref{table:comparison_bound_detail}.
Again, in practice, we found $d \mathcal{C} > 1$, $\xi \gg 1$ and $\zeta \le 1$.
Therefore, case B occurs more often.

\paragraph{VC-dim Bound}
\citep{scarselli2018vapnik} show that the upper bound of the VC-dimension of general GNNs with Sigmoid or Tanh activations is $\mathcal{O}(p^4 N^2)$ where $p$ is the total number of parameters and $N$ is the maximum number of nodes. 
Since $p = \mathcal{O}(h^2)$ in our case, the VC-dim bound is $\mathcal{O}(h^8 N^2)$.
Therefore, the corresponding generalization bound scales as $\mathcal{O}(\frac{h^4 N}{\sqrt{m}})$.
Note that $N$ is at least $d$ and could be much larger than $d$ for some datasets.

\subsection{Connections with Existing Bounds of MLPs/CNNs}\label{sect:appendix_connections}

\paragraph{ReLU Networks are Special GCNs}
Since regular feedforward neural networks could be viewed as a special case of GNNs by treating each sample as the node feature of a single-node graph, it is natural to investigate the connections between these two classes of models.
In particular, we consider the class of ReLU networks studied in \cite{neyshabur2017pac},
\begin{align}
    H_0 & = X \quad && (\text{Input Node Feature}) \nonumber \\
    H_{k} & = \sigma_{k} \left( H_{k-1} W_{k} \right) \quad && (k \text{-th Layer}) \nonumber \\
    H_{l} & = H_{l-1} W_{l} \quad && (\text{Readout Layer}), \label{eq:relu_networks}
\end{align}
where $\sigma_{k} = \text{ReLU}$.
It includes two commonly-seen types of deep neural networks, \ie, fully connected networks (or MLPs) and convolutional neural networks (CNNs), as special cases. 
Comparing Eq. (\ref{eq:relu_networks}) against Eq. (\ref{eq:gcn}), it is clear that these ReLU networks can be further viewed as special cases of GCNs which operate on single-node graphs, \ie, $\tilde{L} = I$.

\paragraph{Connections of Generalization Bounds}

Let us restate the PAC-Bayes bound of ReLU networks in \cite{neyshabur2017pac} as below,
\begin{align}
	L_{D, 0}(f_w) \le L_{S, \gamma}(f_w) + \mathcal{O} \left( \sqrt{\frac{ B^2 l^2 h \log(lh) \prod\limits_{i=1}^{l} \Vert W_{i} \Vert_{2}^2 \sum\limits_{i=1}^{l} \frac{\Vert W_{i} \Vert_F^2}{\Vert W_{i} \Vert_{2}^2} + \log \frac{ml}{\delta} }{\gamma^2 m}} \right). \label{eq:pac_bayes_relu_networks}
\end{align}
Comparing it with the bound in Theorem \ref{thm:gcn_generalization_bound}, we can find that our bound only adds a factor $d^{l-1}$ to the first term inside the square root of the big-O notation which is brought by the underlying graph structure of the data.
If we consider GCNs operating on single-node graphs, \ie, the case where GCNs degenerate to ReLU networks, two bounds coincide since $d = 1$.
Therefore, our Theorem \ref{thm:gcn_generalization_bound} directly generalizes the result in \cite{neyshabur2017pac} to GCNs which is a strictly larger class of models than ReLU networks.

\subsection{Experimental Details}\label{sect:appendix_exp}

\paragraph{Datasets}

We create 6 synthetic datasets by generating random graphs from different random graph models.
In particular, the first 4 synthetic datasets correspond to the Erdős–Rényi models with different edge probabilities: 1) Erdős–Rényi-1 (ER-1), edge probability = 0.1; 2) Erdős–Rényi-2 (ER-2), edge probability = 0.3; 3) Erdős–Rényi-3 (ER-3), edge probability = 0.5; 4) Erdős–Rényi-4 (ER-4), edge probability = 0.7.
The remaining 2 synthetic datasets correspond to the stochastic block model with the following settings: 1) Stochastic-Block-Model-1 (SBM-1), two blocks, sizes = [40, 60], edge probability = [[0.25, 0.13], [0.13, 0.37]]; 2) Stochastic-Block-Model-2 (SBM-2), three blocks, sizes = [25, 25, 50], edge probability = [[0.25, 0.05, 0.02], [0.05, 0.35, 0.07], [0.02, 0.07, 0.40]].
Each synthetic dataset has 200 graphs where the number of nodes of individual graph is 100, the number of classes is 2, and the random train-test split ratio is $90\% / 10\%$.
For each random graph of individual synthetic dataset, we generate the 16-dimension random Gaussian node feature (normalized to have unit $\ell_2$ norm) and a binary class label following a uniform distribution. 
We summarize the statistics of the real-world and synthetic datasets in Table \ref{table:real_dataset} and Table \ref{table:syn_dataset} respectively.

\begin{table}[t]
\begin{center}
{
\begin{tabular}{c|cccc}
    \hline
    \toprule
    Statistics & COLLAB & IMDB-BINARY & IMDB-MULTI & PROTEINS \\
    \midrule
    \midrule
    max \# nodes & 492 & 136 & 89 & 620 \\
    max \# edges & 80727 & 2634 & 3023 & 2718 \\
    \# classes & 3 & 2 & 3 & 2 \\
    \# graphs & 5000 & 1000 & 1500 & 1113 \\
    train/test & 4500/500 & 900/100 & 1350/150 & 1002/111 \\
    feature dimension & 367 & 65 & 59 & 3 \\
    max node degree & 491 & 135 & 88 & 25 \\
    \bottomrule
\end{tabular}
}
\end{center}
\caption{Statistics of real-world datasets.} 
\label{table:real_dataset}
\end{table}

\begin{table}[t]
\begin{center}
{
\begin{tabular}{c|cccccc}
    \hline
    \toprule
    Statistics & ER-1 & ER-2 & ER-3 & ER-4 & SBM-1 & SBM-2 \\
    \midrule
    \midrule
    max \# nodes & 100 & 100 & 100 & 100 & 100 & 100 \\
    max \# edges & 1228 & 3266 & 5272 & 7172 & 2562 & 1870 \\
    \# classes & 2 & 2 & 2 & 2 & 2 & 2 \\
    \# graphs & 200 & 200 & 200 & 200 & 200 & 200 \\
    train/test & 180/20 & 180/20 & 180/20  & 180/20 & 180/20 & 180/20 \\
    feature dimension & 16 & 16 & 16 & 16 & 16 & 16 \\
    max node degree & 25 & 48 & 69 & 87 & 25 & 36 \\
    \bottomrule
\end{tabular}
}
\end{center}
\caption{Statistics of synthetic datasets.} 
\label{table:syn_dataset}
\end{table}

\paragraph{Experimental Setup}

For all MPGNNs used in the experiments, we specify $\phi = \text{ReLU}$, $\rho = \text{Tanh}$, and $g = \text{Tanh}$ which imply $C_{\phi} = C_{\rho} = C_g = 1$.
For experiments on real-world datasets, we set $h = 128$, the number of training epochs to $50$, and try $2$ values of network depth, \ie, $l = 2$ and $l = 4$.
The batch size is set to $20$ (due to the GPU memory constraint) on COLLAB and $128$ for others.
For experiments on synthetic datasets, we set $h = 128$ and try $4$ values of network depth, \ie, $l = 2$, $l = 4$, $l = 6$ and $l = 8$.
Since these generated datasets essentially require GNNs to fit to random labels which is arguably hard, we extend the number of training epochs to $200$.
For all above experiments, we use Adam as the optimizer with learning rate set to $1.0e^{-2}$. 
The batch size is $128$ for all synthetic datasets.

\paragraph{Bound Computations}
For all datasets, we compute the bound values for the learned model saved in the end of the training.
We also consider the constants of both bounds in the computation.
In particular, for our bound, we compute the following quantity
\begin{align}
    \sqrt{\frac{ 42^2 B^2 \left( \max\left(\zeta^{-(l+1)}, (\lambda \xi)^{\frac{l+1}{l}} \right) \right)^{2} l^2 h \log(4lh) \vert w \vert_2^2}{\gamma^2 m}}.
\end{align}

For the Rademacher complexity based bound, we compute the following quantity
\begin{align}
	2 \times 24 h B_{l} Z \sqrt{ \frac{ 3 \log \left( 24 B_{l} \sqrt{m} \max\left(Z, M\sqrt{h} \max \left( B B_1, \bar{R} B_2 \right) \right) \right) }{\gamma^2 m} }, 
\end{align}
where the variables are the same as Eq. (\ref{eq:rademacher_complexity_bound}).

\paragraph{Experimental Results}

\begin{table}[t]
\begin{center}
{
\begin{tabular}{c|cccc}
    \hline
    \toprule
    $l=2$ & PROTEINS & IMDB-MULTI & IMDB-BINARY & COLLAB \\
    \midrule
    \midrule
    Rademacher & $11.80 \pm 0.18$ & $16.66 \pm 0.04$ & $17.37 \pm 0.02$ & $21.26 \pm 0.07$ \\
    PAC-Bayes & $\bm{8.45} \pm 0.28$ & $\bm{15.26} \pm 0.07$ & $\bm{15.44} \pm 0.03$ & $\bm{19.37} \pm 0.17$ \\
    \midrule
    \midrule    
    $l=4$ &  &  &  &  \\
    \midrule
    \midrule
    Rademacher & $24.04 \pm 0.23$ & $29.94 \pm 0.10$ & $31.38 \pm 0.09$ & $41.03 \pm 0.33$ \\
    PAC-Bayes & $\bm{22.10} \pm 0.23$ & $\bm{28.35} \pm 0.11$ & $\bm{29.53} \pm 0.08$ & $\bm{40.31} \pm 0.36$ \\    
    \bottomrule
\end{tabular}
}
\end{center}
\caption{Bound (log value) comparisons on real-world datasets.} 
\label{table:exp_real_dataset}
\end{table}

\begin{table}[t]
\begin{center}
\resizebox{\textwidth}{!}
{
\begin{tabular}{c|cccccc}
    \hline
    \toprule
    $l=2$ & ER-1 & ER-2 & ER-3 & ER-4 & SBM-1 & SBM-2 \\
    \midrule
    \midrule
    Rademacher & $17.37 \pm 0.16$ & $17.98 \pm 0.13$ & $18.15 \pm 0.15$	& $18.35 \pm 0.10$ & $17.88 \pm 0.11$ & $17.71 \pm 0.09$ \\
    PAC-Bayes & $\bm{15.38} \pm 0.12$ & $\bm{15.13} \pm 0.13$ & $\bm{14.86} \pm 0.25$ & $\bm{14.69} \pm 0.24$ & $\bm{15.23} \pm 0.12$ & $\bm{15.35} \pm 0.10$ \\
    \midrule
    \midrule
    $l=4$ &  &  &  &  &  &  \\
    \midrule
    \midrule
    Rademacher & $27.92 \pm 0.02$ & $29.57 \pm 0.12$ & $30.64 \pm 0.18$ & $31.34 \pm 0.20$ & $29.35 \pm 0.14$ & $28.87 \pm 0.07$ \\
    PAC-Bayes & $\bm{27.00} \pm 0.04$ & $\bm{28.32} \pm 0.07$ & $\bm{29.18} \pm 0.12$ & $\bm{29.70} \pm 0.14$ & $\bm{28.14} \pm 0.05$ & $\bm{27.74} \pm 0.04$ \\
    \midrule
    \midrule
    $l=6$ &  &  &  &  &  &  \\    
    \midrule
    \midrule
    Rademacher & $37.10 \pm 0.29$ & $40.22 \pm 0.19$ & $42.00 \pm 0.26$ & $43.08 \pm 0.39$ & $40.04 \pm 0.25$ & $39.02 \pm 0.19$ \\
    PAC-Bayes & $\bm{36.85} \pm 0.25$ & $\bm{39.65} \pm 0.14$ & $\bm{41.30} \pm 0.22$ & $\bm{42.24} \pm 0.34$ & $\bm{39.50} \pm 0.17$ & $\bm{38.63} \pm 0.17$ \\
    \midrule
    \midrule    
    $l=8$ &  &  &  &  &  &  \\
    \midrule
    \midrule
    Rademacher & $\bm{46.72} \pm 0.51$ & $51.16 \pm 0.21$ & $53.44 \pm 0.39$ & $55.06 \pm 0.38$ & $50.60 \pm 0.17$ & $49.29 \pm 0.34$ \\
    PAC-Bayes & ${46.79} \pm 0.48$ & $\bm{51.02} \pm 0.21$ & $\bm{53.10} \pm 0.36$ & $\bm{54.67} \pm 0.38$ & $\bm{50.44} \pm 0.16$ & $\bm{49.22} \pm 0.36$ \\    
    \bottomrule
\end{tabular}
}
\end{center}
\caption{Bound (log value) comparisons on synthetic datasets.} 
\label{table:exp_syn_dataset}
\end{table}

In addition to the figures shown in the main paper, we also provide the numerical values of the bound evaluations in Table \ref{table:exp_real_dataset} (real-world datasets) and Table \ref{table:exp_syn_dataset} (synthetic datasets).
As you can see, our bound is tighter than the Rademacher complexity based one under all settings except for one synthetic setting which falls in the scenario ``small $d$ (max-node-degree) and large $l$ (number-of-steps)''.
This makes sense since we have a square term on the number of steps $l$ and it will play a role when the term involved with $d$ is comparable (\ie, when $d$ is small).
Again, all quantities are in the log domain.

\end{document}